\renewcommand*{\@fnsymbol}[1]{%
   \ifcase#1\or 
   \TextOrMath \textdagger \dagger\or
   \TextOrMath \textdaggerdbl \ddagger \or
   \TextOrMath \textsection  \mathsection\or
   \TextOrMath \textparagraph \mathparagraph\or
   \TextOrMath \textbardbl \|\or
   \TextOrMath {\textasteriskcentered\textasteriskcentered}{**}\or
   \TextOrMath {\textdagger\textdagger}{\dagger\dagger}\or
   \TextOrMath {\textdaggerdbl\textdaggerdbl}{\ddagger\ddagger}\else
   \@ctrerr \fi
}%
\newcommand{\step}{h}
\newcommand{\TV}{\mathsf{TV}}
\newcommand{\KL}{\mathsf{KL}}
\newcommand{\bbR}{\mathbb{R}}
\newcommand{\rme}{\mathrm{e}}
\newcommand{\Rm}{\mathrm{Rm}}
\DeclareMathOperator{\tr}{tr}
\newcommand{\Ric}{\mathrm{Ric}}    
\newcommand{\geo}{\text{geo}}
\DeclareMathOperator{\poly}{poly}
\let\hat\widehat
\let\epsilon\varepsilon
\newcommand{\rmd}{\mathrm{d}}
\newcommand{\bbP}{\mathbb{P}}
\newcommand{\bbE}{\mathbb{E}}
\newcommand{\sfK}{\mathsf{K}}
\newcommand{\cM}{\mathcal{M}}
\newcommand{\aux}{\mathsf{aux}}
\newcommand{\discK}{\widehat{\sfK}}
\newcommand{\auxK}[1]{\sfK^{\aux}_{#1}}
\newcommand{\auxp}[1]{p^{\aux}_{#1}}
\newcommand{\locK}{\widetilde{\sfK}}
\newcommand{\lauxK}[1]{\widetilde{\sfK}^{\aux}_{#1}}
\newcommand{\chf}{\mathds{1}}
\newtheorem{theorem}{Theorem}
\newtheorem{lemma}{Lemma}
\newtheorem{assumption}{Assumption}
\newcommand{\todo}[1]{\ifthenelse{\boolean{showcomment}}{{\textcolor{red}{\bf Todo:  #1}}}{}}
\newcommand{\guannan}[1]{\ifthenelse{\boolean{showcomment}}{{\textcolor{red}{\bf [Guannan:  #1]}}}{}}
\title{Polynomial Convergence of Riemannian Diffusion Models}
\author{
Xingyu Xu\thanks{Department of Electrical and Computer Engineering, Carnegie Mellon University, Pittsburgh, PA 15213; Emails:
\texttt{\{xingyuxu, ziyizhan, ynakahir, gqu\}@andrew.cmu.edu}.} \\
CMU
\and
Ziyi Zhang\footnotemark[1] \\
CMU
\and 
Yorie Nakahira\footnotemark[1]\\
CMU
\and
Guannan Qu\footnotemark[1]\\
CMU
\and
Yuejie Chi\thanks{Department of Statistics and Data Science, Yale University, New Haven, CT 06520; Email: yuejie.chi@yale.edu. The work of X. Xu and Y. Chi is supported in part by Air Force Office of Scientific Research under FA9550-25-1-0060, and by National Science Foundation under ECCS-2126634/2537078. Any opinions, findings, and conclusions or recommendations expressed in this material are those of the author(s) and do not necessarily reflect the views of the United States Air Force. }  \\
Yale
 }
\begin{document}

\maketitle

\begin{abstract}
Diffusion models have demonstrated remarkable empirical success in the recent years and are considered one of the state-of-the-art generative models in modern AI. These models consist of a forward process, which gradually diffuses the data distribution to a noise distribution spanning the whole space, and a backward process, which inverts this transformation to recover the data distribution from noise. Most of the existing literature assumes that the underlying space is Euclidean. However, in many practical applications, the data are constrained to lie on a submanifold of Euclidean space. Addressing this setting, \citet{bortoli2022riemannian} introduced Riemannian diffusion models and proved that using an exponentially small step size yields a small sampling error in the Wasserstein distance, provided the data distribution is smooth and strictly positive, and the score estimate is $L_\infty$-accurate.
In this paper, we greatly strengthen this theory by establishing that, under $L_2$-accurate score estimate, a {\em polynomially small stepsize} suffices to guarantee small sampling error in the total variation distance, without requiring smoothness or positivity of the data distribution. Our analysis only requires mild and standard curvature assumptions on the underlying manifold. The main ingredients in our analysis are Li-Yau estimate for the log-gradient of heat kernel, and Minakshisundaram-Pleijel parametrix expansion of the perturbed heat equation. Our approach opens the door to a sharper analysis of diffusion models on non-Euclidean spaces.
\end{abstract}

\medskip

\noindent\textbf{Keywords:} diffusion models, Riemannian manifold, polynomial convergence


\section{Introduction}

Initially introduced by \citet{Dickstein15} and later advanced by \citet{Song2019,ho2020,dhariwal2021}, diffusion model has become one of the bedrocks in generative modeling across a variety of application domains such as vision, video, speech, and many others. On a high level, diffusion models generate samples from a target distribution by operating on two stochastic processes:
\begin{enumerate}
    \item A \emph{forward} process
    \[
    X_0 \; \xrightarrow{\text{add noise}} \; X_1 \; \xrightarrow{\text{add noise}} \; \cdots \; \xrightarrow{\text{add noise}} \; X_T,
    \]
    where $X_0$ is sampled from the target distribution $p_0$ in $\mathbb{R}^d$, and $X_T$ resembles pure noise.   
    \item A \emph{reverse} process
    \[
    Y_T \; \xrightarrow{\text{denoise}} \; Y_{T-1} \; \xrightarrow{\text{denoise}} \; \cdots \; \xrightarrow{\text{denoise}} \; Y_0,
    \]
    where $Y_T$ starts from pure noise, and gradually removes the noise, so that at $Y_0$, we recover a new sample from a distribution close to $p_0$. 
\end{enumerate}
The reverse process is built to recover the target data distribution by step-wise reversing the forward process, with a goal of matching the probabilities $Y_t \approx X_t$ in distribution for $t \in \{T,\dots, 1\}$. Leveraging the theory of backward stochastic differential equations (SDE) \citep{ANDERSON1982313, Haussmann1986TIMERO}, this can be formally achieved as soon as the as long as the \emph{score function}, i.e., the log-gradient of the the marginal density of the forward process, becomes available, which can be estimated via  score matching \citep{hyvarinen2005estimation}.   

Tremendous recent progresses have been made in understanding the convergence of diffusion models in the Euclidean space, e.g. \citet{lee2023general,chen2023score,benton2024nearly,li2024nonasymptotic}, which establish near-tight polynomial iteration complexities of discrete-time samplers under $L_2$-accurate score estimates and mild assumptions of the data distribution. These convergence results provide strong justifications to the empirical success of diffusion models for generating from complex multi-modal distributions. 


Many scientific domains, however, are intrinsically \emph{non-Euclidean}; examples include orientations on $\mathrm{SO}(3)$, directions on spheres, toroidal angles, articulated poses, and symmetric positive definite (SPD) matrices are naturally modeled on Riemannian manifolds~\citep{Piggott16, Muniz22}. Recently, there has been an increasing interest in effectively sampling from distributions supported on manifolds and providing theoretical guarantees~\citep{Girolami11,Gatmiry2022ConvergenceOT,li2023riemannianlangevinalgorithmsolving, guan2025riemannian}. Although sampling on manifolds has been studied extensively \citep{cheng2023theoryalgorithmsdiffusionprocesses}, extending diffusion models to manifolds requires careful treatments to incorporate the manifold constraints into both the time-inhomogeneous forward and reverse processes, with selected attempts in \citet{bortoli2022riemannian,huang2022riemannian,lou2023scaling,liu2023mirror,fishman2023diffusion}.  

One notable development is \citet{bortoli2022riemannian}, who introduced \emph{Riemannian Score-Based Generative Models} (RSGMs) with convergence guarantees in the Wasserstein distance. Specifically, they established a time-reversal diffusion process for geometric Brownian motion on manifolds, which can be similarly learned via score matching \citep{hyvarinen2005estimation}. While groundbreaking, their convergence bound suffers from a few caveats: 1) it requires an exponentially small stepsize, leading to a possibly exponential iteration complexity in some of the manifold parameters; 2) it requires  $L_\infty$-accurate score estimates, which are impractical in deep learning; and 3) the data distribution is required to be smooth and strictly positive on compact manifolds. This naturally raises the following questions: 
\begin{center}
\emph{Can we achieve polynomial iteration complexity for manifold diffusion models using $L_2$-accurate score estimates under mild data assumptions?}
\end{center} 


\subsection{Our contribution}

We provide a discrete-time analysis of the RSGM sampler in \citet{bortoli2022riemannian}, assuming $L_2$-accurate score estimates. Under mild geometric conditions of the manifold without assuming smooth or strictly positive data densities, we establish that \emph{polynomial} stepsizes suffice for accurate sampling on manifolds in \emph{total variation} (TV). More precisely, for some $\varepsilon>0$, the TV error between the distribution of the output $Y_0$ and $p_\delta$, that is, an approximation to $p_0$ with early-stopping time $\delta>0$, obeys
\[
\TV\big(p_\delta,\mathsf{Law}(Y_0)\big)
\;\lesssim\;
 \varepsilon
\;+\;
\varepsilon_{\mathsf{score}}
\;+\;
\sqrt{\step T} \, \mathrm{poly}(d, \delta^{-1}) ,
\]
as long as the horizon satisfies $T \gtrsim \lambda_1^{-1}(d\log d + \log(d/\varepsilon))$. Here, $d$ is the dimension of $\cM$, $\lambda_1$ is the spectral gap of $-\Delta_M$, $\step > 0$ is the stepsize, and $\varepsilon_{\mathsf{score}}$ is the $L_2$ score estimation error.\footnote{For simplicity, we omitted polynomial dependence on manifold geometry parameters and poly-log dependence on $\epsilon$. The complete version can be found in Theorem~\ref{theorem:main}.} This bound suggests that a polynomial stepsize is sufficient: take $T \asymp \lambda^{-1}(d\log d + \log(d/\epsilon))$ and $h = \frac{\epsilon^2}{\poly(d, \delta^{-1}) T}$, then the TV error is bounded by $ \epsilon + \epsilon_{\mathsf{score}}$ after an iteration complexity of
$$N = T/h \asymp \poly(d, \delta^{-1})/(\lambda_1 \epsilon)^2.$$
This conveys a much more benign message about the efficiency of Riemannian diffusion models, compared with the iteration complexity in \citet{bortoli2022riemannian} that scales exponentially with the dimension $d$, under relaxed assumptions on both the data distribution and the score estimates.  

\begin{table*}[t]
\centering
\setlength{\tabcolsep}{4.5pt}
\begin{minipage}{\linewidth}
\begin{tabular}{lcccc}
\toprule
\textbf{Work} & \textbf{Structure} & \textbf{Metric} & \textbf{Iteration complexity} & \textbf{Data distribution}  \\
\midrule
\citet{benton2024nearly} & Euclidean & TV & $\widetilde O\big(d/\epsilon^2\big)$ & bounded moment
\\
\citet{li2024nonasymptotic} & Euclidean & TV & $\widetilde{O}(\poly(d)/\epsilon)$ & bounded support
\\
\citet{li2025odt} & Euclidean & TV & $\widetilde{O}(d/\epsilon)$ & bounded moment
\\
\citet{bortoli2022riemannian}& Manifold & $W_p$ & $\widetilde{O}\big(\!\exp(O(d))/\epsilon^{-1/\lambda_1} \big)$\footnote{The original $W_p$ error in \citet{bortoli2022riemannian} is stated in the form of $\widetilde{O}\big(C\rme^{-\lambda_1 T} + \rme^T \sqrt{\step}\big)$, where $C$ is defined in Proposition C.6 therein, which in turn is specified by \citet[Proposition 2.6]{urakawa2006convergence} as the supremum of $t^{d/2}H(t, x, y)$, where $H$ is the heat kernel on the manifold. In general, the best estimate for this is due to Li-Yau \citep{Li-Yau86}, which gives $C \le \rme^{O(d)}$. To achieve $\epsilon$-error, we must set $T = \lambda_1^{-1} (\Omega(d) + \log \epsilon^{-1})$ and $\step =\rme^{-2T}\epsilon^2$, then the iteration complexity $T/\step$ has the claimed form.} & smooth, strictly positive
\\
This work & Manifold & TV & $\tilde{O}\big(\frac{\poly(d)}{\lambda_1^2 \varepsilon^2}\big)$ & None (early stopping)
\\
\bottomrule
\end{tabular}
\end{minipage}
\caption{Comparison of the current theoretical guarantees on diffusion probabilistic models on Euclidean spaces and manifolds. Here, $\lambda_1 > 0$ is the spectral gap of the Laplace-Beltrami operator. }
\label{tab:theory-comparison}
\end{table*}

\paragraph{Techniques.} Our proof highlights three ingredients: (i) high-probability Li-Yau gradient bounds for the manifold heat kernel together with early stopping to control $\|\nabla\log p_t\|$ without assuming positivity/smoothness of $p_0$; (ii) a localization scheme that “freezes” drifts across nearby tangent spaces but preserves continuous Brownian motion (BM), to separate the effects of discretizing scores and BM; and (iii) a quantitative estimates for Minakshisundaram--Pleijel parametrix that controls one-step deviations between the manifold heat flow and its discretized proxy. These components allow us to handle the discretization errors sharply to avoid exponential dependence.

\subsection{Related works}

\paragraph{Non-asymptotic convergence for Euclidean diffusion models.} Early convergence analyses of diffusion models require $L_\infty$-accurate score estimates \citep{de2021diffusion}.
For stochastic samplers such as DDPM \citep{ho2020}, early bounds under Lipschitz/smoothness assumptions of the data distribution admit an $O(T^{-\frac12})$ iteration complexity in the total variation distance assuming $L_2$-accurate score estimates \citep{chen2023score}, with subsequent analyses relaxing the Lipschitz assumption yet retaining the same  complexity~\citep{lee2023general,benton2024nearly,li2024nonasymptotic}. More recently, \citet{li2025odt} has improved the iteration complexity to $\widetilde{O}(T^{-1})$. For deterministic samplers, \citet{chen2023score} established polynomial convergence with exact scores, and \citet{li2024nonasymptotic} established a convergence rate of $O(T^{-1})$  under $L_2$-accurate scores. See \citet{beyler2025convergence,liang2024nonasymptotic,li2024improved} for additional analyses that established convergence in the Wasserstein distance and improved discrete-time rates. Several works \citep{li2024adapting,liang2025low,huang2024denoising,potaptchik2024linear} also developed non-asymptotic convergence rates of diffusion models under the manifold hypothesis, suggesting diffusion models are adaptive to low-dimensional structures. This line of work should not be confused with ours, where the diffusion process is designed specifically to be constrained on the manifold.

\paragraph{Sampling on Riemannian manifold.} \citet{cheng2022efficient,cheng2023theoryalgorithmsdiffusionprocesses} analyzed the geometric Euler–Maruyama (EM) discretization for time-homogeneous SDEs, and proved a polynomial complexity guarantee under dissipative-distant geometric assumptions on the manifold. See also \citet{bharath2025} for follow-ups. 
\citet{guan2025riemannian} proposed a Riemannian proximal sampler with convergence guarantees under the log-Sobolev inequality. Various sampling algorithms are also studied for a related problem known as sampling from constrained spaces \citep{srinivasan2024fast,ahn2021efficient}. Nonetheless, convergence analyses of Riemannian diffusion models under general data distributions remain highly limited, with \citet{bortoli2022riemannian} being the only prior work with non-asymptotic convergence rates. 


\section{Backgrounds}
\subsection{Diffusion models on Euclidean space}
We briefly recall diffusion processes on $\mathbb R^d$. Let $(W_t)_{t\geq 0}$ be a standard Brownian motion in $\mathbb R^d$. 

\paragraph{Forward SDE and Fokker–Planck.} Given a drift term $b_t: \mathbb R^d \rightarrow \mathbb R^d$, the forward process $(X_t)_{t \in [0,T]}$ solves the It\^{o} SDE
\begin{equation*}
\mathrm d X_t \;=\; b_t(X_t)\,\mathrm dt \;+\; \mathrm d W_t, \qquad X_0\sim p_{0}.
\end{equation*}
Let $p_t$ denote the law of $X_t$, then $p_t$ satisfies the Fokker–Planck equation $\partial_t p_t = -\nabla (b_t p_t) + \frac12 \Delta p_t$. In the driftless setting where $b_t\equiv 0$, the marginal $p_t = p_0 * \varphi_t$ is a Gaussian smoothing of $p_0$ with kernel $\varphi_t(z) = (2\pi t)^{-d/2} \exp\big(-\| z \|^2/(2t)\big)$. 

\paragraph{Score and reverse process.}
The {\em score} of the forward process at time  $t$ is defined as $s_t(x) \coloneqq \nabla \log p_t (x)$. The time-reversal identity \citep{ANDERSON1982313} yields a reverse-time process $(Y_t)_{t \in [0,T]}$ whose marginals match those of $(X_t)_{t \in [0,T]}$ which solves the reverse-time SDE (note that $t$ flows from $T$ to $0$):
\[
    dY_\tau \; = \; \left[-b_{\tau}(Y_t) + s_{\tau}(Y_t)\right] \rmd t + \rmd W_t, \qquad \tau = T-t,\quad Y_T \sim p_{X_T}.
\]

\paragraph{Discretization with approximate score.} Let $0 = t_0 < t_1 < \dots <t_N = T$, where $t_i - t_{i-1} =: \step$ be a time grid. In practice, exact score function is often unavailable. Instead, we use an approximation $\hat s_t$ trained via score matching \citep{hyvarinen2005estimation}. The Euler–Maruyama discretization of the reverse-time SDE above is given by
\[
y_{k-1} = y_k + \step \left[-b_{t_k}(y_k) + \hat s_{t_k}(y_k)\right]+\sqrt{\step} g_k, \quad g_k \sim \mathcal N (0, I_d).
\]
In our driftless setting, this reduces to 
\[
y_{k-1} = y_k + \step \hat s_{t_k}(y_k) + \sqrt{\step} g_k, \quad g_k \sim \mathcal N (0, I_d).
\]

\subsection{Geometry and notation}
We assume some familiarity with Riemannian geometry, and make use of standard notation. Please refer to \citet{jost2017riemannian,petersen2006riemannian} for a more in-depth treatment. In particular, we use $\alpha, \beta, \xi, \zeta$, etc., to index {\em coordinate representation} of tensors, and assume Einstein's summation convention. Let $(\mathcal M,g)$ be a connected, compact $d$-dimensional Riemannian manifold, with geodesic distance $\rho(\cdot,\cdot)$ and volume measure $\mu$. We assume $\mu(\cM)=1$. The Levi–Civita connection is denoted by $\nabla$, and the Laplace–Beltrami operator by
\[
\Delta_{\mathcal M} f \coloneqq \nabla_\alpha \nabla^\alpha f.
\]

We use $T_x \mathcal M$ for the tangent space at $x$ and use $\exp_x:T_x\mathcal M\to \mathcal M$ for the exponential map and $\log_x$ for its local inverse on the normal neighborhood of $x$.  Furthermore, we define \emph{geodesic diameter} of $(\cM,g)$ is
\[
\mathrm{Diam}(\cM)\;:=\;\sup_{x,y\in \cM} \rho(x,y),
\]
where $\rho(\cdot,\cdot)$ is the geodesic distance induced by $g$. We further denote $\mathrm{Rm}$ as the Riemannian curvature tensor. Geodesic ball centered at $x$ with radius $r$ is denoted $B_x(r)$.

We use the \emph{total variation} (TV) and the \emph{Kullback-Leibler} (KL) distance to measure the discrepancy between two distributions $p, q$:
\begin{align*}
\TV(p,q) = \int_{\mathcal M} \big| \rmd p - \rmd q \big|, \quad \KL(p ~\|~ q) = \int_{\cM}  \left(\log\frac{\rmd p}{\rmd q}\right) \rmd p.
\end{align*}

\subsection{Heat flow, Brownian motion, and diffusion on $\mathcal M$}
We also recall the setup for SDE and diffusion processes on Riemannian manifolds introduced in \citet{bortoli2022riemannian, cheng2023theoryalgorithmsdiffusionprocesses}. Let $(W_t)_{t\ge 0}$ be a standard Brownian motion in $\mathbb R^d$ and $U_x:\mathbb R^d\to T_x\mathcal M$ any orthonormal frame at $x$. The \emph{Geometric Brownian motion} solves
\[
\mathrm d X_t  = U_{X_t}\circ \mathrm d W_t,
\]
where $\circ$ denotes Stratonovich integral, and its transition density $p_t(x,y)$ with respect to $\mu$ solves the heat equation
\[
\partial_t p_t(\cdot,y) = \frac12 \Delta_{\mathcal M} p_t(\cdot,y).
\]
Equivalently, Brownian motion can be defined abstractly as the solution to the martingale problem for the operator $\frac12 \Delta_{\cM}$. Concretely, for any $f \in C^{\infty}([0, \infty) \times \mathcal{M})$, the process 
\[
M_t^f := f(t, X_t) - f(0, X_0) - \int_0^t \left( \partial_s + \frac12 \Delta_{\cM} \right) f(s, X_s) \rmd s 
\]
is a martingale with respect to the natural filtration of $X$. More generally, a forward diffusion process with drift is given by
\[
\mathrm d X_t \;=\; b_t(X_t)\,\mathrm dt + U_{X_t}\circ \mathrm d W_t,
\]
with Fokker–Planck equation $\partial_t p_t = -\nabla(b_t p_t) + \frac12 \Delta_{\mathcal M} p_t$. Note that in this setting, the following process is a martingale for smooth $f$:
\begin{equation}
\label{eqn: Ito}
M_t^f := f(t, X_t) - f(0, X_0) - \int_0^t \left( \partial_s f + \langle b_t, \nabla f\rangle + \frac12 \Delta_{\cM} f \right) (s, X_s) \rmd s.
\end{equation}

Let $p_t$ denote the density of $X_t$ w.r.t.\ $\mu$, and define the \emph{score} $s_t \coloneqq \nabla \log p_t$. The time-reversal identity on manifolds yields a reverse SDE:
\[
\rmd \widetilde{X}_\tau = (-b_t(\widetilde{X}_\tau) + \nabla \log p_\tau(\widetilde{X}_\tau)) \rmd t + U_{\widetilde{X}_\tau}\circ \rmd W_t, \qquad \tau = T-t, \quad \tilde X_T \sim p_{X_T}.
\] 
In practice, the score $\nabla \log p_t$ is approximated by a trained neural network $\hat s_t(x)$.

Last not but least, note that on compact manifolds, $-\Delta_{\mathcal M}$ admits a spectral gap $\lambda_1>0$. Any initial distribution mixes to the uniform distribution $\mu$ along the heat flow with rate $\rme^{-\lambda_1 t}$. 

\section{Main Result}
In this section, for completeness, we first introduce the RSGM algorithm in \citet{bortoli2022riemannian}. Then, we offer our polynomial convergence guarantee in Theorem~\ref{theorem:main}. For simplicity, we use a driftless forward process:
\[
\mathrm d X_t \;=\; U_{X_t}\circ \mathrm d W_t,\qquad X_0\sim p_0.
\]
The time-reversal identity yields the \emph{reverse-time SDE}
\begin{equation}\label{eqn:rev-sde-manifold}
\mathrm d Y_t \;=\; \nabla \log p_\tau(Y_\tau)\,\mathrm dt \;+\; U_{Y_\tau}\circ \mathrm d W_t,\quad \tau = T-t, \quad Y_T\sim p_T.
\end{equation}

\begin{algorithm}[tbp]
    \caption{Riemannian Score-Based
Generative Models (RSGM)}
    \label{algorithm}
    \begin{algorithmic}[1]
        \STATE Manifold $(\mathcal M,g)$; score $\hat s_t(x)$; early stopping time $\delta > 0$; reverse time grid $\delta=t_0<t_1<\cdots<t_N=T$; step size $\step =t_k-t_{k-1}$; initial $x_N\sim \mu$ (uniform distribution); 
        \FOR{$k \in \{N, \dots, 1, 0\}$}
        \STATE Choose an orthonormal frame $U_k$ at $Y_k$, which is a linear map from $\mathbb R^d$ to $T_{Y_k} \cM$.
        \STATE $\xi_k \sim \mathcal N(0,I_d)$ in $\mathbb R^d$; \quad $G_k \gets U_k \xi_k \in T_{Y_k}\mathcal M$.
        \STATE $b_k \gets \hat s_{t_k}(Y_k)\in T_{Y_k}\mathcal M$
        \STATE $\Delta_k \gets \step b_k + \sqrt{\step}\, G_k \in T_{Y_k}\mathcal M$
        \IF{$\|\Delta_k\| \le h^{1/4}$}
            \STATE $Y_{k-1} \gets \exp_{Y_k}(\Delta_k)$
        \ELSE
        \STATE $Y_{k-1} \sim \mu$
        \ENDIF
        \ENDFOR
        \RETURN $Y_0$
    \end{algorithmic}
\end{algorithm}

In Algorithm~\ref{algorithm}, we provide an outline of discretized reverse-time SDE on Riemannian manifold, modified from \citet{bortoli2022riemannian}. In each reverse step $k \in \{N,\dots,1,0\}$, we select an orthonormal frame $U_k$ at $y_k$, then sample Gaussian noise $\xi_k$ and lift it to the tangent space $T_{y_k} \mathcal{M}$ using the orthonormal frame, obtaining $G_k \in T_{y_k} \cM$. Afterwards, we propose a tangent update $\Delta_k=\step \hat s_{t_k}(y_k) + \sqrt{\step} G_k$ and the project to the manifold using the exponential map. To prevent the update from exitting the injective radius, we perform a rejection sampling step that rejects exceedingly large update. The algorithm terminates at $k=0$ and returns the final iterate $y_0$. In this way, we ensure every update is well-defined in normal coordinates during the algorithm. 

Before presenting the main theorem, we formalize the assumptions needed for the convergence guarantee. 

\begin{assumption}[Regularity]
    \label{assumption:regularity}
    Let $(\mathcal M,g)$ be a connected, compact $d$-dimensional Riemannian manifold. We assume the following conditions on $\cM$:
    \begin{enumerate}[label=(\textbf{A\arabic*})]
    \item \textbf{Positive injectivity radius:} there exists some constant $K \ge 1$ such that the injective radius $\ge 1/ K$.
    \item \textbf{Uniform curvature bounds:} for the same constant $K$ (which can be enlarged if necessary), we have
    \[
    \max\Big \{\operatorname{Diam}(\cM), \|\mathrm{Rm}\|_{L^\infty}, \|\nabla \mathrm{Rm}\|_{L^\infty}, \|\nabla^2 \mathrm{Rm}\|_{L^\infty}\Big\} \le K.
    \]
    \item \textbf{Regularity of score estimates:} there exists a polynomial $\poly(d, K)$, such that 
    \[ 
    \|\hat s_{t_k}(x)\| \le \poly(d, K) \left(\|\nabla \log p_{t_k}(x)\| + t_k^{-1}\right), \quad  \forall x \in \cM. \]
\end{enumerate}
\end{assumption}

In Assumption~\ref{assumption:regularity}, we made the standard ``bounded geometry'' assumption; similar assumptions also occur in \citet{cheng2022efficient, bortoli2022riemannian}. A positive injective radius ensures that we have sufficient room to operate on the tangent spaces as a proxy of operating on manifolds, since for every $x \in \cM$, the exponential map $\exp_x$ is a diffeomorphism on the geodesic ball within injective radius. Bounds on Riemannian tensors rule out pathological cases, which helps to control the error propagation along the reverse diffusion. Lastly, compactness ensures a positive spectral gap of $\Delta_{\cM}$ with $\lambda_1 > 0$, which is necessary to guarantee that the forward process mixes. The mild assumption (A3) on the score estimates avoids excessively large drifts in diffusion, and can be implemented easily in practice by clipping. In addition to the above, we also need a standard assumption on the score estimation error \citep{chen2023score}.

\begin{assumption}[Score estimation error]
    \label{assumption:score}
    There exists $\epsilon_{\mathsf{score}}>0$ such that
    \[  \sum_{k=1}^N (t_k - t_{k-1}) \bbE \|\hat s_{t_k}(Y_{t_k}) - \nabla \log p_{t_k}(Y_{t_k})\|^2 \le \epsilon^2_{\mathsf{score}}. \]
\end{assumption}

With the above assumptions, we are now ready to present our main convergence guarantee for RSGM, as outlined in the following TV-accuracy bound.
\begin{theorem}
    \label{theorem:main} Assume Assumptions~\ref{assumption:regularity} and \ref{assumption:score} hold. There exists some universal constant $C, C' >0$ such that the following holds. 
    If $T \ge \frac{C}{\lambda_1} (d\log (Kd) + K + \log(\frac{N}{\epsilon}))$, then the output $Y_0$ of Algorithm~\ref{algorithm} obeys
    \begin{equation*}
        \mathsf{TV}(p_\delta, \mathsf{Law}(Y_0)) \le \mathrm \epsilon +  C' \varepsilon_{\mathsf{score}} +  \sqrt{\step T} \operatorname{poly}(d, K, \delta^{-1}),
    \end{equation*}
    where $\step$ is the discretization step size, $\lambda_1 > 0$ is the mixing rate of the geometric Brownian Motion on $\mathcal M$, i.e., the smallest eigenvalue of $-\Delta_{\mathcal M}$ in $L^2(\mu)$. 
\end{theorem}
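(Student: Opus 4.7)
The plan is to bound $\TV(p_\delta,\mathsf{Law}(Y_0))$ by decomposing it along three intermediate processes. Let $(Y_\tau^\star)_{\tau\in[\delta,T]}$ denote the exact reverse SDE \eqref{eqn:rev-sde-manifold} initialized from $p_T$, so $\mathsf{Law}(Y_\delta^\star)=p_\delta$. I introduce: $(Y_\tau^\mu)$, the exact reverse SDE initialized from $\mu$; and $(\tilde Y_\tau)$, a \emph{localized} reverse diffusion on $\cM$ that over each $[t_{k-1},t_k]$ uses the frozen drift $\hat s_{t_k}(\tilde Y_{t_k})$ but retains continuous manifold Brownian motion. Writing $Y$ for the discrete iterates of Algorithm~\ref{algorithm}, the data-processing inequality gives
\[
\TV(p_\delta,\mathsf{Law}(Y_0)) \le \TV(p_T,\mu) + \TV(\mathsf{Law}(Y^\mu_\delta),\mathsf{Law}(\tilde Y_\delta)) + \TV(\mathsf{Law}(\tilde Y_\delta),\mathsf{Law}(Y_0)).
\]
The first term contracts at rate $\rme^{-\lambda_1 T}$ by the spectral gap and becomes $\le \epsilon$ once $T\gtrsim \lambda_1^{-1}(d\log(Kd)+\log \epsilon^{-1})$, after using Li--Yau bounds to control the initial $\chi^2(p_0\,\|\,\mu)$ up to a $\rme^{O(d)}$ prefactor absorbed into the $d\log(Kd)$ term.

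\paragraph{Girsanov on $\cM$ for the drift-comparison error.} For the second term, both $Y^\mu$ and $\tilde Y$ are horizontal diffusions on the orthonormal frame bundle driven by a common Brownian motion, so Girsanov yields
\[
\KL(\mathsf{Law}(Y^\mu_{[\delta,T]})\,\|\,\mathsf{Law}(\tilde Y_{[\delta,T]})) = \tfrac{1}{2}\sum_{k=1}^N\int_{t_{k-1}}^{t_k}\bbE\bigl\|\hat s_{t_k}(\tilde Y_{t_k})-\nabla\log p_\tau(\tilde Y_\tau)\bigr\|^2\rmd\tau.
\]
I split the integrand into (i) a score-estimation piece bounded by $\epsilon_{\mathsf{score}}^2$ via Assumption~\ref{assumption:score}; (ii) a time-freezing piece $\|\nabla\log p_{t_k}(\tilde Y_\tau)-\nabla\log p_\tau(\tilde Y_\tau)\|^2$; and (iii) a space-freezing piece $\|\nabla\log p_{t_k}(\tilde Y_{t_k})-\nabla\log p_{t_k}(\tilde Y_\tau)\|^2$. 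The high-probability Li--Yau log-gradient estimate gives $\|\nabla\log p_t\|\lesssim \poly(d,K)/\sqrt{t\wedge 1}$ uniformly on $[\delta,T]$, and differentiating the heat equation gives matching bounds on $\partial_t\nabla\log p_t$ and $\nabla^2\log p_t$. Each of (ii)--(iii) is then $\step\cdot\poly(d,K,\delta^{-1})$ per step, summing to KL of order $\step T\cdot \poly$, which after Pinsker yields the $\sqrt{\step T}\poly(d,K,\delta^{-1})$ contribution.

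\paragraph{Parametrix comparison for the exponential-map step.} The third term is the genuinely manifold-specific piece. Over one step, $\tilde Y$ evolves from $Y_k$ by time $\step$ under $\langle b_k,\nabla\rangle+\tfrac12\Delta_\cM$, while the algorithm outputs $\exp_{Y_k}(\step b_k+\sqrt{\step}\,G_k)$ under the rejection guard $\|\Delta_k\|\le\step^{1/4}$. To compare the two one-step transition densities I invoke the Minakshisundaram--Pleijel parametrix
\[
p^{\mathrm{heat}}_\step(x,y)=(4\pi\step)^{-d/2}\rme^{-\rho(x,y)^2/(4\step)}\bigl(u_0(x,y)+u_1(x,y)\step+O(\step^2)\bigr),
\]
valid in a normal-coordinate ball containing $\{\|\Delta_k\|\le\step^{1/4}\}$, and absorb the constant drift by a short Duhamel expansion around the frozen-drift heat kernel. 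Under Assumption~\ref{assumption:regularity}, the $u_j$ together with all curvature corrections to $\Jac(\exp_{Y_k})$ are $\poly(d,K)$ on this neighborhood. A density comparison in normal coordinates against the isotropic Gaussian in $T_{Y_k}\cM$ yields per-step KL of order $\step^3\cdot\poly$ on the event $\{\|b_k\|\le\poly(d,K,\delta^{-1})\}$, which by Li--Yau and (A3) holds with probability $\ge 1-\epsilon/N$. Chain rule, Pinsker, and a Gaussian-tail estimate for the exceptional event yield an aggregate TV contribution of $\sqrt{\step T}\poly(d,K,\delta^{-1})+\epsilon$.

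\paragraph{Main obstacle.} The central difficulty is keeping every constant polynomial in $d$ rather than exponential. A naive $L^\infty$ comparison of the manifold heat kernel against its Euclidean Gaussian proxy in normal coordinates picks up $\rme^{O(d)}$ factors from worst-case $\Jac(\exp)$ and from uniform control of higher-order $u_j$ — this is precisely the origin of the exponential-in-$d$ rate in \citet{bortoli2022riemannian}. Two ingredients defeat this blowup. First, restricting via the rejection guard to a $\step^{1/4}$-normal-ball turns the Jacobian comparison into a finite Taylor expansion in the Riemann tensor whose coefficients are explicit polynomials, hence $\poly(d,K)$ under (A2), and makes the parametrix remainder uniformly convergent. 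Second, the sharp high-probability (rather than $L^\infty$) Li--Yau bound on $\nabla\log p_t$, together with the clipping assumption (A3), confines the analysis to an event on which the drift $b_k$ itself is $\poly(d,K,\delta^{-1})$. These two ingredients together replace every $\rme^{O(d)}$ factor in the one-step estimate by a $\poly(d,K)$ factor, which is what powers the polynomial iteration complexity in Theorem~\ref{theorem:main}.
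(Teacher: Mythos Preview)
Your overall architecture matches the paper's: the same three-way split into initialization, drift/score comparison via Girsanov against a frozen-drift auxiliary process, and a one-step parametrix estimate for the Brownian-motion discretization. The initialization and parametrix parts are essentially what the paper does.

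There is, however, a genuine gap in your drift-comparison step. You propose to handle the time-freezing piece $\|\nabla\log p_{t_k}(\cdot)-\nabla\log p_\tau(\cdot)\|^2$ by asserting that ``differentiating the heat equation gives matching bounds on $\partial_t\nabla\log p_t$.'' But differentiating the heat equation gives
\[
\partial_t \nabla \log p_t \;=\; \tfrac12\,\nabla\!\left(\frac{\Delta_{\cM} p_t}{p_t}\right)
\;=\; \tfrac12\,\nabla\Delta_{\cM}\log p_t \;+\; \nabla^2\log p_t\cdot\nabla\log p_t,
\]
and the first term is a \emph{third-order} derivative of $\log p_t$. Li--Yau/Hamilton-type estimates control only $\nabla\log p_t$ and $\nabla^2\log p_t$; there is no off-the-shelf polynomial bound for $\nabla\Delta_{\cM}\log p_t$, so your separate (ii)--(iii) split cannot be closed as written. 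The paper confronts exactly this obstacle and avoids it by \emph{not} splitting time- and space-freezing: instead it differentiates $\bbE\|\nabla\log p_t(Y_t)-\mathscr{S}^\star_{t_k,Y_{t_k}}(Y_t)\|^2$ along the reverse SDE via It\^o's formula, so that the third-order contribution from $\partial_t\nabla\log p_t$ meets the third-order contribution $\tfrac12\Delta_{\cM}\nabla\log p_t$ from the Brownian part, and by Bochner's identity $(\Delta_{\cM}\nabla-\nabla\Delta_{\cM})f=\Ric^\sharp(\nabla f,\cdot)$ these combine into a curvature term with only first-order dependence. This cancellation (inspired by \citet{benton2024nearly} in the Euclidean case) is the crux of keeping the discretization error polynomial, and it is missing from your outline.

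Two smaller points. First, the ``frozen drift $\hat s_{t_k}(\tilde Y_{t_k})$'' is a single tangent vector in $T_{\tilde Y_{t_k}}\cM$, not a vector field; to run the auxiliary SDE you must extend it to a neighborhood. The paper does this explicitly via the localized field $\mathscr{S}_{t_k,x}(y)=(\rmd\exp_x)_{\log_x y}\bigl(\eta_\omega(\rho(x,y))\,\hat s_{t_k}(x)\bigr)$ with a smooth cutoff, and this construction enters nontrivially into both the Girsanov and parametrix estimates. Second, in your Girsanov display the expectation should be under the path law of the first argument of the KL (i.e., under $Y^\mu$, whose marginals are the $p_t$), not under $\tilde Y$; this matters because Assumption~\ref{assumption:score} is stated for expectations against $p_{t_k}$.
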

A few remarks are in order.

\paragraph{Iteration complexity.}
The error bound decomposes cleanly into three terms: $\epsilon$ results from mixing of the heat semigroup at the spectral gap $\lambda_1$, $\varepsilon_{\mathsf{score}}$ captures error from imperfect score estimation, and $\sqrt{\step T} \, \mathrm{poly}(d,K, \delta^{-1})$ is the discretization error controlled by the step size and curvature. Consequently, choosing $T \asymp \lambda_1^{-1}(d\log d + \log(d/\epsilon))$ and $h = \frac{\epsilon^2}{\poly(d, K, \delta^{-1}) T}$, then the TV error is bounded by $ \epsilon + \epsilon_{\mathsf{score}}$ after polynomially many iterations
\[N = T/\step \asymp \frac{\poly(d, K, \delta^{-1})}{(\lambda_1 \epsilon)^2}.\]

Compared to prior convergence rates in the Wasserstein metric \citep{bortoli2022riemannian}, which require exponential complexity, we achieve polynomial convergence of Riemannian diffusion models for the first time. Nonetheless, we emphasize that TV and Wasserstein distances are incomparable with each other in general, and our guarantee complements prior Wasserstein results \citep{bortoli2022riemannian} by ensuring distributional closeness in a different notion with a much smaller number of iterations. 

\paragraph{Possible improvements.}
We note that the bound established in Theorem~\ref{theorem:main} holds under very mild geometric assumptions, requiring only constraints on the injective radius and Riemannian curvature. The purpose of this study is to demonstrate that, in the manifold setting, the exponential blow-up in $T$ can be avoided and polynomial complexity can be achieved. To keep the exposition as simple as possible and to clearly highlight the key ideas, we have not attempted to optimize the current bound on the degree of the polynomial. Potential approaches for sharper bounds include: (i) a better design of discretization schedule, possibly adaptive to the manifold geoemetry, and a more careful computation of discretization error, such as those in \citet{li2024improved}, \citet{benton2024nearly} (notably, the dependence on $\delta$ might be improved to poly-logarithmic in this way); (ii) a tailored analysis for TV error that does not rely on Pinsker's inequality, like those in \citet{li2025odt}, may also be extended to manifolds; (iii) a tighter version of our Minakshisundaram-Pleijel parametrix bound. We leave these improvements as future work.

\section{Proof Outline}
Throughout the proof, we assume that 
\begin{equation}
\label{eqn: h}
h \le \frac{1}{\poly(d, K,  \delta^{-1})},
\end{equation}
since otherwise the bound in Theorem~\ref{theorem:main} would be trivial (recall that $\TV$ distance is always bounded by $2$).
We start by recalling the sequence considered in RSGM. Let $(Y_k)_{k \in \{0,\dots,N\}}$ be given by $Y_N \sim \mu$ and for any $k \in \{0,\dots,N-1\}$:
\begin{equation*}
    Y_{k-1} = \begin{cases} \exp_{Y_k}\left[\step \hat s_{t_k}(Y_k)+\sqrt{\step} G_{k}\right], & \|\step \hat s_{t_k}(Y_k) + \sqrt{\step}\, G_k \| \le h^{1/4},
    \\
    \text{drawn from $\mu$}, & \text{otherwise}.
    \end{cases}
\end{equation*}

This defines a sequence of probability transition kernels $\discK_{t_k, t_{k-1}}$. For simplicity, we denote this by $\discK_k$. 
Let $q_k$ be the law of $Y_k$. We have
\[
q_0 = q_N \discK_N \discK_{N-1} \cdots \discK_1.
\]

Similarly, the probability transition kernel from time $t_k$ to $t_{k-1}$ in \eqref{eqn:rev-sde-manifold} is denoted by $\sfK_{t_k, t_{k-1}}$ or $\sfK_k$ in short. We have
\[
p_0 = p_N \sfK_N \sfK_{N-1} \cdots \sfK_1.
\]

Our goal would be to bound $\TV(p_0, q_0)$ as in Theorem~\ref{theorem:main}, by decomposing the total error into four components:
\[
\mathsf{(initialization~error)} + \mathsf{(score~error)} + \mathsf{(drift~discretization~error)} + \mathsf{(BM~simulation~error)}. 
\]
More concretely:
\begin{itemize}
\item {\bf Initialization error} arises from initializing $Y_N$ with $\mu$ instead of the true marginal $p_N$;
\item {\bf Score error} arises from imperfect score estimation;
\item {\bf Drift discretization error} arises from approximating the continuous-time drift $\hat s_t(Y_t)$ by its ``time-frozen'' counterpart $\hat s_{t_k}( Y_{t_k})$;
\item {\bf Brownian motion (BM) simulation error} is a distinctive feature of the manifold setting. Unlike in Euclidean space --- where the transition kernel of Brownian motion over $[t_k, t_{k-1}]$ is exactly Gaussian with variance $(t_k - t_{k-1})$ --- the transition kernel of manifold-valued Brownian motion cannot be simulated exactly by any discrete-time process, even after time discretization. This inherent inexactness gives rise to this final error term.
\end{itemize}

The first two components are relatively easier to bound using well-established tools: mixing rate bounds of heat flow \citep{urakawa2006convergence} and Girsanov transform \citep{chen2023score}. For the drift discretization error, recent techniques developed in the Euclidean setting \citep{benton2024nearly} can also be adapted with modifications that account for the manifold curvature. However, the last component --- the Brownian motion simulation error --- represents the core challenge in the manifold setting, which fundamentally denies a direct extension of Euclidean analysis. 

\paragraph{Step I. Constructing auxiliary kernels via localization.} 
In view of this, we first introduce an intermediate random process that separates the drift discretization error from the BM simulation error. Constructing such a process, however, involves additional technicality.  In particular, the frozen drift $\hat s_{t_k}( Y_{t_k})$ is a vector in the tangent space $T_{Y_k}\cM$, and is therefore only well-defined at the fixed point $Y_k$. This poses a compatibility issue: as Brownian motion evolves continuously on the manifold, it immediately departs from $Y_k$, rendering the frozen drift ill-defined. Careful geometric considerations are thus required to reconcile the piecewise-constant drift approximation with the intrinsic curvature of the manifold. 

In our analysis, this is handled using localization by the construction of an auxiliary sequence of transition kernels $\auxK{k}$. These kernels do not appear in the algorithm itself; they serve solely as an analytical tool to facilitate the proof. These kernels expose the behavior of the time-reverse SDE \eqref{eqn:rev-sde-manifold} when the estimated score $\hat s_t$ is frozen to be a constant vector field in between discretization steps, meanwhile keeping the continuous Brownian motion. 

Let $\eta: [0, \infty) \to [0, 1]$ be a smooth cutoff function, i.e., $\eta$ is decreasing, $\eta|_{[0, 1]} \equiv 1$ and $\eta|_{[4, \infty)} \equiv 0$. Such a function can be chosen such that $|\eta'| + |\eta''| + |\eta'''| \le 100$. 
Recall that the injective radius of $\cM$ is lower bounded by $1/K$, and the curvature is upper bounded by $K$. Define
\begin{equation}
\omega \coloneqq \frac{c_\omega}{Kd^4} , 
\qquad \eta_\omega(r) = \eta\left( \frac{4r^2}{\omega^2} \right), \quad r \ge 0,
\label{eqn: eta}
\end{equation}
where $c_\omega > 0$ is a small universal constant. We have $\eta_\omega|_{[0, \frac{\omega}{2}]} \equiv 1$ and $\eta_\omega|_{[\omega, \infty)} \equiv 0$.
For $t>0$, $x, y \in \cM$, define the following vector field on $\cM$:
\[ \mathscr{S}_{t, x}(y) = (\rmd \exp_x)_{\log_x y} \left( \eta_{\omega}(\rho(x, y)) \cdot \hat s_t(x) \right) \in T_y \cM.
\]

Intuitively speaking, $\mathscr{S}_{t, x}(\cdot)$ is the ``constant'' velocity field $\hat s_{t}(x)$ in normal coordinates, which represents our idea of freezing the drift term for a time period. The $\rmd \exp_x$ in the formula is responsible for identifying $T_y\cM$ with $T_x\cM$. \footnote{Generally speaking, it is more natural to use parallel transport to identify different tangent spaces. However, this would later lead to a more complicated treatment of perturbed heat equation with variable drifts. We choose to use parallelism in normal coordinates instead for simplicity.} On the other hand, the cut-off function $\eta_\omega$ is necessary to keep all our discussions restricted to the injective radius, so as to avoid pathologies of cut locus.

With this in mind, we are ready to define $\auxK{k}$ as the transition kernel from time $t_k$ to $t_{k-1}$ of the reverse-time SDE
\begin{equation}
\rmd Y_\tau = \mathscr{S}_{t_k, Y_{t_k}}(Y_\tau) \rmd t + U_{Y_\tau} \circ \rmd W_t, \quad \tau = T-t, \quad \tau \in [t_{k-1}, t_k],
\label{eqn: SDE aux}
\end{equation}
and in addition,
\[
\auxp{k} = p_N \auxK{N} \, \auxK{N-1} \cdots \auxK{k+1}, \quad k=N, N-1, \cdots, 0.
\]

\paragraph{Step II. Decomposing different sources of error.} 
We now decompose
\[
\TV(p_0, q_0) \le \TV(p_0, \auxp{0}) + \TV(\auxp{0}, q_0) 
\le \sqrt{2\KL(p_0 ~\|~ \auxp{0})} + \TV(\auxp{0}, q_0),
\]
where the last inequality used Pinsker's inequality. To control $\KL(p_0 ~\|~ \auxp{0})$, we further introduce the counterpart of $\mathscr{S}_{t, x}$ using the exact score function $\nabla \log p_t$:
\[ \mathscr{S}^\star_{t, x}(y) = (\rmd \exp_x)_{\log_x y} \left( \eta_{\omega}(\rho(x, y)) \cdot 
\nabla \log p_t(x) \right) \in T_y \cM.
\]

We apply Girsanov's theorem \citep{hsu2002stochastic} to compare \eqref{eqn: SDE aux} with \eqref{eqn:rev-sde-manifold}, in a way that is standard in recent literature \citep{chen2023score, bortoli2022riemannian}. Denote the path law of the solution of \eqref{eqn:rev-sde-manifold} by $\mathsf{Law}(Y)$, and the path law of the solution of \eqref{eqn: SDE aux} by $\mathsf{Law}(Y^{\mathsf{aux}})$. Girsanov's theorem asserts that the KL divergence $\KL(\mathsf{Law}(Y) ~ \| ~ \mathsf{Law}(Y^{\mathsf{aux}}))$ is upper bounded by the expectation of the squared norm of the difference between the drift terms in the two SDEs.\footnote{In its classical form, Girsanov's theorem requires integrability such as Novikov's condition to hold. In our setting, this can be bypassed with a localization argument as in \citet{chen2023score}.} More concretely,
\[
\KL\big( \mathsf{Law}(Y) ~ \| ~ \mathsf{Law}(Y^{\mathsf{aux}}) \big)
\le \sum_{k=1}^N \int_{t_{k-1}}^{t_k} \bbE \left\| \nabla \log p_t(Y_t) - \mathscr{S}_{t_k, Y_{t_k}}(Y_t) \right\|^2 \rmd t.
\]
Since $p_0$ and $\auxp{0}$ are marginals of $\mathsf{Law}(Y)$ and $\mathsf{Law}(Y^{\mathsf{aux}})$ respectively at time $t=t_0$, by post-processing inequality, we have
\begin{align}
&\KL(p_0 ~\|~ \auxp{0}) 
\le \sum_{k=1}^N \int_{t_{k-1}}^{t_k} \bbE \left\| \nabla \log p_t(Y_t) - \mathscr{S}_{t_k, Y_{t_k}}(Y_t) \right\|^2 \rmd t
\nonumber\\
&\le 2 \underbrace{\sum_{k=1}^N \int_{t_{k-1}}^{t_k} \bbE \| \nabla \log p_t(Y_t) - \mathscr{S}^\star_{t_k, Y_{t_k}}(Y_t) \|^2\rmd t}_{\mathsf{drift~discretization}} 
\phantom{\le{}} + 2 \underbrace{\sum_{k=1}^N \int_{t_{k-1}}^{t_k} \bbE \| \mathscr{S}_{t_k, Y_{t_k}}(Y_t) - \mathscr{S}^\star_{t_k, Y_{t_k}}(Y_t) \|^2\rmd t}_{\mathsf{score~matching}}.
\label{eqn: aux decomp}
\end{align}
\vspace{-0.5em}It remains to decompose $\TV(\auxp{0}, q_0)$. To isolate the initialization error, we introduce 
\[
q_0^\star = p_N \discK_N \discK_{N-1} \cdots \discK_1.
\]
By triangle inequality and post-processing inequality, we have
\begin{align*}
\TV(\auxp{0}, q_0) 
\le \TV(\auxp{0}, q_0^\star) + \TV(q_0^\star, q_0)
\le \underbrace{\TV(\auxp{0}, q_0^\star)}_{\mathsf{BM~simulation}} + \underbrace{\TV(p_N, q_N)}_{\mathsf{initialization}}.
\end{align*}

\paragraph{Step III. Controlling initialization and score errors. }
By our design, $q_N = \mu$, and $\TV(p_N, q_N) = \TV(p_N, \mu)$. This is known as the mixing rate of heat flow in total variation norm, and has well-established bounds, e.g., \citet{urakawa2006convergence}. The score-matching error, on the other hand, can be controlled with an analysis on the distortion on the Riemannian metric in normal coordinates. We compile the bounds into the following lemma. 

\begin{lemma}
\label{lem: easy}
There exists a universal constant $C > 0$, such that whenever $T \ge 1$, we have
\begin{align*}
\TV(p_N, q_N)  & \le \rme^{C(K + d \log d)} \rme^{-\frac{\lambda_1}{2} (T-\frac12)},
\end{align*}
and
\begin{align*}
\sum_{k=1}^N \int_{t_{k-1}}^{t_k} \bbE \|\mathscr{S}_{t_k, Y_{t_k}}(Y_t) - \mathscr{S}^\star_{t_k, Y_{t_k}}(Y_t)\|^2 \rmd t & \le 2 \epsilon_{\mathsf{score}}^2.
\end{align*}
\end{lemma}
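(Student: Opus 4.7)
The plan is to prove the two estimates separately. The mixing bound combines the spectral gap with a pointwise heat-kernel estimate; the score-matching bound exploits the near-isometric behavior of the exponential map inside the support of the cutoff $\eta_\omega$.

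\textbf{Mixing bound.} Since the forward process is driftless geometric Brownian motion, $p_T = P_T p_0$, where $\{P_t\}_{t\ge 0}$ is the heat semigroup on $(\cM,g)$ with generator $\tfrac12 \Delta_\cM$. I would split $T = \tfrac12 + (T-\tfrac12)$ and use the semigroup property to write $p_T - 1 = P_{T - 1/2}(p_{1/2} - 1)$, exploiting that constants are fixed points of $P_t$ and that $\int_\cM p_{1/2}\,\rmd\mu = 1$. The Li--Yau upper bound on the heat kernel, valid under the Ricci lower bound implied by Assumption~(A2), yields a pointwise estimate $p_{1/2}(y) \le \rme^{C(K + d\log d)}$; the $d\log d$ factor tracks the $d^{d/2}$-type dimensional constant in the Li--Yau inequality combined with the $t^{-d/2}$ blow-up at $t=1/2$. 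Hence $\|p_{1/2} - 1\|_{L^2(\mu)} \le \rme^{C(K + d\log d)}$. Spectral decomposition of $-\Delta_\cM$ shows that $P_t$ contracts at rate $\lambda_1/2$ on the orthogonal complement of constants, yielding
\begin{equation*}
\|p_T - 1\|_{L^2(\mu)} \le \rme^{-\lambda_1 (T-1/2)/2}\, \rme^{C(K + d\log d)}.
\end{equation*}
Finally, Cauchy--Schwarz together with $\mu(\cM) = 1$ gives $\TV(p_T,\mu) = \tfrac12 \|p_T - 1\|_{L^1(\mu)} \le \tfrac12 \|p_T - 1\|_{L^2(\mu)}$, which delivers the first bound after absorbing the $\tfrac12$ into $C$.

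\textbf{Score-matching bound.} I would expand
\begin{equation*}
\mathscr{S}_{t_k, Y_{t_k}}(Y_t) - \mathscr{S}^\star_{t_k, Y_{t_k}}(Y_t) = (\rmd\exp_{Y_{t_k}})_{\log_{Y_{t_k}} Y_t}\Bigl(\eta_\omega(\rho(Y_{t_k},Y_t))\,\bigl(\hat s_{t_k}(Y_{t_k}) - \nabla\log p_{t_k}(Y_{t_k})\bigr)\Bigr).
\end{equation*}
Since $\eta_\omega$ is supported in $[0,\omega]$ with $\omega = c_\omega/(K d^4)$, the vector $\log_{Y_{t_k}} Y_t$ in the nontrivial regime has norm at most $\omega$, which lies safely inside the injectivity radius $1/K$. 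Rauch's comparison theorem under the curvature bound $K$ then gives $\|(\rmd\exp_x)_v\|_{\mathrm{op}} \le 1 + O(\sqrt{K}\,\omega) \le \sqrt{2}$ for $c_\omega$ small enough. Combining with $\eta_\omega \le 1$ produces the pointwise bound
\begin{equation*}
\|\mathscr{S}_{t_k, Y_{t_k}}(Y_t) - \mathscr{S}^\star_{t_k, Y_{t_k}}(Y_t)\|^2 \le 2\,\|\hat s_{t_k}(Y_{t_k}) - \nabla\log p_{t_k}(Y_{t_k})\|^2,
\end{equation*}
whose right-hand side is independent of $Y_t$. Integrating over $t \in [t_{k-1},t_k]$ contributes a factor of $(t_k - t_{k-1})$, and summing over $k$ together with Assumption~\ref{assumption:score} gives the claimed bound $2\epsilon_{\mathsf{score}}^2$.

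\textbf{Main obstacle.} The most delicate ingredient is the pointwise heat-kernel bound $\|p_{1/2}\|_{L^\infty(\mu)} \le \rme^{C(K+d\log d)}$, which requires careful tracking of the dimensional constants in the Li--Yau or Davies ultracontractivity estimate under the Ricci lower bound provided by Assumption~(A2). The score-matching part, by contrast, is essentially routine once one observes that the cutoff $\eta_\omega$ confines every evaluation of $\exp_{Y_{t_k}}$ to a neighborhood strictly inside the injectivity radius, where Rauch comparison applies uniformly with distortion arbitrarily close to one.
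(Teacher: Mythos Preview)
Your proposal is correct and follows essentially the same route as the paper. For the mixing bound, the paper invokes a result of Urakawa that packages exactly your $L^2$-contraction-by-spectral-gap argument, together with Li--Yau plus a volume lower bound (G\"unther) to control the constant at time $1/2$; for the score-matching bound, the paper uses its metric-distortion lemma in normal coordinates, which is the Jacobi-field statement equivalent to your Rauch comparison bound on $\|(\rmd\exp_x)_v\|_{\mathrm{op}}$ (the distortion is in fact $1+O(K\omega^2)$ rather than $1+O(\sqrt{K}\,\omega)$, but this is immaterial).
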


\paragraph{Step IV. Controlling drift discretization error with It\^o/Stratonovich calculus and Li-Yau estimates.} The drift discretization error defined in \eqref{eqn: aux decomp} has a similar form to the discretization error for Euclidean setting \citep{benton2024nearly}, though additional complication arises due to non-constant $\mathscr{S}^\star_{t_k, Y_{t_k}}$. The idea is to study the time derivative of $\bbE \| \nabla \log p_t(Y_t) - \mathscr{S}^\star_{t_k, Y_{t_k}}(Y_t) \|^2$, which in view of $\partial_\tau \log p_t = -\frac12\Delta_{\cM} p_t$ (negative sign due to reverse time) involves space derivatives of $\log p_t$ up to third order. Fortunately, after applying It\^o/Stratonovich calculus to simplify the expression, a key property in the proof of the Euclidean setting carries over: third-order derivatives of $\log p_t$ cancel out. The remaining first and second-order derivatives can be controlled by Li-Yau estimates on the log-gradient of the heat kernel. We obtain
\begin{lemma}
\label{lem: discr err main}
Under the assumptions in Theorem~\ref{theorem:main},
there is a universal constant $C>0$ such that
\[
\sum_{k=1}^N \int_{t_{k-1}}^{t_k} \bbE \| \nabla \log p_t(Y_t) - \mathscr{S}^\star_{t_k, Y_{t_k}}(Y_t) \|^2\rmd t
\le \frac{C d^6 K^8}{\delta^3} \step^2 N.
\]
\end{lemma}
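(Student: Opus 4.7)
Fix $k \in \{1,\ldots,N\}$, write $x := Y_{t_k}$, and let $e(t, y) := \nabla \log p_t(y) - \mathscr{S}^\star_{t_k, x}(y)$, which vanishes at the base point $(t_k, x)$ by construction, since $\eta_\omega(0) = 1$ and $(\rmd \exp_x)_0 = \mathrm{Id}$. My goal is a per-window bound $\bbE \int_{t_{k-1}}^{t_k} \|e(t, Y_t)\|^2 \,\rmd t \lesssim \poly(d, K, \delta^{-1}) \, \step^3$; summing over the $N$ windows then produces the required $\step^2 N$ factor. To reach this, I would apply It\^o's formula on $(\cM, g)$ to the scalar $F(t, y) := \|e(t, y)\|^2$ along the reverse-time SDE \eqref{eqn:rev-sde-manifold}, obtaining $\mathcal{L} F = 2 \langle \mathcal{L} e, e\rangle + \|\nabla e\|_{\mathrm{HS}}^2$, where $\mathcal{L} := -\partial_t + \langle \nabla \log p_t, \nabla\rangle + \tfrac12 \Delta_{\cM}$ is the generator of the reverse flow (the $-\partial_t$ reflects the time reversal). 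Integration gives $\bbE F(t, Y_t) = -\int_t^{t_k} \bbE \mathcal{L} F(s, Y_s) \,\rmd s$, so a backward Gronwall argument anchored at $F(t_k, Y_{t_k}) = 0$ reduces the claim to a pointwise bound on $|\mathcal{L} F|(s, \cdot)$.

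\textbf{Third-order cancellation via Bochner.} The core computation is $\mathcal{L}(\nabla \log p_t)$. The heat equation implies $\partial_t \log p_t = \tfrac12 \Delta_{\cM} \log p_t + \tfrac12 \|\nabla \log p_t\|^2$, so $-\partial_t \nabla \log p_t = -\tfrac12 \nabla \Delta_{\cM} \log p_t - \nabla^2 \log p_t \cdot \nabla \log p_t$. The two remaining contributions to $\mathcal{L}(\nabla \log p_t)$ are $\nabla^2 \log p_t \cdot \nabla \log p_t$ (from $\langle \nabla \log p_t, \nabla\rangle$) and $\tfrac12 \Delta_{\cM} \nabla \log p_t$; invoking the Bochner commutator $\Delta_{\cM} \nabla f = \nabla \Delta_{\cM} f + \Ric(\nabla f, \cdot)$, the Hessian--score pieces cancel exactly and the $\pm \tfrac12 \nabla \Delta_{\cM} \log p_t$ pieces cancel modulo $\tfrac12 \Ric(\nabla \log p_t, \cdot)$. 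Thus $\mathcal{L}(\nabla \log p_t) = \tfrac12 \Ric(\nabla \log p_t, \cdot)$, bounded pointwise by $K \|\nabla \log p_t\|$: this is precisely the Euclidean-style third-order cancellation, with a Ricci correction as its manifold analogue. For the frozen field $\mathscr{S}^\star_{t_k, x}$, which in normal coordinates at $x$ reads $\eta_\omega(|y|) v_k$ with $v_k := \nabla \log p_{t_k}(x)$, the generator is purely geometric: it picks up Christoffel contributions that vanish at $x$ and grow at most as $K \rho(x, y)^2 \|v_k\|$ away, plus an $\eta_\omega'$ piece supported on an annulus $\{\omega/2 \le \rho(x, Y_s) \le \omega\}$ that $Y_s$ visits only on an event of probability $\exp(-\Omega(\omega^2/\step))$. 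Combining these through AM--GM on $2\langle \mathcal{L} e, e\rangle$ yields
\begin{equation*}
|\mathcal{L} F|(s, Y_s) \lesssim K^2 \|\nabla \log p_s(Y_s)\|^2 + \|\nabla^2 \log p_s(Y_s)\|_{\mathrm{HS}}^2 + K^2 \|v_k\|^2 \rho(x, Y_s)^2 + \|e(s, Y_s)\|^2 + (\text{exp. small}).
\end{equation*}

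\textbf{Li--Yau estimates and assembly.} To close, I would invoke the Li--Yau gradient bound for the heat kernel on manifolds with bounded curvature, which gives $\|\nabla \log p_s(y)\| \lesssim \poly(d, K)/\sqrt s$ pointwise and without any positivity/smoothness hypothesis on $p_0$, together with its differentiated (Hessian) counterpart $\|\nabla^2 \log p_s(y)\|_{\mathrm{HS}} \lesssim \poly(d, K)/s$, obtained either by a second iteration of the Li--Yau argument or by differentiating the Minakshisundaram--Pleijel parametrix expansion developed in the paper. With early stopping $s \ge \delta$, combined with the manifold BM displacement bound $\bbE \rho(Y_s, x)^2 \lesssim \step$ and $\|v_k\|^2 \lesssim \poly(d, K)/\delta$, these produce $\bbE |\mathcal{L} F|(s, Y_s) \lesssim \poly(d, K, \delta^{-1}) + \bbE \|e(s, Y_s)\|^2$. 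A backward Gronwall on $[t_{k-1}, t_k]$ anchored at $F(t_k, Y_{t_k}) = 0$ then gives $\bbE F(t, Y_t) \lesssim \poly(d, K, \delta^{-1})\,(t_k - t)$; integrating over the window and summing over $k$ yields the announced $\poly(d, K, \delta^{-1}) \step^2 N$ bound, and tracking the exponents of $d$, $K$, $\delta^{-1}$ through the above steps produces the stated $d^6 K^8/\delta^3$ prefactor. I expect the main obstacle to be twofold: (i) carrying out the Euclidean-style third-order cancellation \emph{precisely} on a curved space, keeping careful bookkeeping of the Bochner Ricci correction and the normal-coordinate Christoffel mismatch without incurring exponential dependence in $d$ or $K$; and (ii) establishing the Hessian-type Li--Yau estimate $\|\nabla^2 \log p_s\|_{\mathrm{HS}} \lesssim \poly(d, K)/s$ in the absence of positivity/smoothness of $p_0$, which is exactly the analytic step where the Minakshisundaram--Pleijel parametrix machinery introduced in the paper becomes essential.
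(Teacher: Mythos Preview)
Your approach is essentially the paper's: It\^o's formula on $\|e\|^2$ along the reverse SDE, Bochner's identity to effect the third-order cancellation $\mathcal L(\nabla\log p_t)=\tfrac12\Ric^\sharp(\nabla\log p_t,\cdot)$, Li--Yau/Hamilton-type estimates for $\|\nabla\log p_t\|$ and $\|\nabla^2\log p_t\|$, and an exit-probability bound for the cutoff annulus via a stopping-time argument.

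Two clarifications. First, the Minakshisundaram--Pleijel parametrix is \emph{not} used for the Hessian estimate and is not essential for this lemma; the paper obtains both gradient and Hessian control on $\log p_t$ from Hamilton's matrix Harnack and Han--Zhang's upper Hessian bound (Lemma~\ref{lem: Hessian ub}), i.e., your ``second iteration of Li--Yau'' option. The MP machinery is deployed elsewhere, for the separate Brownian-motion simulation error (Lemma~\ref{lem: sim}), where it compares the manifold heat kernel to its Euclidean proxy over a single step. Second, the paper does not rely purely on pointwise Li--Yau bounds of the form $\poly(d,K)/\sqrt s$: those bounds carry a factor $\log\tfrac{\sup p_{t/2}}{p_t(y)}$, and rather than control it uniformly via the Harnack lower bound (which would work but lose sharpness), the paper exploits $\bbE[1/p_t(Y_t)]=\int 1\,\rmd\mu=1$ together with Chebyshev to bound $\bbE\log^4\tfrac{\sup p_{s/2}}{p_s(Y_s)}$ directly in expectation; this is how the stated $d^6K^8/\delta^3$ exponents arise. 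Your purely pointwise route would still yield a polynomial bound, just with worse constants.
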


\paragraph{Step V. Controlling BM simulation error using parametrix estimates.}
Our approach is inspired by the following consequence of post-processing inequality and Pinsker's inequality:
\begin{align*}
\TV(\auxp{0}, q_0^\star) \le \sqrt{2\KL(\auxp{0} ~\|~ q_0^\star)} 
\le \sqrt{2\sum_{k=1}^N \KL(\auxp{k} \auxK{k} ~\|~ \auxp{k}\, \discK_{k})}.
\end{align*}
This leads us to compare the kernel $\auxK{k}$ and $\discK_{k}$. In normal coordinates, Fokker-Planck equation shows that these two are the solutions of the heat equations with the Euclidean Laplacian and with the manifold Laplace-Beltrami operator. We utilize the Minakshisundaram-Pleijel parametrix theory \citep{berline2003heat} in geometric analysis for this comparison, and establish a quantitative bound in polynomially small radius and polynomially short time (cf. Lemma~\ref{lem: parametrix}).

\section{Numerical experiments}
In this section, we verify the results in \Cref{theorem:main} on compact manifolds by measuring the exit probability in the reverse steps of \Cref{algorithm}, which is extensively used in the proof of \Cref{lem: sim} to ensure the convergence of \Cref{algorithm}, and the TV distance between the target distribution (a Gaussian mixture) and the recovered distribution. 

\paragraph{Reset probability on $\mathbb{S}^2$ and $\mathbb{T}^2$}. We start by examining the total reset probability on the unit $2$-sphere $\mathbb S^2$ and on the $2$-torus $\mathbb T^2$. We run the backward process in \Cref{algorithm} with different stepsizes $h$ in each setting with $p_0$ being Gaussian mixture (see below for definition), and record the fraction of trials whose tangent update $\Delta_k=h\,s_{t_k}(Y_k)+\sqrt{h}\,G_k$ violates $\|\Delta_k\|\le h^{1/4}$ at least once among all steps. On both manifolds, \Cref{fig:exit2} shows a clear linear trend of the logarithm of reset probability of against $h^{-1/2}$, which can be obtained by Gaussian tails. This confirms that the rejection sampling has no practical impact on the performance of Algorithm~\ref{algorithm}.

\paragraph{High‑dimensional torus.}
We extend this experiment to the $d$‑dimensional flat torus $\mathbb T^d$ for different stepsizes. \Cref{fig:td} reports the logarithm of the reset probability versus $h^{-1/2}$ for $d \in \{2,4,8\}$. Increasing $d$ raises the baseline reset rate, yet the slope of the decay remains essentially unchanged—resets remain exponentially rare as $h\downarrow 0$, which aligns with our analysis.

\begin{figure}[ht]
\centering
    \begin{subfigure}[b]{0.49\textwidth}
            \centering
            \includegraphics[width=\textwidth]{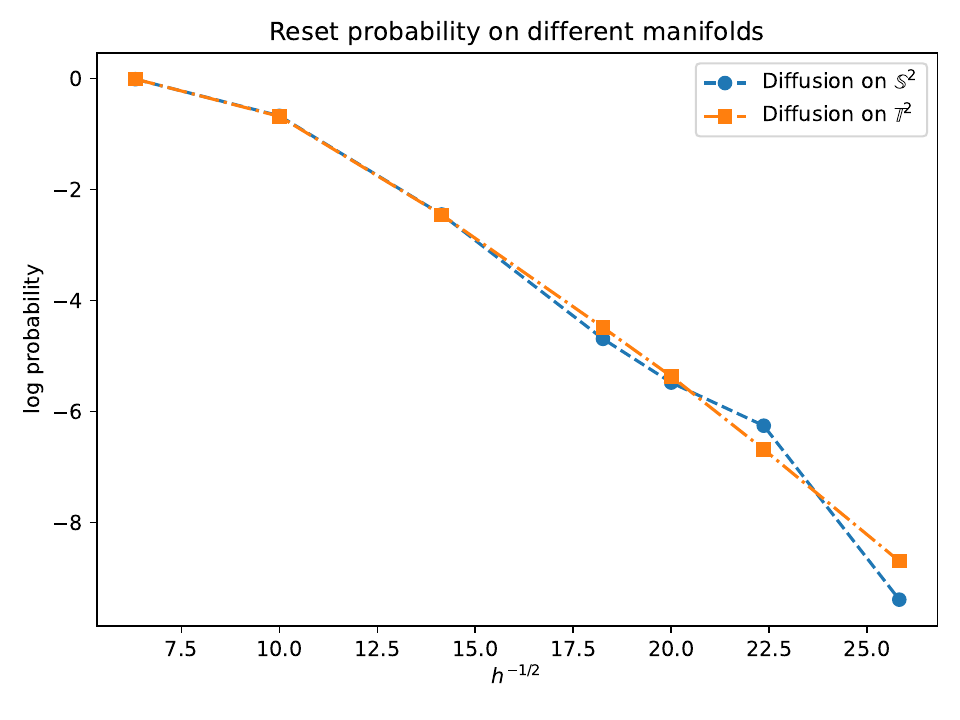}
            \caption{}
            \label{fig:exit2}
    \end{subfigure}
    \begin{subfigure}[b]{0.49\textwidth}            
            \includegraphics[width=\textwidth]{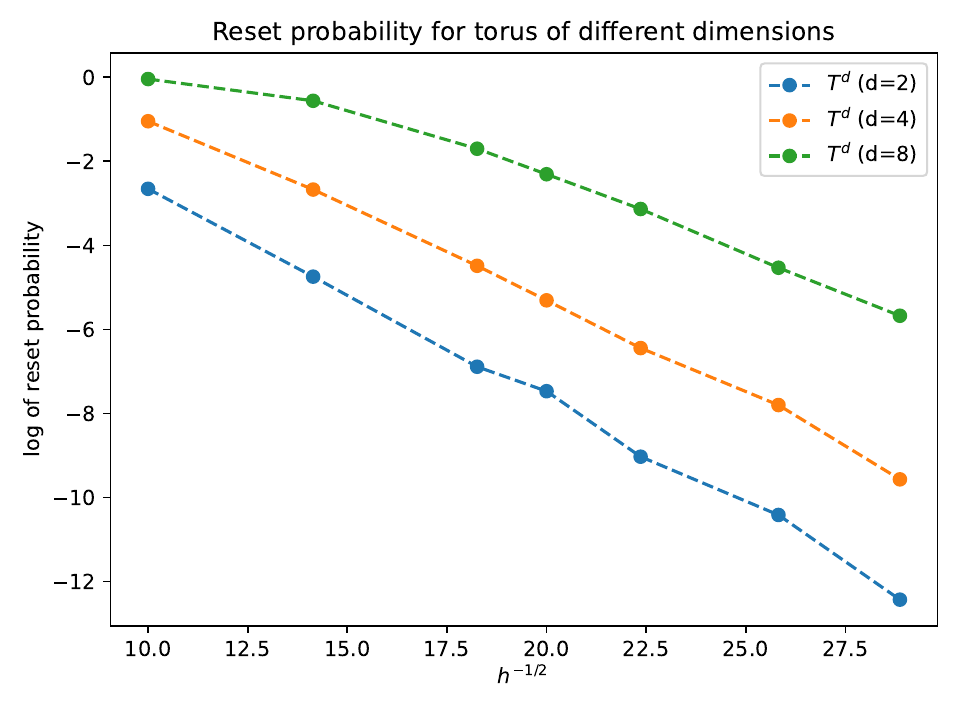}
            \caption{}
            \label{fig:td}
    \end{subfigure}%
    
    \caption{Reset probabilities on spheres and tori. In \Cref{fig:exit2}, we examine the relationship between $h^{-1/2}$ and the log of the reset probability of \Cref{algorithm} on both sphere $\mathbb S^2$ and torus $\mathbb T^2$ under the reset rules of \Cref{algorithm}. In both cases, we see that the reset probability decays exponentially, confirming the conclusion in \eqref{eqn: sum BM exit prob}. In \Cref{fig:td}, we examine the same statistics on high‑dimensional tori, and we find increasing $d$ only shifts the curves to the right but leaves the exponential decay rate in $h^{-1/2}$ unchanged.}\label{fig:exit}
\end{figure}

\begin{figure}[ht]
    \centering
    \includegraphics[width=0.5\linewidth]{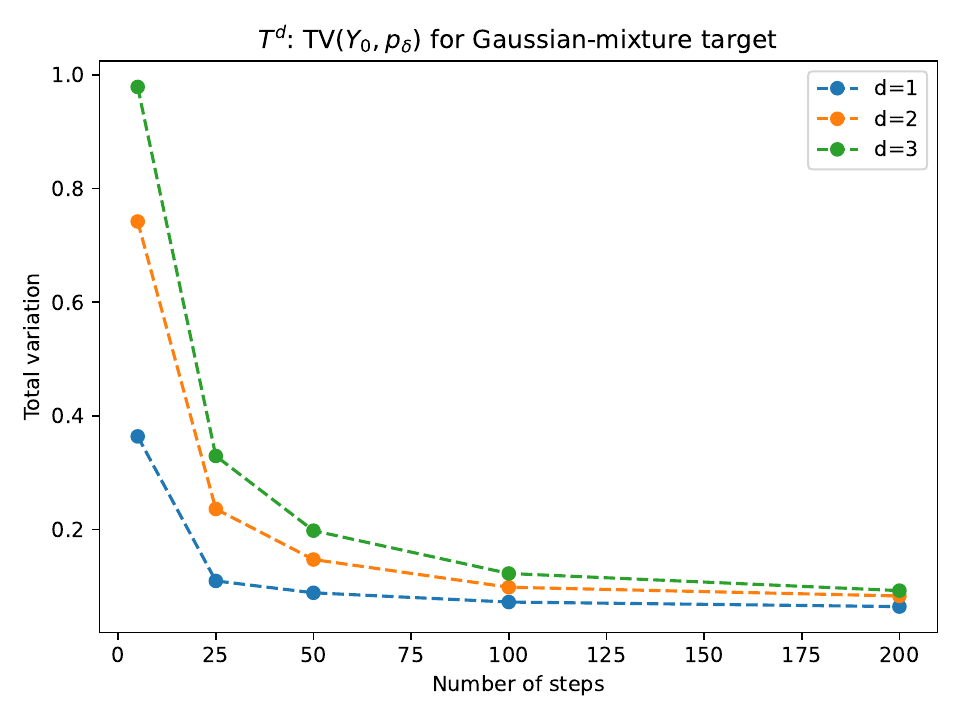}
    \caption{TV distance on $\mathbb T^d$ with a warped Gaussian‑mixture target. The total variation is estimated with a kernel density estimator.}
    \label{fig:TV}
\end{figure}

\paragraph{TV accuracy on $\mathbb{T}^d$ with warped Gaussian mixture.}
Finally, we assess the distributional accuracy in TV for a \emph{warped Gaussian mixture} target on $\mathbb{T}^d$, $d\in\{1,2,3\}$. Here, the warped Gaussian distribution is defined to be the push-forward of the Gaussian distribution by the universal covering $\mathbb R^d \to \mathbb{T}^d$ given by $(x_1, \cdots, x_d) \mapsto (\mathrm e^{i 2\pi x_1}, \cdots, \mathrm e^{i 2\pi x_d})$, and warped Gaussian mixture is similarly the push-forward of Gaussian mixture in $\mathbb R^d$. The result is depicted in Figure~\ref{fig:TV}, which confirms that the total variation decays fast with the increase of the number of steps.

\section{Conclusion}
We developed a discrete-time theory for Riemannian diffusion models showing that a polynomial stepsize suffices for TV-accurate sampling under mild geometric conditions. In particular, our results show that choosing a stepsize polynomially small in manifold parameters achieves any prescribed TV target without exponential blow-ups in dimension or curvature. This complements prior Wasserstein-type guarantees which require exponentially many steps. Several important future directions remain open.

\begin{itemize}

\item {\em Sharper bounds.} For simplicity, we did not attempt to establish sharp bounds for the error terms in our analysis, and it is likely that the degree of the polynomial in the  bound could be improved significantly by refining our analysis, and some of the polynomial dependencies can be improved to logarithmic ones \citep{benton2024nearly, li2024improved}.

\item {\em Analysis of deterministic samplers.} We focused on DDPM-style stochastic samplers in our analysis. For practical purpose, it is also tempting to develop an analogous theory for DDIM-style deterministic samplers \citep{song2021denoising, li2024nonasymptotic}.

\item {\em Conditional sampling.} Our theory was for unconditional diffusion models. Applications like solving inverse problems require conditional sampling, which calls for both new algorithm design and new theoretical analysis \citep{xu2024provably}.
\end{itemize} 
\bibliography{bib_diffusion}

@inproceedings{
bortoli2022riemannian,
title={Riemannian Score-Based Generative Modelling},
author={Valentin {De Bortoli} and Emile Mathieu and Michael John Hutchinson and James Thornton and Yee Whye Teh and Arnaud Doucet},
booktitle={Advances in Neural Information Processing Systems},
editor={Alice H. Oh and Alekh Agarwal and Danielle Belgrave and Kyunghyun Cho},
year={2022}
}

@article{beyler2025convergence,
  title={Convergence of Deterministic and Stochastic Diffusion-Model Samplers: A Simple Analysis in {W}asserstein Distance},
  author={Beyler, Eliot and Bach, Francis},
  journal={arXiv preprint arXiv:2508.03210},
  year={2025}
}

@inproceedings{li2024improved,
  title={Improved convergence rate for diffusion probabilistic models},
  author={Li, Gen and Jiao, Yuchen},
  booktitle={The Thirteenth International Conference on Learning Representations},
  year={2024}
}

@article{han2016upper,
  title={An upper bound for Hessian matrices of positive solutions of heat equations},
  author={Han, Qing and Zhang, Qi S},
  journal={The Journal of Geometric Analysis},
  volume={26},
  number={2},
  pages={715--749},
  year={2016},
  publisher={Springer}
}

@book{revuz2013continuous,
  title={Continuous martingales and Brownian motion},
  author={Revuz, Daniel and Yor, Marc},
  volume={293},
  year={2013},
  publisher={Springer Science \& Business Media}
}

@article{hamilton1993matrix,
  title={Matrix Harnack estimate for the heat equation},
  author={Hamilton, Richard S},
  journal={Communications in analysis and geometry},
  volume={1},
  number={1},
  pages={113--126},
  year={1993},
  publisher={International Press of Boston}
}

@book{gray2003tubes,
  title={Tubes},
  author={Gray, Alfred},
  volume={221},
  year={2003},
  publisher={Springer Science \& Business Media}
}

@article{de2021diffusion,
  title={Diffusion schr{\"o}dinger bridge with applications to score-based generative modeling},
  author={De Bortoli, Valentin and Thornton, James and Heng, Jeremy and Doucet, Arnaud},
  journal={Advances in neural information processing systems},
  volume={34},
  pages={17695--17709},
  year={2021}
}

@article{hyvarinen2005estimation,
  title={Estimation of non-normalized statistical models by score matching},
  author={Hyv{\"a}rinen, Aapo and Dayan, Peter},
  journal={Journal of Machine Learning Research},
  volume={6},
  number={4},
  year={2005}
}

@article{urakawa2006convergence,
  title={Convergence rates to equilibrium of the heat kernels on compact Riemannian manifolds},
  author={Urakawa, Hajime},
  journal={Indiana University mathematics journal},
  pages={259--288},
  year={2006},
  publisher={JSTOR}
}

@misc{cheng2023theoryalgorithmsdiffusionprocesses,
      title={Theory and Algorithms for Diffusion Processes on Riemannian Manifolds}, 
      author={Xiang Cheng and Jingzhao Zhang and Suvrit Sra},
      year={2023},
      eprint={2204.13665},
      archivePrefix={arXiv},
      primaryClass={math.PR}
}

@book{hsu2002stochastic,
  title={Stochastic Analysis on Manifolds},
  author={Hsu, E.P.},
  isbn={9780821808023},
  lccn={2001046052},
  series={Graduate studies in mathematics},
  year={2002},
  publisher={American Mathematical Society}
}

@article{Li-Yau86,
author = {Li, Peter and Yau, Shing},
year = {1986},
month = {07},
pages = {153-201},
title = {On the parabolic kernel of the Schödinger operator},
volume = {156},
journal = {Acta Mathematica},
doi = {10.1007/BF02399203}
}

@book{polyanskiy2025information,
  title={Information theory: From coding to learning},
  author={Polyanskiy, Yury and Wu, Yihong},
  year={2025},
  publisher={Cambridge university press}
}

@article{xu2024provably,
  title={Provably robust score-based diffusion posterior sampling for plug-and-play image reconstruction},
  author={Xu, Xingyu and Chi, Yuejie},
  journal={Advances in Neural Information Processing Systems},
  volume={37},
  pages={36148--36184},
  year={2024}
}

@article{Li09,
title = {Differential Harnack inequalities on Riemannian manifolds I: Linear heat equation},
journal = {Advances in Mathematics},
volume = {226},
number = {5},
pages = {4456-4491},
year = {2011},
issn = {0001-8708},
doi = {https://doi.org/10.1016/j.aim.2010.12.009},
author = {Junfang Li and Xiangjin Xu}
}

@book{berline2003heat,
  title={Heat kernels and Dirac operators},
  author={Berline, Nicole and Getzler, Ezra and Vergne, Michele},
  year={2003},
  publisher={Springer Science \& Business Media}
}

@article{fishman2023diffusion,
  title={Diffusion models for constrained domains},
  author={Fishman, Nic and Klarner, Leo and De Bortoli, Valentin and Mathieu, Emile and Hutchinson, Michael},
  journal={arXiv preprint arXiv:2304.05364},
  year={2023}
}

@article{liu2023mirror,
  title={Mirror diffusion models for constrained and watermarked generation},
  author={Liu, Guan-Horng and Chen, Tianrong and Theodorou, Evangelos and Tao, Molei},
  journal={Advances in Neural Information Processing Systems},
  volume={36},
  pages={42898--42917},
  year={2023}
}

@article{lou2023scaling,
  title={Scaling {R}iemannian diffusion models},
  author={Lou, Aaron and Xu, Minkai and Farris, Adam and Ermon, Stefano},
  journal={Advances in Neural Information Processing Systems},
  volume={36},
  pages={80291--80305},
  year={2023}
}

@article{huang2022riemannian,
  title={Riemannian diffusion models},
  author={Huang, Chin-Wei and Aghajohari, Milad and Bose, Joey and Panangaden, Prakash and Courville, Aaron C},
  journal={Advances in Neural Information Processing Systems},
  volume={35},
  pages={2750--2761},
  year={2022}
}

@article{Schoen94,
title = {Lectures on Differential Geometry},
author = {Schoen, Richard M. and and Yau, Shing-Tung},
year = {1994},
volume = {1},
journal = {Conference proceedings and lecture notes in geometry and topology},
publisher = {International Press}
}

@inproceedings{Song2019,
  title={Generative Modeling by Estimating Gradients of the Data Distribution},
  author={Yang Song and Stefano Ermon},
  booktitle={Neural Information Processing Systems},
  year={2019}
}

@inproceedings{Ho2020,
 author = {Ho, Jonathan and Jain, Ajay and Abbeel, Pieter},
 booktitle = {Advances in Neural Information Processing Systems},
 editor = {H. Larochelle and M. Ranzato and R. Hadsell and M.F. Balcan and H. Lin},
 pages = {6840--6851},
 publisher = {Curran Associates, Inc.},
 title = {Denoising Diffusion Probabilistic Models},
 volume = {33},
 year = {2020}
}

@inproceedings{
dhariwal2021,
title={Diffusion Models Beat {GAN}s on Image Synthesis},
author={Prafulla Dhariwal and Alexander Quinn Nichol},
booktitle={Advances in Neural Information Processing Systems},
editor={A. Beygelzimer and Y. Dauphin and P. Liang and J. Wortman Vaughan},
year={2021}
}

@inproceedings{srinivasan2024fast,
  title={Fast sampling from constrained spaces using the {M}etropolis-adjusted Mirror {L}angevin algorithm},
  author={Srinivasan, Vishwak and Wibisono, Andre and Wilson, Ashia},
  booktitle={The Thirty Seventh Annual Conference on Learning Theory},
  pages={4593--4635},
  year={2024},
  organization={PMLR}
}

@article{ahn2021efficient,
  title={Efficient constrained sampling via the mirror-{L}angevin algorithm},
  author={Ahn, Kwangjun and Chewi, Sinho},
  journal={Advances in Neural Information Processing Systems},
  volume={34},
  pages={28405--28418},
  year={2021}
}

@inproceedings{bharath2025,
title={Sampling and estimation on manifolds using the Langevin diffusion}, 
author={Karthik Bharath and Alexander Lewis and Akash Sharma and Michael V Tretyakov},
booktitle={Advances in Neural Information Processing Systems},
year={2025},
}

@article{chen2023score,
  title={Sampling is as easy as learning the score: theory for diffusion models with minimal data assumptions},
  author={Chen, Sitan and Chewi, Sinho and Li, Jerry and Li, Yuanzhi and Salim, Adil and Zhang, Anru R.},
  journal={arXiv preprint arXiv:2209.11215},
  year={2023}
}

@article{liang2025low,
  title={Low-dimensional adaptation of diffusion models: Convergence in total variation},
  author={Liang, Jiadong and Huang, Zhihan and Chen, Yuxin},
  journal={arXiv preprint arXiv:2501.12982},
  year={2025}
}

@article{potaptchik2024linear,
  title={Linear convergence of diffusion models under the manifold hypothesis},
  author={Potaptchik, Peter and Azangulov, Iskander and Deligiannidis, George},
  journal={arXiv preprint arXiv:2410.09046},
  year={2024}
}

@article{li2024adapting,
  title={Adapting to unknown low-dimensional structures in score-based diffusion models},
  author={Li, Gen and Yan, Yuling},
  journal={Advances in Neural Information Processing Systems},
  volume={37},
  pages={126297--126331},
  year={2024}
}

@article{huang2024denoising,
  title={Denoising diffusion probabilistic models are optimally adaptive to unknown low dimensionality},
  author={Huang, Zhihan and Wei, Yuting and Chen, Yuxin},
  journal={arXiv preprint arXiv:2410.18784},
  year={2024}
}

@article{cheng2022efficient,
  title={Efficient sampling on {R}iemannian manifolds via {L}angevin {MCMC}},
  author={Cheng, Xiang and Zhang, Jingzhao and Sra, Suvrit},
  journal={Advances in Neural Information Processing Systems},
  volume={35},
  pages={5995--6006},
  year={2022}
}

@article{guan2025riemannian,
  title={Riemannian Proximal Sampler for High-accuracy Sampling on Manifolds},
  author={Guan, Yunrui and Balasubramanian, Krishnakumar and Ma, Shiqian},
  journal={arXiv preprint arXiv:2502.07265},
  year={2025}
}

@inproceedings{lee2023general,
  title={Convergence of score-based generative modeling for general data distributions},
  author={Lee, Holden and Lu, Jianfeng and Tan, Yixin},
  booktitle={Proceedings of the 34th International Conference on Algorithmic Learning Theory},
  series={PMLR},
  volume={201},
  pages={946--985},
  year={2023}
}

@book{jost2017riemannian,
  author    = {J{\"u}rgen Jost},
  title     = {Riemannian Geometry and Geometric Analysis},
  edition   = {7},
  series    = {Universitext},
  year      = {2017},
  publisher = {Springer},
  address   = {Cham},
  doi       = {10.1007/978-3-319-61860-9},
  isbn      = {978-3-319-61859-3}
}

@book{petersen2006riemannian,
  author    = {Peter Petersen},
  title     = {Riemannian Geometry},
  edition   = {2},
  series    = {Graduate Texts in Mathematics},
  volume    = {171},
  year      = {2006},
  publisher = {Springer},
  address   = {New York},
  doi       = {10.1007/978-0-387-29403-2},
  isbn      = {978-0-387-29246-5}
}

@InProceedings{Dickstein15,
  title = 	 {Deep Unsupervised Learning using Nonequilibrium Thermodynamics},
  author = 	 {Sohl-Dickstein, Jascha and Weiss, Eric and Maheswaranathan, Niru and Ganguli, Surya},
  booktitle = 	 {Proceedings of the 32nd International Conference on Machine Learning},
  pages = 	 {2256--2265},
  year = 	 {2015},
  editor = 	 {Bach, Francis and Blei, David},
  volume = 	 {37},
  series = 	 {Proceedings of Machine Learning Research},
  address = 	 {Lille, France},
  month = 	 {07--09 Jul},
  publisher =    {PMLR}
}

@article{ANDERSON1982313,
title = {Reverse-time diffusion equation models},
journal = {Stochastic Processes and their Applications},
volume = {12},
number = {3},
pages = {313-326},
year = {1982},
issn = {0304-4149},
doi = {https://doi.org/10.1016/0304-4149(82)90051-5},
author = {Brian D.O. Anderson}
}

@article{Haussmann1986TIMERO,
  title={TIME REVERSAL OF DIFFUSIONS},
  author={Ulrich G. Haussmann and {\'E}tienne Pardoux},
  journal={Annals of Probability},
  year={1986},
  volume={14},
  pages={1188-1205}
}

@inproceedings{li2024nonasymptotic,
  title={Towards Non-Asymptotic Convergence for Diffusion-Based Generative Models},
  author={Li, Gen and Wei, Yuting and Chen, Yuxin and Chi, Yuejie},
  booktitle={International Conference on Learning Representations (ICLR)},
  year={2024}
}

@inproceedings{
song2021denoising,
title={Denoising Diffusion Implicit Models},
author={Jiaming Song and Chenlin Meng and Stefano Ermon},
booktitle={International Conference on Learning Representations},
year={2021}
}

@article{Piggott16,
author = {Piggott, M. and Solo, V.},
year = {2016},
month = {08},
pages = {2490-2516},
title = {Geometric Euler--Maruyama Schemes for Stochastic Differential Equations in {$SO(n)$} and {$SE(n)$}},
volume = {54},
journal = {SIAM Journal on Numerical Analysis},
doi = {10.1137/15M1019726}}

@article{Muniz22,
author = {Muniz, Michelle and Ehrhardt, Matthias and Günther, Michael and Winkler, Renate},
year = {2022},
month = {01},
pages = {},
title = {Higher strong order methods for linear Itô SDEs on matrix Lie groups},
volume = {62},
journal = {BIT Numerical Mathematics},
doi = {10.1007/s10543-021-00905-9}
}

@inproceedings{
benton2024nearly,
title={Nearly $d$-Linear Convergence Bounds for Diffusion Models via Stochastic Localization},
author={Joe Benton and Valentin De Bortoli and Arnaud Doucet and George Deligiannidis},
booktitle={The Twelfth International Conference on Learning Representations},
year={2024}
}

@article{liang2024nonasymptotic,
  title={Non-asymptotic Convergence of Discrete-time Diffusion Models: New Approach and Improved Rate},
  author={Liang, Yang and Ju, Peng and Liang, Yingbin and Shroff, Ness},
  journal={arXiv preprint arXiv:2402.13901},
  year={2024}
}

@inproceedings{
li2025odt,
title={O(d/T) Convergence Theory for Diffusion Probabilistic Models under Minimal Assumptions},
author={Gen Li and Yuling Yan},
booktitle={The Thirteenth International Conference on Learning Representations},
year={2025}
}

@article{Girolami11,
author = {Girolami, Mark and Calderhead, Ben},
year = {2011},
month = {03},
pages = {123 - 214},
title = {Riemann Manifold Langevin and Hamiltonian Monte Carlo Methods},
volume = {73},
journal = {Journal of the Royal Statistical Society: Series B (Statistical Methodology)},
doi = {10.1111/j.1467-9868.2010.00765.x}
}

@article{Gatmiry2022ConvergenceOT,
  title={Convergence of the Riemannian Langevin Algorithm},
  author={Khashayar Gatmiry and Santosh S. Vempala},
  journal={ArXiv},
  year={2022},
  volume={abs/2204.10818}
}

@article{li2023riemannianlangevinalgorithmsolving,
      title={Riemannian Langevin Algorithm for Solving Semidefinite Programs}, 
      author={Mufan Bill Li and Murat A. Erdogdu},
      year={2023},
      journal={Bernoulli},
      volume = {29},
      pages = {3093 – 3113}
}
\bibliographystyle{apalike} 

\appendix
\section{Preliminaries}
We first introduce some tools we use in the rest of the proof. 
\begin{lemma}[Pinsker's inequality, \citet{polyanskiy2025information}]
For any two probability distributions $p, q$ on $\cM$, we have
\[
\TV(p, q) \le \sqrt{2\KL(p ~\|~ q)}.
\]
\end{lemma}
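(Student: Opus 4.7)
The plan is to follow the standard textbook route: reduce the statement to the binary (Bernoulli) case by a data-processing argument, and then prove the resulting scalar inequality by elementary calculus. The only convention-checking needed is that the paper defines $\TV(p,q) = \int_{\cM} |\rmd p - \rmd q|$, which is twice the supremum-over-measurable-sets definition, and this is exactly what produces the constant $2$ under the square root on the right-hand side.

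For the reduction, I would pick a common dominating measure $\nu$ (e.g.\ $\nu = p+q$), let $f = \rmd p/\rmd\nu$ and $g = \rmd q/\rmd\nu$, and set $A = \{f \ge g\}$. A direct computation from the Hahn--Jordan decomposition gives
\[
\TV(p,q) \;=\; \int_{\cM} |f-g|\, \rmd\nu \;=\; 2\bigl(p(A) - q(A)\bigr).
\]
Applying the data-processing inequality to the two-point partition $\{A,A^c\}$, one obtains
\[
\KL(p \,\|\, q) \;\ge\; d_{\mathsf{KL}}\bigl(p(A)\,\|\,q(A)\bigr),
\]
where $d_{\mathsf{KL}}(a\|b) := a\log(a/b) + (1-a)\log\bigl((1-a)/(1-b)\bigr)$ is the binary KL divergence. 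This reduces the claim to showing $\bigl(a-b\bigr)^2 \le \tfrac{1}{2}\,d_{\mathsf{KL}}(a\|b)$ for $a,b \in (0,1)$, since then $\TV(p,q)^2 = 4(p(A)-q(A))^2 \le 2\,d_{\mathsf{KL}}(p(A)\|q(A)) \le 2\KL(p\|q)$.

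To establish the binary inequality, fix $b \in (0,1)$ and define $\varphi(a) = d_{\mathsf{KL}}(a\|b) - 2(a-b)^2$ on $(0,1)$. I would verify $\varphi(b)=0$ and $\varphi'(b)=0$ by direct differentiation, and then compute
\[
\varphi''(a) \;=\; \frac{1}{a} + \frac{1}{1-a} - 4 \;=\; \frac{1}{a(1-a)} - 4 \;\ge\; 0,
\]
using $a(1-a)\le 1/4$. Since $\varphi$ is convex on $(0,1)$ with a critical point at $a=b$ where the value is zero, $\varphi \ge 0$ throughout, which gives the binary Pinsker inequality.

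I do not anticipate a genuine obstacle here; the only item to be slightly careful about is the degenerate case where $p\not\ll q$ (so $\KL=+\infty$, and the inequality is vacuous) and the boundary cases $b\in\{0,1\}$, which can be handled by a continuity/limiting argument or by noting that such cases force $a=b$ on the relevant side. Assembling the two steps yields $\TV(p,q) \le \sqrt{2\KL(p\|q)}$ as stated.
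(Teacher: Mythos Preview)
Your proposal is correct and follows the standard textbook derivation of Pinsker's inequality. The paper does not actually give its own proof of this lemma; it merely states the result with a citation to \citet{polyanskiy2025information} and uses it as a black-box tool. So your argument is more detailed than what the paper provides, and there is nothing to compare beyond noting that the paper treats this as a known preliminary fact.
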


\begin{lemma}
\label{lem: moving vector}
Let $v$ be a vector field on $\cM$.
In a local coordinate on $\cM$, we have
\[
\partial_\alpha v_\beta = (\nabla_\alpha v)^\beta - \Gamma^\beta_{\alpha\gamma} v_\gamma.
\]
Here $\Gamma^\beta_{\alpha\gamma}$ is the Christoffel symbol, defined as
\[
\Gamma^\beta_{\alpha\gamma} = \frac12 g^{\beta\delta}(\partial_\alpha g_{\gamma\delta} + \partial_\gamma g_{\alpha\delta} - \partial_\delta g_{\alpha\gamma}).
\]
\end{lemma}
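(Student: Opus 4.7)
The plan is to unwind the coordinate expression of the Levi-Civita connection directly from its defining properties; no real analysis is needed. First I would expand $v$ in a coordinate frame as $v = v^\gamma \partial_\gamma$ (here identifying $v_\beta$ with $v^\beta$ under the Einstein-convention shorthand used throughout the paper). Applying the Leibniz rule for the connection $\nabla_\alpha$ then gives
\[
\nabla_\alpha v \;=\; (\partial_\alpha v^\beta)\,\partial_\beta \;+\; v^\gamma\, \nabla_\alpha \partial_\gamma,
\]
and by the very definition of the Christoffel symbols, $\nabla_\alpha \partial_\gamma = \Gamma^\beta_{\alpha\gamma}\,\partial_\beta$. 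Reading off the $\beta$-th component, $(\nabla_\alpha v)^\beta = \partial_\alpha v^\beta + \Gamma^\beta_{\alpha\gamma} v^\gamma$, which is exactly the first identity after rearrangement.

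For the explicit formula for $\Gamma^\beta_{\alpha\gamma}$, the key is the fundamental theorem of Riemannian geometry: $\nabla$ is the unique torsion-free metric-compatible connection. Metric compatibility $\nabla g = 0$ applied in coordinates yields
\[
\partial_\alpha g_{\gamma\delta} \;=\; \Gamma^\xi_{\alpha\gamma}\,g_{\xi\delta} \;+\; \Gamma^\xi_{\alpha\delta}\,g_{\gamma\xi},
\]
together with its two cyclic permutations in $(\alpha,\gamma,\delta)$. Adding the permutation with $\alpha$ and $\gamma$ swapped, then subtracting the one obtained by cycling $\delta$ to the front, and using the symmetry $\Gamma^\xi_{\alpha\gamma}=\Gamma^\xi_{\gamma\alpha}$ (torsion-free), all terms collapse except $2\,\Gamma^\xi_{\alpha\gamma}\,g_{\xi\delta}$. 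Contracting with the inverse metric $g^{\beta\delta}$ then isolates $\Gamma^\beta_{\alpha\gamma}$ and yields precisely the stated formula.

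I do not anticipate any obstacle: the statement is a textbook identity, included in the preliminaries solely to fix notation and the sign convention before it is invoked in the main arguments (for example when differentiating $\mathscr{S}_{t,x}$ and $\mathscr{S}^\star_{t,x}$, or when comparing the Euclidean and Laplace-Beltrami heat operators in normal coordinates). The only minor point worth flagging is the slight index inconsistency in the statement ($v_\beta$ on the left versus $(\nabla_\alpha v)^\beta$ on the right), which is harmless under the paper's convention of identifying vector and covector components via $g$.
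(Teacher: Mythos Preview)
Your proposal is correct and is exactly the standard derivation; the paper itself states this lemma without proof, treating it as a textbook identity, so there is nothing to compare against beyond noting that your argument is the canonical one. Your remark about the harmless index mismatch ($v_\beta$ versus $v^\beta$) is also apt and consistent with the paper's loose index conventions elsewhere.
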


\begin{lemma}[Metric distortion in normal coordinates]
\label{lem: local g distortion}
There exist coefficients $c, C>0$ polynomial in $d$ and constant in other parameters, such that the following holds.
Let $x \in \cM$. In the normal coordinates $(\partial_\alpha)$ at $x$, for any $y \in \cM$ such that $\rho(x, y) \le c/K$, we have
\begin{align*}
\|g(y) - I\| &\le C K d^2(x, y),
\\
\| \partial_\alpha g_{\beta \gamma} \| &\le C K \rho(x, y),
\\
\| \partial_{\alpha \beta} g_{\gamma\xi} \| &\le C K.
\end{align*}
\end{lemma}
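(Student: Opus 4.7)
The plan is to derive these bounds from the classical Jacobi-field expansion of the metric in Riemannian normal coordinates, combined with the uniform curvature assumptions (A1)--(A2). Recall that in normal coordinates $(\partial_\alpha)$ at $x$, one has $g_{\alpha\beta}(0)=\delta_{\alpha\beta}$ and $\partial_\gamma g_{\alpha\beta}(0)=0$ (equivalently, $\Gamma^\gamma_{\alpha\beta}(0)=0$), so a Taylor expansion in $y$ starts at second order. The standard Jacobi-field computation along the radial geodesic $s\mapsto \exp_x(sv)$, $v=y/|y|$, yields
\[
g_{\alpha\beta}(y) \;=\; \delta_{\alpha\beta} \;-\; \tfrac{1}{3}\, R_{\alpha\gamma\beta\delta}(x)\, y^{\gamma} y^{\delta} \;+\; R^{(3)}_{\alpha\beta}(y),
\]
where the remainder $R^{(3)}_{\alpha\beta}(y)$ can be expressed as an iterated integral of $\nabla \Rm$ and $\nabla^2 \Rm$ along the radial geodesic, and hence is bounded componentwise by $C_1(d)(\|\nabla \Rm\|_\infty |y|^3 + \|\nabla^2 \Rm\|_\infty |y|^4 + \|\Rm\|_\infty^2|y|^4)$ for some $C_1(d)$ polynomial in $d$.

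I would then differentiate this expansion in $y$ twice to obtain
\begin{align*}
\partial_\gamma g_{\alpha\beta}(y) &= -\tfrac{1}{3}\bigl(R_{\alpha\gamma\beta\delta} + R_{\alpha\delta\beta\gamma}\bigr)(x)\, y^{\delta} + O\!\bigl(\|\nabla \Rm\|_\infty |y|^2\bigr), \\
\partial_\gamma\partial_\delta g_{\alpha\beta}(y) &= -\tfrac{1}{3}\bigl(R_{\alpha\gamma\beta\delta} + R_{\alpha\delta\beta\gamma}\bigr)(x) + O\!\bigl(\|\nabla \Rm\|_\infty |y| + \|\nabla^2 \Rm\|_\infty |y|^2\bigr).
\end{align*}
Plugging in the uniform bounds $\max\{\|\Rm\|_\infty,\|\nabla \Rm\|_\infty,\|\nabla^2 \Rm\|_\infty\}\le K$ from Assumption~\ref{assumption:regularity} and imposing $\rho(x,y)\le c/K$ for a sufficiently small constant $c$ (polynomial in $d$) ensures the higher-order remainders are dominated by the explicit leading terms, which immediately gives the three claimed estimates, noting that $|y|=\rho(x,y)$ in normal coordinates and that $\rho(x,y)\le c/K<1/K$ keeps us inside the injectivity neighborhood so $\exp_x$ is a diffeomorphism and the coordinates are well-defined.

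The main obstacle is careful bookkeeping of the dimension dependence when converting the componentwise curvature bounds to matrix/operator norms: the curvature tensor has $O(d^4)$ components, and sums like $R_{\alpha\gamma\beta\delta}y^\gamma y^\delta$ can pick up extra dimension factors depending on whether $\|\cdot\|$ is interpreted as the Frobenius or spectral norm on the matrix $g-I$, versus componentwise maximum for $\partial g$ and $\partial^2 g$. This is ultimately just arithmetic but dictates the polynomial degree of $c$ and $C$ in $d$. A secondary technicality is giving a clean effective remainder bound from the Jacobi equation; this follows from Grönwall's inequality applied to the solution operator of $J''+R(\dot\gamma,J)\dot\gamma=0$ on an interval of length at most $c/K$, where $\|R\|\le K$ keeps the expansion convergent.
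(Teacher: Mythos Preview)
Your proposal is correct and follows essentially the same approach as the paper: both rely on the Jacobi-field expansion of the metric in normal coordinates and control the remainder via Gr\"onwall applied to the matrix Jacobi equation $\mathsf{J}''+\mathsf{R}\,\mathsf{J}=0$. The paper merely unpacks this more explicitly, writing $g(y)=\mathsf{J}(1)^\top\mathsf{J}(1)$ in a parallel frame and then iterating the Gr\"onwall argument three times (for $\mathsf{J}-I$, $\partial_\alpha\mathsf{J}$, and $\partial_{\alpha\beta}\mathsf{J}$) to get the three bounds, whereas you invoke the resulting Taylor expansion directly; the content is the same.
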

\begin{proof}
This is a quantitative version of the well-known Taylor expansion of $g$ in normal coordinates (cf. \citet[Proposition 1.28]{berline2003heat}):
\[
g_{\alpha\beta}(\exp_x(u)) = \delta_{\alpha\beta} - \frac13 R_{\alpha\gamma\beta\xi}(x) u^\gamma u^\xi + O\big( (\|\Rm\| + \|\nabla\Rm\|) \|u\|^3 \big), \quad \|u\| \le c/K.
\]

Let $(e_\alpha)_{\alpha=1}^d$ be an orthonormal basis of $T_x\cM$; normal coordinates at $x$ are defined by identifying a point
$\exp_x(z^\alpha e_\alpha)$ with its coordinate vector $(z^\alpha)$. Denote by $\partial_\alpha$ the coordinate vector fields.

\paragraph{Step I: Representation by Jacobi fields.}
Define the geodesic segment with \emph{unit parameter} $s\in[0,1]$:
\[
\gamma(s):=\exp_x(sv),\qquad \gamma(0)=x,\quad \gamma(1)=y,\quad \dot\gamma(s)=\frac{d}{ds}\gamma(s).
\]
Then $\nabla_s\dot\gamma=0$ and $|\dot\gamma(s)|\equiv |v|=r$.

Fix an orthonormal basis $(e_\alpha)_{\alpha=1}^d$ of $T_x\cM$ and parallel transport it
along $\gamma$ to obtain an orthonormal frame $(E_\alpha(s))$ along $\gamma$:
\[
\nabla_s E_\alpha(s)=0,\qquad E_\alpha(0)=e_\alpha.
\]
Let $J_\beta(s)$ be the Jacobi field along $\gamma$ corresponding to varying the initial
point in direction $e_\beta$ in the normal coordinate chart, i.e.
\[
J_\beta(s):=\rmd(\exp_x)_{sv}(s e_\beta)\in T_{\gamma(s)}\cM.
\]
Equivalently, $J_\beta$ is the unique Jacobi field solving
\begin{equation}\label{eq:Jacobi-s}
\nabla_s^2 J_\beta + \Rm(J_\beta,\dot\gamma)\dot\gamma=0,
\qquad
J_\beta(0)=0,\quad \nabla_s J_\beta(0)=e_\beta.
\end{equation}
Note that in normal coordinates, $\partial_\beta|_x=e_\beta$ and
the geodesic variation $\exp_x(s(v+\varepsilon e_\beta))$ yields \eqref{eq:Jacobi-s}.

Write $J_\beta(s)$ in the parallel frame:
\[
J_\beta(s)=\sum_{\alpha=1}^d \mathsf{J}_{\alpha\beta}(s)\,E_\alpha(s),
\]
and let $\mathsf{J}(s)\in\bbR^{d\times d}$ be the matrix with entries $\mathsf{J}_{\alpha\beta}(s)$.
Since $E_\alpha$ is parallel, \eqref{eq:Jacobi-s} becomes the \emph{matrix Jacobi equation}
\begin{equation}\label{eq:matrix-J}
\mathsf{J}''(s)+\mathsf{R}(s)\,\mathsf{J}(s)=0,
\qquad
\mathsf{J}(0)=0,\quad \mathsf{J}'(0)=I,
\end{equation}
where the curvature matrix $\mathsf{R}(s)$ is defined by
\[
\big(\mathsf{R}(s)\,u\big)_\alpha
:=\Big\langle \Rm\Big(\sum_\mu u_\mu E_\mu(s),\dot\gamma(s)\Big)\dot\gamma(s),\,E_\alpha(s)\Big\rangle.
\]
From $\|\Rm\|\le K$ and $|\dot\gamma|=r$ we have
\begin{equation}\label{eq:R-bound}
\|\mathsf{R}(s)\|\le K\,|\dot\gamma(s)|^2 = K r^2,\qquad s\in[0,1].
\end{equation}

In normal coordinates, the coordinate vector fields at $y=\gamma(1)$ are
\[
\partial_\beta|_y = \rmd(\exp_x)_v(e_\beta) = J_\beta(1).
\]
Since the frame at $s=1$ is orthonormal, the metric coefficients are
\[
g_{\beta\gamma}(y)=\langle \partial_\beta|_y,\partial_\gamma|_y\rangle
=\langle J_\beta(1),J_\gamma(1)\rangle
=\sum_{\alpha=1}^d \mathsf{J}_{\alpha\beta}(1)\,\mathsf{J}_{\alpha\gamma}(1)
=\big(\mathsf{J}(1)^\top\mathsf{J}(1)\big)_{\beta\gamma}.
\]
Hence, as matrices,
\begin{equation}\label{eq:g-J}
g(y)=\mathsf{J}(1)^\top\mathsf{J}(1).
\end{equation}

\paragraph{Step II: Control of $\mathsf{J}(1)-I$ via Gr\"onwall inequality.}
From \eqref{eq:matrix-J}, integrating twice and using $\mathsf{J}(0)=0,\ \mathsf{J}'(0)=I$,
we get the exact Volterra equation
\begin{equation}\label{eq:Volterra-J}
\mathsf{J}(s)=sI-\int_0^s (s-\tau)\,\mathsf{R}(\tau)\,\mathsf{J}(\tau)\,d\tau,\qquad s\in[0,1].
\end{equation}
Taking operator norms and using \eqref{eq:R-bound} gives for $s\in[0,1]$:
\[
\|\mathsf{J}(s)\|\le s + K r^2 \int_0^s (s-\tau)\,\|\mathsf{J}(\tau)\|\,d\tau.
\]
A standard Grönwall argument yields
\begin{equation}\label{eq:J-growth}
\|\mathsf{J}(s)\|\le C s\quad\text{and}\quad \|\mathsf{J}'(s)\|\le C
\qquad \text{for all } s\in[0,1],\ \text{provided } r\le c/\sqrt{K}.
\end{equation}
Now subtract $sI$ in \eqref{eq:Volterra-J}:
\[
\mathsf{J}(s)-sI=-\int_0^s (s-\tau)\,\mathsf{R}(\tau)\,\mathsf{J}(\tau)\,d\tau.
\]
Using \eqref{eq:R-bound} and \eqref{eq:J-growth},
\[
\|\mathsf{J}(1)-I\|
\le \int_0^1 (1-\tau)\,\|\mathsf{R}(\tau)\|\,\|\mathsf{J}(\tau)\|\,d\tau
\le C \int_0^1 (1-\tau)\,(K r^2)\,\tau\,d\tau
\le C K r^2.
\]
Combine with \eqref{eq:g-J}:
\[
g(y)-I=\big(\mathsf{J}(1)^\top\mathsf{J}(1)-I\big)
=\big(\mathsf{J}(1)-I\big)^\top + \big(\mathsf{J}(1)-I\big)
+ \big(\mathsf{J}(1)-I\big)^\top\big(\mathsf{J}(1)-I\big),
\]
so
\begin{equation}\label{eq:g-I}
\|g(y)-I\|\le C\|\mathsf{J}(1)-I\| \le C K r^2.
\end{equation}

\paragraph{Step III: Control of first derivatives.}
We first control $\partial_\alpha \mathsf{J}(1)$ as a function of the coordinate $v$.
Let $v\mapsto \mathsf{J}_v(s)$ denote the Jacobi matrix for the geodesic $\gamma_v(s)=\exp_x(sv)$.
Differentiate the ODE \eqref{eq:matrix-J} w.r.t. $v^\alpha$:
\begin{equation}\label{eq:dJ-ODE}
\partial_\alpha \mathsf{J}'' + \mathsf{R}\,\partial_\alpha\mathsf{J}
=-(\partial_\alpha \mathsf{R})\,\mathsf{J},
\qquad
\partial_\alpha\mathsf{J}(0)=0,\quad \partial_\alpha\mathsf{J}'(0)=0.
\end{equation}

\noindent\emph{Bound on $\partial_\alpha \mathsf{R}$.}
Recall $\mathsf{R}(s)$ represents the operator $u\mapsto \Rm(u,\dot\gamma)\dot\gamma$ in the parallel frame.
Varying $v$ changes both $\gamma$ and $\dot\gamma$; the corresponding variation field
$V_\alpha(s):=\partial_\alpha \gamma_v(s)$ along $\gamma$ is itself a Jacobi field with
$V_\alpha(0)=0,\ \nabla_s V_\alpha(0)=e_\alpha$, hence by the same estimate as \eqref{eq:J-growth}
\begin{equation}\label{eq:V-bounds}
\|V_\alpha(s)\|\le Cs,\qquad \|\nabla_s V_\alpha(s)\|\le C.
\end{equation}
Using the product rule on $\Rm(\cdot,\dot\gamma)\dot\gamma$ and our Assumption~\ref{assumption:regularity},
together with $|\dot\gamma|=r$ and \eqref{eq:V-bounds}, one obtains the uniform operator bound
\begin{equation}\label{eq:dR-bound}
\|\partial_\alpha \mathsf{R}(s)\|
\le C\Big(\|\nabla\Rm\|\,\|V_\alpha(s)\|\,|\dot\gamma|^2 + \|\Rm\|\,|\dot\gamma|\,\|\partial_\alpha \dot\gamma(s)\|\Big)
\le C\big(K\cdot s\cdot r^2 + K\cdot r\cdot 1\big)
\le C K r,
\end{equation}
for all $s\in[0,1]$ (since $s\le1$). Here we used $\partial_\alpha \dot\gamma = \nabla_s V_\alpha$.

Now solve \eqref{eq:dJ-ODE} by the same Duhamel principle:
integrating twice with zero initial data gives
\begin{equation}\label{eq:Volterra-dJ}
\partial_\alpha \mathsf{J}(s)
= -\int_0^s (s-\tau)\Big( \mathsf{R}(\tau)\,\partial_\alpha\mathsf{J}(\tau)
+(\partial_\alpha \mathsf{R})(\tau)\,\mathsf{J}(\tau)\Big)\,d\tau.
\end{equation}
Using \eqref{eq:R-bound}, \eqref{eq:dR-bound}, and \eqref{eq:J-growth}, we obtain
\[
\|\partial_\alpha \mathsf{J}(s)\|
\le K r^2\!\int_0^s (s-\tau)\,\|\partial_\alpha \mathsf{J}(\tau)\|\,d\tau
+ C K r\!\int_0^s (s-\tau)\,\|\mathsf{J}(\tau)\|\,d\tau
\le K r^2\!\int_0^s (s-\tau)\,\|\partial_\alpha \mathsf{J}(\tau)\|\,d\tau
+ C K r\,s^3.
\]
Apply the same Grönwall comparison as before (now with a forcing term $CKr\,s^3$)
to conclude, for $r\le c/\sqrt K$,
\begin{equation}\label{eq:dJ-bound}
\|\partial_\alpha \mathsf{J}(1)\|\le C K r.
\end{equation}

Finally differentiate $g=\mathsf{J}^\top\mathsf{J}$:
\[
\partial_\alpha g
=(\partial_\alpha \mathsf{J})^\top \mathsf{J} + \mathsf{J}^\top(\partial_\alpha \mathsf{J}),
\]
so by \eqref{eq:J-growth} (at $s=1$) and \eqref{eq:dJ-bound},
\begin{equation}\label{eq:dg-bound}
\|\partial_\alpha g(y)\|\le 2\|\partial_\alpha \mathsf{J}(1)\|\,\|\mathsf{J}(1)\|
\le C (K r)\cdot 1
\le C K r.
\end{equation}

\paragraph{Step IV: Control of second derivatives.}
Differentiate \eqref{eq:dJ-ODE} once more:
\begin{equation}\label{eq:d2J-ODE}
\partial_{\alpha\beta}\mathsf{J}'' + \mathsf{R}\,\partial_{\alpha\beta}\mathsf{J}
= -(\partial_{\alpha\beta}\mathsf{R})\,\mathsf{J}
-(\partial_\alpha\mathsf{R})\,\partial_\beta\mathsf{J}
-(\partial_\beta\mathsf{R})\,\partial_\alpha\mathsf{J},
\end{equation}
with zero initial data at $s=0$.

\noindent \emph{Bound on $\partial_{\alpha\beta}\mathsf{R}$.}
Under Assumption~\ref{assumption:regularity}, $\partial_{\alpha\beta}\mathsf{R}$ can be bounded \emph{uniformly by $C K$}
on $[0,1]$ as follows: expanding the second parameter derivative of $\Rm(\cdot,\dot\gamma)\dot\gamma$
produces terms of the schematic form
\[
(\nabla\Rm)(V)\cdot \dot\gamma\cdot (\partial \dot\gamma),
\qquad
\Rm(\cdot,\partial\dot\gamma)\cdot(\partial\dot\gamma),
\qquad
(\nabla\Rm)(\partial V)\cdot \dot\gamma\cdot \dot\gamma,
\]
and also terms involving $\nabla_s(\partial V)$, all of which are controlled using
$\|V\|\lesssim 1$, $\|\nabla_s V\|\lesssim 1$ and the fact that each appearance of $\dot\gamma$
contributes a factor $r$. Concretely, one shows (using \eqref{eq:V-bounds} for both $V_\alpha,V_\beta$
and the same Jacobi estimates for their derivatives) that
\begin{equation}\label{eq:d2R-bound}
\|\partial_{\alpha\beta}\mathsf{R}(s)\|\le C K
\qquad\text{for all } s\in[0,1].
\end{equation}

Now apply Duhamel's principle to \eqref{eq:d2J-ODE} with zero initial data:
\[
\partial_{\alpha\beta}\mathsf{J}(s)
= -\int_0^s (s-\tau)\Big(\mathsf{R}\,\partial_{\alpha\beta}\mathsf{J}
+(\partial_{\alpha\beta}\mathsf{R})\,\mathsf{J}
+(\partial_\alpha\mathsf{R})\,\partial_\beta\mathsf{J}
+(\partial_\beta\mathsf{R})\,\partial_\alpha\mathsf{J}\Big)(\tau)\,d\tau.
\]
Take norms and use \eqref{eq:R-bound}, \eqref{eq:d2R-bound}, \eqref{eq:J-growth},
\eqref{eq:dR-bound}, \eqref{eq:dJ-bound}:
\[
\|\partial_{\alpha\beta}\mathsf{J}(s)\|
\le K r^2\!\int_0^s (s-\tau)\,\|\partial_{\alpha\beta}\mathsf{J}(\tau)\|\,d\tau
+ C K\!\int_0^s (s-\tau)\,\|\mathsf{J}(\tau)\|\,d\tau
+ C (K r)(K r)\!\int_0^s (s-\tau)\,d\tau.
\]
Since $\|\mathsf{J}(\tau)\|\le C\tau$, the second integral is bounded by $ C K s^3$, and the third is
bounded by $ C K^2 r^2 s^2 \le C K s^2$ provided $r\le c/\sqrt K$.
Thus for $s\le1$,
\[
\|\partial_{\alpha\beta}\mathsf{J}(s)\|
\le K r^2\!\int_0^s (s-\tau)\,\|\partial_{\alpha\beta}\mathsf{J}(\tau)\|\,d\tau
+ C K.
\]
Applying Grönwall argument once more yields
\begin{equation}\label{eq:d2J-bound}
\|\partial_{\alpha\beta}\mathsf{J}(1)\|\le C K.
\end{equation}

Finally differentiate $g=\mathsf{J}^\top\mathsf{J}$ twice:
\[
\partial_{\alpha\beta} g
=(\partial_{\alpha\beta}\mathsf{J})^\top\mathsf{J}+\mathsf{J}^\top(\partial_{\alpha\beta}\mathsf{J})
+(\partial_\alpha\mathsf{J})^\top(\partial_\beta\mathsf{J})+(\partial_\beta\mathsf{J})^\top(\partial_\alpha\mathsf{J}).
\]
Hence by \eqref{eq:J-growth}, \eqref{eq:dJ-bound}, \eqref{eq:d2J-bound} (and $K^2 r^2\le C K$ for
$r\le c/\sqrt K$),
\begin{equation}\label{eq:d2g-bound}
\|\partial_{\alpha\beta} g(y)\|
\le C\|\partial_{\alpha\beta}\mathsf{J}(1)\|\cdot\|\mathsf{J}(1)\|
+ C\|\partial_\alpha\mathsf{J}(1)\|\,\|\partial_\beta\mathsf{J}(1)\|
\le C K + C (K r)^2
\le C K.
\end{equation}

This completes the proof.
\end{proof}

\begin{lemma}
\label{lem: jacobian}
Fix $x\in\mathcal M$. Define 
\[
J(x,u)\ :=\ \big|\det \rmd \exp_x (u)\big|\ =\ \sqrt{\det g_{ij}(\exp_x u)}.
\]
There exist universal constants $c, C>0$, such that for $u\in T_x\mathcal M$ with $\|u\|\le \frac{c}{Kd}$, we have the following bound on $J(x,u)$:
\begin{equation}\label{eq:J-cubic-pointwise}
\Big| J(x,u) - 1 \Big| \le CdK\|u\|^2.
\end{equation}
In particular, we have
\[
\frac12 \le J(x, u) \le 2, \quad \|u\| \le \frac{c}{Kd}.
\]
\end{lemma}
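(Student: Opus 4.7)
I would reduce the statement to a determinant perturbation estimate. The key input is Lemma~\ref{lem: local g distortion}: in normal coordinates at $x$, writing $y=\exp_x u$ and $A:=g(y)-I$, we have $\|A\|\le CK\|u\|^2$ whenever $\|u\|\le c/K$. (Equivalently, one can take $A=\mathsf{J}(1)-I$ with the bound $\|\mathsf{J}(1)-I\|\le CK\|u\|^2$ from the Jacobi-field representation used in the proof of that lemma, since $J(x,u)=|\det\rmd\exp_x(u)|=|\det\mathsf{J}(1)|=\sqrt{\det g_{ij}(y)}$.) It therefore suffices to estimate $\sqrt{\det(I+A)}-1$ under $\|A\|\le CK\|u\|^2$.

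The next step is to pass to the logarithm. Choosing the universal constant $c$ in the hypothesis $\|u\|\le c/(Kd)$ small enough, we have $\|A\|\le CK\|u\|^2\le Cc^2/(Kd^2)\le 1/2$, so every eigenvalue $\lambda_i$ of $A$ lies in $[-1/2,1/2]$. Using $\log(1+\lambda)=\lambda+O(\lambda^2)$ on this interval,
\[
\log\det(I+A)\;=\;\sum_{i=1}^d \log(1+\lambda_i)\;=\;\mathrm{tr}(A)+R,\qquad |R|\le 2\sum_{i=1}^d \lambda_i^2\le 2d\|A\|^2.
\]
By Cauchy--Schwarz, $|\mathrm{tr}(A)|\le d\|A\|\le CdK\|u\|^2$, while the remainder satisfies $d\|A\|^2\le C^2\,dK^2\|u\|^4\le C'\,dK\|u\|^2\cdot (K\|u\|^2)$, which is bounded by $CdK\|u\|^2$ since $K\|u\|^2\le c^2/(Kd^2)\le c^2$ is tiny. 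Combining the two contributions, $|\log\det(I+A)|\le C''\,dK\|u\|^2$.

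Finally, I would exponentiate and take the square root: $J(x,u)=\exp\bigl(\tfrac12\log\det(I+A)\bigr)$. Since $\tfrac12|\log\det(I+A)|\le \tfrac12 C''dK\|u\|^2\le \tfrac12 C''c^2/(Kd)\le 1$ for $c$ small, the elementary inequality $|e^t-1|\le 2|t|$ on $|t|\le 1$ yields
\[
|J(x,u)-1|\;\le\; 2\cdot\tfrac12 |\log\det(I+A)|\;\le\; C\,dK\|u\|^2,
\]
which is the desired bound \eqref{eq:J-cubic-pointwise}. The second assertion $J(x,u)\in[1/2,2]$ follows by further shrinking $c$ so that $CdK\|u\|^2\le Cc^2/(Kd)\le 1/2$.

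\textbf{Main obstacle.} The argument is essentially mechanical once the metric distortion lemma is in place; the only subtlety worth flagging is the bookkeeping of the factor $d$. The pointwise metric distortion has no explicit $d$ dependence (it is $CK\|u\|^2$), but the determinant picks up one factor of $d$ through the trace, which is exactly the source of the $dK\|u\|^2$ scaling in the statement. The hypothesis $\|u\|\le c/(Kd)$ (stronger than the $c/K$ required by Lemma~\ref{lem: local g distortion}) is calibrated precisely so that the quadratic remainder $d\|A\|^2$ is negligible compared to the linear term $d\|A\|$ and the overall perturbation remains within the range where $\exp$ is $2$-Lipschitz.
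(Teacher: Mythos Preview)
Your proposal is correct and follows essentially the same approach as the paper: both invoke Lemma~\ref{lem: local g distortion} to get $\|g(\exp_x u)-I\|\le CK\|u\|^2$, then extract the factor $d$ via the trace in a determinant perturbation estimate. The paper states the determinant Taylor expansion directly (bounding $|\det(I+A)-1|$ by $Cd\|A\|$) and leaves the square root implicit, whereas you route through $\log\det$ and exponentiate; these are interchangeable elementary variants of the same computation.
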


\begin{proof}
Work in normal coordinates at $x$ so that $\exp_x: B_{\mathrm{euc}}(0, 1/K)\subset T_x\mathcal M\to B_{\geo}(x, 1/K)$
is a diffeomorphism and $g_{ij}(0)=\delta_{ij}$, $\Gamma_{ij}^k(0)=0$.

From Lemma~\ref{lem: local g distortion}, we know that $\|g(\exp_x u) - I\| \le CK\|u\|^2$. In the region $\|u\| \le \frac{c}{Kd}$, 
we have
\[
\|g(\exp_x u) - I\| \le \frac{c}{d}.
\]

Therefore, by Taylor expansion of determinants, we know
\begin{align*}
\left| \det g(\exp_x u) - 1 \right|
& = \left| \det (I + g(\exp_x u) - I) - 1\right|
\\
& \le C\tr(g(\exp_x u) - I)
\\
& \le Cd \cdot \|g(\exp_x u) - I\|
\\
& \le Cd \cdot CK\|u\|^2.
\end{align*}
This concludes the proof by adjusting $C$ if necessary.
\end{proof}

The metric distortion bound implies that geodesic is almost a straight line, in a sufficiently small normal neighborhood. The following quantitative bound shall be useful.
\begin{lemma}[Geodesics are almost straight in small balls]
\label{lem: straight geodesic}
There exist coefficients $c,C>0$ polynomial in $d$ and constant in other parameters, such that the following holds.
Fix any $x\in\mathcal M$ and let $0< r \le c / K$. 
Let $y, z \in B_x(r)$ and $\gamma$ be the unit-speed geodesic connecting $y$ to $z$.
Write
\[
y(s) \coloneqq \exp_x^{-1}(\gamma(s)) \in T_x\mathcal M \simeq \mathbb R^d
\]
for its representation in normal coordinates at $x$. Then:

\begin{enumerate}
\item[(i)] (Almost constant velocity)
\[
\sup_{s\in[0,\ell]} \bigl| \dot y(s) - \dot y(0) \bigr|
\;\le\; C K r^2 .
\]

\item[(ii)] (Almost linear trajectory)
\[
\sup_{s\in[0,\ell]} 
\bigl| y(s) - y(0) - s\,\dot y(0) \bigr|
\;\le\; C K r^3 .
\]
\end{enumerate}

In words, in normal coordinates at $x$, any geodesic segment contained in $B_x(r)$ deviates from the Euclidean line segment connecting its endpoints by at most $O(K r^3)$ in position and $O(K r^2)$ in direction.
\end{lemma}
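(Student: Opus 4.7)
The plan is to work entirely in normal coordinates centered at $x$, reduce both claims to a simple two-step integration of the geodesic equation, and absorb all Riemannian geometry into the Christoffel symbol bounds that follow from Lemma~\ref{lem: local g distortion}.

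First, I would fix $r \le c/K$ small enough that the open ball $B_x(3r)$ lies inside the injectivity neighborhood at $x$, so that $\exp_x$ is a diffeomorphism on the relevant region. By the triangle inequality, any point on the unit-speed geodesic $\gamma$ from $y$ to $z$ satisfies $\rho(x,\gamma(s)) \le \rho(x,y) + \rho(y,\gamma(s)) \le r + \ell \le 3r$, where $\ell = \rho(y,z) \le 2r$. Thus the entire trajectory $y(s) = \exp_x^{-1}(\gamma(s))$ remains in a coordinate ball of radius $O(r)$, and I can invoke Lemma~\ref{lem: local g distortion} throughout: $\|g - I\| \le CKr^2$ and $\|\partial_\alpha g_{\beta\gamma}\| \le CKr$. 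Combined with the definition of the Christoffel symbols in Lemma~\ref{lem: moving vector}, this yields the key uniform bound $|\Gamma^k_{ij}(y(s))| \le CKr$ along the curve.

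Second, I would use that $\gamma$ has unit speed in $g$, i.e.\ $g_{ij}(y(s))\dot y^i(s)\dot y^j(s) = 1$. Combined with $\|g - I\| \le CKr^2$, this gives $|\dot y(s)|_{\mathrm{euc}}^2 = 1 + O(Kr^2)$, so $|\dot y(s)|_{\mathrm{euc}} \le 2$ for $r$ sufficiently small. The geodesic equation in coordinates reads $\ddot y^k(s) = -\Gamma^k_{ij}(y(s))\dot y^i(s)\dot y^j(s)$, and the bounds above immediately give
\[
|\ddot y(s)| \;\le\; \|\Gamma\|\,|\dot y|_{\mathrm{euc}}^2 \;\le\; CKr.
\]
Claim (i) follows by a single integration: $|\dot y(s) - \dot y(0)| \le \int_0^s |\ddot y(\tau)|\,d\tau \le CKr\cdot s \le CKr\cdot \ell \le 2CKr^2$. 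Claim (ii) follows by a second integration:
\[
|y(s) - y(0) - s\dot y(0)| \;\le\; \int_0^s |\dot y(\tau) - \dot y(0)|\,d\tau \;\le\; CKr\cdot \tfrac{s^2}{2} \;\le\; 2CKr^3.
\]

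There is no serious obstacle here once Lemma~\ref{lem: local g distortion} is in hand; the main subtlety is purely bookkeeping, namely choosing $c$ small enough (polynomial in $d$) so that (a) the geodesic $\gamma$ is the unique minimizer between its endpoints and lies inside the normal neighborhood, (b) the polynomial-in-$d$ constants of Lemma~\ref{lem: local g distortion} can be applied uniformly along $\gamma$, and (c) the factor $|\dot y|_{\mathrm{euc}}^2 = 1 + O(Kr^2)$ is bounded by a universal constant. Absorbing these polynomial-in-$d$ losses into the constants $c, C$ produces the stated estimate.
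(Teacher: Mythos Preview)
Your proposal is correct and follows essentially the same argument as the paper's own proof: bound the Christoffel symbols via Lemma~\ref{lem: local g distortion}, use the unit-speed condition to get $|\dot y|_{\mathrm{euc}} \asymp 1$, deduce $|\ddot y| \le CKr$ from the geodesic equation, and integrate once and then twice. If anything, your version is slightly more careful in explicitly verifying that the geodesic stays inside the normal coordinate ball.
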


\begin{proof}
Work in normal coordinates at $x$. By bounded geometry and the choice of $r$, the metric coefficients satisfy, in view of Lemma~\ref{lem: local g distortion}, that
\[
\|g(y)-I\| \le C K |y|^2,
\qquad
\|\partial g(y)\| \le C K |y|,
\qquad |y|\le r,
\]
which implies the Christoffel symbols obey
\[
|\Gamma(y)| \le C K |y| \le C K r .
\]

The coordinate representation $y(s)$ of the geodesic satisfies the geodesic equation
\[
\ddot y^k(s) + \Gamma^k_{ij}(y(s))\,\dot y^i(s)\dot y^j(s)=0 .
\]
Since $\gamma$ is unit--speed and $g(y)$ is uniformly equivalent to the Euclidean metric on $|y|\le r$, we have $|\dot y(s)| \asymp 1$. Consequently,
\[
|\ddot y(s)| \le C K r \qquad \text{for all } s\in[0,\ell].
\]

Integrating once gives
\[
|\dot y(s)-\dot y(0)|
\le \int_0^s |\ddot y(u)|\,\rmd u
\le C K r\, s
\le C K r^2,
\]
proving (i). Integrating again yields
\[
|y(s)-y(0)-s\dot y(0)|
\le \int_0^s \int_0^u |\ddot y(w)|\,\rmd w\,\rmd u
\le C K r\, s^2
\le C K r^3,
\]
which proves (ii).
\end{proof}

We spell out the constants in a few classical inequalities in geometric analysis.

\begin{lemma}[{\citet[Thm.\ 4.6]{Schoen94}}]
\label{lem: li-yau ub}
Let $(\mathcal M,g)$ be a complete Riemannian manifold with $\mathrm{Ric}(\mathcal M)\ge -K$ for some $K\ge 0$. Let $H(x,y,t)$ be the heat kernel, i.e., the fundamental solution of $(\Delta - \frac{\partial}{\partial t})u(x,t) = 0$. Then, for every $\delta_{\mathrm{Sch}}>0$ and $\alpha>1$,
\[
H(t, x,y)\ \le\ C(\delta_{\mathrm{Sch}},d,\alpha)\,V_x\big(\sqrt{t}\big)^{-1/2} V_y\big(\sqrt t\big)^{-1/2}\,
\exp\!\left[-\frac{r^2(x,y)}{(4+\delta_{\mathrm{Sch}})t}+C_1\,\delta_{\mathrm{Sch}} K t\right],
\]
where $V_x(R)=\mu(B_x(R))$, $C(\delta_{\mathrm{Sch}},d,\alpha)=(1+\delta_{\mathrm{Sch}})^{d\alpha}\exp\!\big(\tfrac{1+\alpha}{\delta_{\mathrm{Sch}}}\big)$,
and $C_1=\frac{\alpha d}{\alpha-1}$.
\end{lemma}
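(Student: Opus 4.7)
The stated bound is the classical Li--Yau Gaussian upper bound on the heat kernel of a manifold with Ricci lower bound $\mathrm{Ric} \ge -K$, and the parameters $(\delta_{\mathrm{Sch}},\alpha)$ are precisely the tuning knobs of the underlying differential Harnack inequality. The plan is to combine three standard ingredients: (i) a Li--Yau differential Harnack estimate with parameter $\alpha>1$; (ii) a parabolic Harnack inequality obtained by integrating this estimate along a space--time geodesic; and (iii) an on-diagonal heat-kernel bound $H(t,x,x)\lesssim V_x(\sqrt t)^{-1}$, which is then promoted to the $V_x^{-1/2}V_y^{-1/2}$ prefactor via the semigroup identity and symmetry of $H$.

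First I would establish the Li--Yau differential Harnack. For any positive solution $u$ of $(\partial_t-\Delta)u=0$ with $\mathrm{Ric}\ge -K$, applying Bochner's formula together with a maximum-principle argument to the quantity $F=t\bigl(|\nabla\log u|^2-\alpha\,\partial_t\log u\bigr)$ yields
\[
\frac{|\nabla u|^2}{u^2}\;-\;\alpha\,\frac{\partial_t u}{u}\;\le\;\frac{d\alpha^2}{2t}\;+\;C\,\frac{\alpha}{\alpha-1}\,K,\qquad \alpha>1.
\]
The explicit curvature constant $C_1=\frac{\alpha d}{\alpha-1}$ in the statement is exactly the coefficient that drops out of this inequality. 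Next, I would integrate it along a constant-speed geodesic segment $\sigma\mapsto \gamma(\sigma)$ joining $x$ at time $s$ to $y$ at time $t>s$. Bounding $\frac{\rmd}{\rmd\sigma}\log u(\gamma(\sigma),\sigma)$ from below using the Harnack inequality and integrating in $\sigma$ gives the standard parabolic Harnack
\[
\log\frac{u(x,s)}{u(y,t)}\;\le\;\frac{d\alpha}{2}\log\frac{t}{s}\;+\;\frac{\alpha}{4(t-s)}\,r(x,y)^2\;+\;C_1\,\delta_{\mathrm{Sch}}\,K\,(t-s),
\]
where we identify $\delta_{\mathrm{Sch}}\sim\alpha-1$. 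After reparametrizing $(s,t)\mapsto (t,2t)$ and optimizing, the Gaussian exponent takes the desired form $-r(x,y)^2/\bigl((4+\delta_{\mathrm{Sch}})t\bigr)$ while the curvature correction becomes $C_1\delta_{\mathrm{Sch}}Kt$.

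Finally, I would pass from the Harnack to the claimed kernel bound. Applying the Harnack to $z\mapsto H(2t,z,y)$ and averaging over $z\in B_x(\sqrt t)$ yields
\[
H(2t,x,y)\;\le\;\frac{e^{(\cdot)}}{V_x(\sqrt t)}\int_{B_x(\sqrt t)} H(2t,z,y)\,\rmd\mu(z)\;\le\;\frac{e^{(\cdot)}}{V_x(\sqrt t)},
\]
since $H(2t,\cdot,y)$ integrates to $1$. Combining with the semigroup decomposition $H(2t,x,y)=\int H(t,x,z)H(t,z,y)\,\rmd\mu(z)$ and repeating the averaging at the $y$-end, a Cauchy--Schwarz step symmetrizes the on-diagonal estimate and produces the $V_x(\sqrt t)^{-1/2}V_y(\sqrt t)^{-1/2}$ prefactor. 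The factor $(1+\delta_{\mathrm{Sch}})^{d\alpha}\exp((1+\alpha)/\delta_{\mathrm{Sch}})$ arises as the product of the time-ratio term $(t/s)^{d\alpha/2}$ from the Harnack and the residual constants absorbed when one sends $\alpha\downarrow 1$. The main obstacle here is not conceptual but careful bookkeeping: extracting the explicit prefactor $(1+\delta_{\mathrm{Sch}})^{d\alpha}\exp((1+\alpha)/\delta_{\mathrm{Sch}})$ and the precise exponent $4+\delta_{\mathrm{Sch}}$ requires tracking where each factor of $\alpha-1$ appears in the Harnack integration and in the symmetrization step. Since the statement matches verbatim the version in Schoen--Yau's book, I would defer to that reference for the sharp constants rather than redo the tedious constant chasing.
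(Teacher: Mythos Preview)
The paper does not prove this lemma at all: it is stated as a direct citation of \citet[Thm.~4.6]{Schoen94} and used as a black box. Your proposal correctly identifies the standard Li--Yau strategy (differential Harnack $\to$ parabolic Harnack via path integration $\to$ on-diagonal bound $\to$ symmetrization) and then defers to the same reference for the explicit constants, which is exactly what the paper does---so your treatment is consistent with, and in fact more informative than, the paper's own handling.
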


To unleash the power of Lemma~\ref{lem: li-yau ub}, we need the following lower bound on volume of geodesic balls assuming bounded geometry.
\begin{lemma}[G\"unther's comparison theorem]
\label{lem: gunther}
Under Assumption~\ref{assumption:regularity}, we have
\[
V_x(r) \ge \frac{(2\pi)^{d/2}}{\Gamma(d/2)} \int_0^r \left(\frac{\sin(t\sqrt{K})}{\sqrt{K}}\right)^{d-1} \rmd t, \quad 0 \le r \le 1/K.
\]
Here $\Gamma$ is the Gamma function. In particular, when $r \le c/K$ for some small universal constant $c>0$, we have
\[
V_x(r) \ge \frac{\pi^{d/2}}{d\Gamma(d/2)} r^d \ge \frac{1}{d^{d/2}} r^d.
\]
\end{lemma}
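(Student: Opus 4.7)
The plan is to carry out the Jacobi-field comparison that underpins G\"unther's theorem, then perform a small-radius Taylor expansion for the second inequality.

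First, I would set up polar normal coordinates. Assumption (A1) gives $\mathsf{inj}(x)\ge 1/K\ge r$, so $\exp_x$ is a diffeomorphism from $B_{\mathrm{euc}}(0,r)\subset T_x\cM$ onto $B_x(r)$, and the Gauss lemma lets me write
\[
V_x(r)=\int_0^r\!\int_{S^{d-1}}\det\mathsf J(t;\theta)\,\rmd\sigma(\theta)\,\rmd t,
\]
where $\mathsf J(t;\theta)$ is the $(d-1)\times(d-1)$ normal Jacobi matrix along the radial geodesic $\gamma(t)=\exp_x(t\theta)$, expressed in a parallel orthonormal frame of $\dot\gamma^\perp$ and satisfying $\mathsf J''+\mathsf R\,\mathsf J=0$, $\mathsf J(0)=0$, $\mathsf J'(0)=I$. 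By Assumption (A2) and $|\dot\gamma|=1$, every eigenvalue of the tidal operator $\mathsf R(t)$ is a sectional curvature along $\dot\gamma$ and is therefore bounded by $\|\Rm\|_{L^\infty}\le K$; in particular $\mathsf R(t)\preceq K\,I$.

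Next, I would invoke Rauch's comparison theorem in matrix form (equivalently the scalar Riccati comparison for $\mathsf U=\mathsf J'\mathsf J^{-1}$ against the model space of constant sectional curvature $K$) to obtain
\[
\det\mathsf J(t;\theta)\;\ge\;\Bigl(\tfrac{\sin(t\sqrt K)}{\sqrt K}\Bigr)^{d-1},\qquad 0\le t\le \pi/\sqrt K.
\]
Since $K\ge 1$, we have $r\le 1/K\le \pi/\sqrt K$, so the bound is valid throughout $[0,r]$. Integrating in $\theta$ against the surface area $\Vol(S^{d-1})=\tfrac{2\pi^{d/2}}{\Gamma(d/2)}$ yields the stated integral inequality.

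Finally, for the small-$r$ consequence I would restrict to $r\le c/K$ with a small universal $c\le \pi/2$ so that $t\sqrt K\le c$ throughout, and use the elementary estimate $\sin(y)/y\ge 2/\pi$ on $[0,\pi/2]$ to conclude $(\sin(t\sqrt K)/\sqrt K)^{d-1}\ge (2t/\pi)^{d-1}$. Integrating yields $V_x(r)\gtrsim \Vol(S^{d-1})\,(2/\pi)^{d-1}r^d/d$, and Stirling's bound $\Gamma(d/2)\le (d/2)^{d/2}$ absorbs the remaining dimension-dependent constants into the clean estimate $V_x(r)\ge r^d/d^{d/2}$. The argument is essentially a packaging of classical comparison geometry, so no step is genuinely hard; the only point worth verifying is that the operator-norm bound $\|\Rm\|_{L^\infty}\le K$ implies the pointwise sectional-curvature upper bound $K$ needed for Rauch's theorem, which is immediate from $\mathrm{sec}(u\wedge v)=\langle\Rm(u,v)v,u\rangle$ for orthonormal $u,v$ and the definition of the operator norm.
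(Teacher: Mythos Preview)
Your proposal is correct and follows essentially the same route as the paper: both extract the sectional-curvature upper bound $K$ from $\|\Rm\|_{L^\infty}\le K$ via $\mathrm{sec}(u,v)=\Rm(u,v,u,v)$, invoke the classical G\"unther/Bishop comparison (the paper simply cites \citet[Theorem 3.17]{gray2003tubes} whereas you sketch the Rauch/Jacobi argument), and then finish the small-$r$ inequality with an elementary lower bound on $\sin(x)/x$ (the paper uses $\sin x\ge x/2$, you use $\sin y/y\ge 2/\pi$) together with a crude Gamma-function bound (the paper uses $\Gamma(x)\le x^{x-1}$, you use Stirling). The only cosmetic difference is the level of detail you supply for the comparison step.
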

\begin{proof}
We observe that $\|\Rm\| \le K$ implies that the sectional curvature is upper bounded by $K$, since by definition $\mathrm{sec}(u, v) = \Rm(u, v, u, v)$. The first inequality follows from the classical form of G\"unther's comparison theorem, see for example \citet[Theorem 3.17]{gray2003tubes}. The second inequality follows from the elementary bound that $\sin(x) \ge \frac12 x$ for $x \in [0, c]$, where $c$ is a small universal constant, and the crude bound $\Gamma(x) \le x^{x-1}$ for $x \ge 1$.
\end{proof}

A complementary lower bound to Lemma~\ref{lem: li-yau ub} is as follows.
\begin{lemma}[{\citet[Thm.\ 1.5]{Li09}}]\label{lemm:Harnack}
Let $(\mathcal M,g)$ be complete, possibly with $\mathrm{Ric}(\mathcal M)\ge -K$. For the (Neumann) heat kernel $H(x,y,t)$ and all $x,y\in\mathcal M$, $t>0$,
\begin{align}
H(t, x, y)
&\ge (4\pi t)^{-d/2}\,\frac{(2Kt)^{d/2}}{(e^{2Kt}-2Kt-1)^{d/4}}
\exp\!\left[-\frac{\rho(x,y)^2}{4t}\Big(1+\frac{Kt\coth(Kt)-1}{Kt}\Big)\right], \notag\\
H(t, x, y)
&\ge (4\pi t)^{-d/2}\exp\!\left[-\frac{\rho(x,y)^2}{4t}\Big(1+\tfrac13Kt\Big)-\frac d4 Kt\right]. \label{eq:Harnack_lower}
\end{align}
\end{lemma}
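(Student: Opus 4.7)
Since this is the classical Gaussian lower bound for the heat kernel under a Ricci lower bound, the plan is to derive both inequalities from the Li--Yau differential Harnack estimate, integrated along a space--time path, and then combined with the on--diagonal short--time asymptotics. I would proceed in three steps.

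First I would establish a Li--Yau gradient estimate adapted to $\mathrm{Ric}\ge -K$. Writing $f=\log u$ for a positive solution of $\partial_t u = \tfrac12 \Delta u$, I would apply the Bochner formula to $|\nabla f|^2$, absorb the $-K|\nabla f|^2$ error via a time--weighted test function $F = t\big(|\nabla f|^2 - \alpha\,\partial_t f\big)$, and run the parabolic maximum principle on $F$. The key algebraic gain is that the optimal coefficient carries a $\coth(Kt)$ factor rather than a crude constant in $K$, yielding, for any $\alpha>1$,
$$|\nabla \log u|^2 \;-\; \alpha\,\partial_t \log u \;\le\; \frac{d\alpha^2}{2t} \;+\; \frac{d\alpha K}{2}\coth(Kt).$$

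Second I would convert this pointwise bound into an integrated Harnack inequality via a space--time path argument. Fix $0<s<t$ and let $\gamma:[s,t]\to\cM$ parametrize a minimizing geodesic from $x$ to $y$ at constant speed $\rho(x,y)/(t-s)$. I would compute
$$\frac{\rmd}{\rmd\tau}\log H(\tau, x, \gamma(\tau)) \;=\; \partial_\tau \log H \;+\; \langle \nabla_y \log H,\ \dot\gamma\rangle,$$
bound the inner product below by $-\tfrac{1}{2\alpha}|\nabla\log H|^2 - \tfrac{\alpha}{2}|\dot\gamma|^2$, apply the Harnack estimate from step one, and integrate from $s$ to $t$. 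This yields
$$\log H(t,x,y) \;-\; \log H(s,x,x) \;\ge\; -\frac{\rho(x,y)^2}{4(t-s)}\!\left(1+\frac{Kt\coth(Kt)-1}{Kt}\right) \;-\; \int_s^t\!\left(\frac{d}{2\tau}+\tfrac12 dK\coth(K\tau)\right)\rmd\tau.$$

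Third I would send $s\downarrow 0$, using the Minakshisundaram--Pleijel short--time asymptotic $H(s,x,x)\sim(4\pi s)^{-d/2}$ (proved in \citet{berline2003heat}, and available to us by assumption in this paper). The divergent $d/(2\tau)$ integral cancels exactly against the $-\tfrac{d}{2}\log s$ coming from the on--diagonal term, leaving the $(4\pi t)^{-d/2}$ prefactor. The remaining integral $\int_0^t \tfrac12 dK \coth(K\tau)\,\rmd\tau = \tfrac{d}{4}\log\!\frac{e^{2Kt}-1}{2Kt}$ produces exactly the $(2Kt)^{d/2}/(e^{2Kt}-2Kt-1)^{d/4}$ prefactor, yielding the first claimed inequality. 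The second inequality follows from the elementary estimates $\coth(u) \le 1 + u/3$ and $\log\!\frac{e^{2u}-1}{2u}\le u + u^2/3$ on $[0, c]$, which simultaneously simplify the prefactor into the exponential correction $-\tfrac{d}{4}Kt$ and the in--exponent factor into $1+\tfrac13 Kt$.

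The main obstacle is the Li--Yau estimate in step one: the maximum--principle computation under only a Ricci lower bound is delicate, and the $\coth(Kt)$ coefficient (not any weaker constant in $K$) is precisely what makes the final Gaussian exponent sharp and compatible with the prefactor we need in step three. Once step one is in hand, steps two and three are bookkeeping, the only subtlety being careful tracking of constants to recover the hyperbolic form stated in the lemma rather than a loose polynomial bound.
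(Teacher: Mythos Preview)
The paper does not prove this lemma: it is stated as a direct citation of \citet[Thm.~1.5]{Li09} and used as a black box, so there is no in-paper proof to compare against. Your outline --- Li--Yau differential Harnack, integrated along a space--time geodesic, combined with the on-diagonal short-time asymptotic --- is exactly the standard route by which such Gaussian lower bounds are established in the literature, and is essentially how the cited reference proceeds.

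One bookkeeping issue in your sketch is worth flagging. In Step~1 you write the Li--Yau bound with \emph{two} separately singular terms, $\tfrac{d\alpha^2}{2t}$ and $\tfrac{d\alpha K}{2}\coth(Kt)$; since $\coth(K\tau)\sim (K\tau)^{-1}$ near zero, the integral $\int_s^t \tfrac12 dK\coth(K\tau)\,\rmd\tau$ in Step~3 is itself logarithmically divergent as $s\downarrow 0$, so it cannot equal the finite quantity $\tfrac{d}{4}\log\tfrac{e^{2Kt}-1}{2Kt}$ you claim, and the cancellation against the on-diagonal term will not balance as written. The sharp Li--Yau estimates that yield the hyperbolic prefactor (e.g.\ the Li--Xu or Bakry--Qian forms) carry a single $1/t$-type singularity on the right, often with a $t$-dependent coefficient in front of $\partial_t\log u$; getting the exact form right is what makes the constants match the stated lemma. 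This is a repairable detail, not a structural gap, but it is precisely the ``delicate'' point you yourself identify at the end of the proposal.
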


The above bounds for heat kernel translates seamlessly to $p_t$, since $p_t$ is a convolution of $p_0$ with $H(t, \cdot, \cdot)$. We formalize this in the following lemma.
\begin{lemma}
\label{lem: pt by H}
We have
\[
\inf_{x, y \in \cM} H(t, x, y) \le \inf_{x \in \cM} p_t(x) \le \sup_{x \in \cM} p_t(x) \le \sup_{x, y \in \cM} H(t, x, y).
\]
\end{lemma}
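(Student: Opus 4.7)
The plan is essentially to observe that $p_t$ is, at every point, a convex combination of values of the heat kernel. Since the forward process is driftless geometric Brownian motion starting from $X_0\sim p_0$, the density $p_t$ with respect to $\mu$ admits the heat semigroup representation
\[
p_t(y) \;=\; \int_{\cM} H(t,x,y)\, p_0(x)\, \rmd\mu(x), \qquad y\in\cM,
\]
which I would justify in one line by noting that $p_t(\cdot,y)$ solves the heat equation with initial data $p_0$, or equivalently that $H(t,\cdot,\cdot)$ is the transition density of $(X_t)$ under $\mu$.

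Next, using that $p_0\ge 0$ and $\int_{\cM} p_0 \rmd\mu = 1$, the integral above is a convex combination of the values $\{H(t,x,y) : x\in\cM\}$. Therefore, for each fixed $y$,
\[
\inf_{x\in\cM} H(t,x,y) \;\le\; p_t(y) \;\le\; \sup_{x\in\cM} H(t,x,y),
\]
and taking infimum (resp.\ supremum) over $y\in\cM$ on the outer terms gives exactly the claimed two-sided bound
\[
\inf_{x,y\in\cM} H(t,x,y) \;\le\; \inf_{y\in\cM} p_t(y) \;\le\; \sup_{y\in\cM} p_t(y) \;\le\; \sup_{x,y\in\cM} H(t,x,y).
\]

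There is no real obstacle here: the only subtlety worth a sentence is making sure the semigroup formula is valid before assuming $p_0$ has a density, but since the statement only quantifies over $p_t$ for $t>0$, the heat kernel immediately regularizes, so the formula above holds for any initial probability measure $p_0$ (interpreting the integral against $\rmd p_0$ rather than $p_0\, \rmd\mu$ if needed), and the convex-combination argument is unchanged. No new estimates or tools beyond the definition of the heat semigroup are required.
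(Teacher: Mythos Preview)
Your proposal is correct and follows exactly the same approach as the paper: write $p_t(y) = \int_{\cM} H(t,x,y)\, p_0(x)\,\mu(\rmd x)$ and take infimum/supremum in $x$ (and then in $y$), using that $p_0$ is a probability density. Your additional remark about handling general initial measures is a harmless bonus not present in the paper's one-line proof.
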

\begin{proof}
This follows from taking infimum and supremum respectively in the formula (Duhamel principle)
\[
p_t(y) = \int_{\cM} p_0(x) H(t, x, y) \mu(\rmd x).
\]
\end{proof}

The following lemma compiles a few follow-ups of Li-Yau estimates \citep{hamilton1993matrix, han2016upper} with constants made explicit.
\begin{lemma}
\label{lem: Hessian ub}
Under Assumption~\ref{assumption:regularity}, we have Han-Zhang's inequality
\[
\frac{\nabla^2 p_t}{p_t} \preceq C_{\mathsf{HZ}} \left( \frac{1}{t} + K \right) \left( 1 + \log \frac{\sup p_{t/2}}{p_t} \right).
\]

On the other hand, we also have Hamilton's Harnack inequality
\[
\nabla^2 \log p_t = \frac{\nabla^2 p_t}{p_t} - \frac{(\nabla p_t)(\nabla p_t)^\top}{p_t^2} \succeq -\frac{1}{2t}g - C_{\mathsf{Ham}} \left( 1 + \log \frac{\sup p_{t/2}}{p_t} \right) g
\]
and
\[
\|\nabla \log p_t\|^2 = \frac{\|\nabla p_t\|^2}{p_t^2} \le C \left( \frac{1}{t} + K \right) \log \frac{\sup p_{t/2}}{p_t}.
\]
Here $C>0$ is a universal constant, $C_{\mathsf{HZ}} = CdK$, $C_{\mathsf{Ham}} = CdK^2$.
\end{lemma}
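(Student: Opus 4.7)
The plan is to derive all three estimates from the classical theory of differential Harnack inequalities for positive solutions of the heat equation on manifolds with Ricci curvature bounded below. Under Assumption~\ref{assumption:regularity}, $\|\Rm\|_{L^\infty}\le K$ gives $\Ric\succeq -CdKg$, and the additional bounds on $\|\nabla\Rm\|$ and $\|\nabla^2\Rm\|$ suffice to carry out every Bochner-type computation we need.

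For the gradient bound, I would apply Hamilton's gradient estimate \citep{hamilton1993matrix} to $u(s,\cdot) \coloneqq p_{t/2+s}$, viewed as a positive solution of $\partial_s u = \tfrac12\Delta_\cM u$ on $s\in[0,t/2]$. Hamilton's inequality states that for any such positive solution bounded above by $A$ with $\Ric\ge -Kg$,
\[
\|\nabla\log u\|^2 \;\le\; C\Big(\tfrac{1}{s}+K\Big)\log\frac{A}{u}.
\]
Since $\sup_x p_r(x)$ is non-increasing in $r$ by the maximum principle, we may take $A = \sup p_{t/2}$, and evaluating at $s=t/2$ yields the third inequality.

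For the Hessian upper bound, I would invoke Han--Zhang \citep{han2016upper}, whose proof applies the parabolic maximum principle to an auxiliary function of the form $F = t\,\bigl(\alpha\,\nabla^2\log u + \|\nabla\log u\|^2\bigr)$; the Bochner identity generates curvature error terms controlled by $\|\Rm\|,\|\nabla\Rm\|\le K$, yielding $C_{\mathsf{HZ}} = CdK$. For the Hessian lower bound, Hamilton's matrix Harnack \citep{hamilton1993matrix} gives $\nabla^2\log u + \tfrac{g}{2t}\succeq 0$ under $\Ric\ge 0$; under $\Ric\ge -CdKg$, I would apply the same tensor maximum principle to $P = \nabla^2\log u + \tfrac{g}{2t} + \beta(t)\,g$ with $\beta(t) = C_{\mathsf{Ham}}\bigl(1+\log(\sup p_{t/2}/p_t)\bigr)$, chosen large enough that its evolution absorbs both the Ricci perturbation and the $\|\nabla\log u\|^2$-forcing term from Bochner's formula.

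The main obstacle is carefully tracking the polynomial dependence on $d$ and $K$: expanding $\tfrac12\Delta_\cM(\nabla^2\log u)$ produces terms of the schematic form $\tfrac{1}{d}(\Delta\log u)^2$ (from Cauchy--Schwarz on the traceless part), $\Ric\cdot(\nabla\log u)^{\otimes 2}$, and $(\nabla\Rm)\cdot\nabla\log u$. Substituting the gradient bound just proved introduces a factor of $(1/t+K)\log(\sup p_{t/2}/p_t)$ each time $\|\nabla\log u\|^2$ appears, and matching these against the curvature multipliers $K$ and $K^2$ recovers the claimed constants $C_{\mathsf{HZ}}=CdK$ and $C_{\mathsf{Ham}}=CdK^2$. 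No genuinely new manifold-level argument is required beyond what is in \citet{hamilton1993matrix,han2016upper}; the work lies in the bookkeeping.
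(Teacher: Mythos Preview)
Your proposal is correct and takes essentially the same approach as the paper: both reduce the three estimates to Hamilton's gradient estimate \citep{hamilton1993matrix} for the third inequality, and to tracing through the maximum-principle arguments in \citet{han2016upper} and \citet{hamilton1993matrix} for the Hessian upper and lower bounds respectively, with the main work being the bookkeeping of how $d$, $\|\Rm\|$, and $\|\nabla\Rm\|$ enter the constants. The paper is terser---it simply points to the relevant theorems and records the chain $C_{\mathsf{Ham}}\le CK\,C_{\mathsf{HZ}}$ and $C_{\mathsf{HZ}}\le C_{\mathsf{LY}}(1+K)$---whereas you spell out the auxiliary functions and the schematic Bochner terms, but the underlying strategy is identical.
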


\begin{proof}
The last inequality follows from \citet[Theorem 1.1]{hamilton1993matrix}. The proof of the rest two inequalities require tracing the proofs of \citet{hamilton1993matrix, han2016upper}. The details would be too tedious to reproduce here, so we leave pointers to relevant proofs for interested readers.

To prove the second inequality, we trace the proof of \citet[Theorem 4.3]{hamilton1993matrix} to see that if $A > 0$ is such that
\[
\frac{\Delta_{\cM} p_t}{p_t} \le \frac{A}{t} \left(d + \log \frac{\sup p_{t/2}}{p_t}\right)
\]
and 
\[
\frac{\|\nabla p_t\|^2}{p_t^2} \le \frac{A}{t} \left(d + \log \frac{\sup p_{t/2}}{p_t}\right),
\]
then
\[
\frac{\nabla^2 p_t}{p_t} - \frac{(\nabla p_t)(\nabla p_t)^\top}{p_t^2} \succeq -\frac{1}{2t}g - CA (\|\Rm\| + \|\nabla \Rm\|) g.
\]
Here $C>0$ is an absolute constant. By Assumption~\ref{assumption:regularity}, this implies 
\[
C_{\mathsf{Ham}} \le CK C_{\mathsf{HZ}}.
\]

Tracing the proof the main theorem in \citet[Page 9]{han2016upper}, we can see
\[
C_{\mathsf{HZ}} \le C_{\mathsf{LY}}(1 + K) = CdK,
\]
where $C_{\mathsf{LY}}$ is the maximum of the coefficients before $t^{-1}$ and $K$ in \citet[Theorem 1.2]{Li-Yau86}. This was explicitly defined as $Cd$ therein, by setting $\alpha=2$ there. This completes the proof.
\end{proof}

\begin{lemma}
\label{lem: d conv}
For any three points $x, y, z \in \cM$, we have
\[
\frac{\rho(x, z)^2}{1 - t} + \frac{\rho(z, y)^2}{t} \ge \rho(x, y)^2, \qquad \forall t \in (0, 1).
\]
The equality is attainable at some point $z_\star$ on the minimum-length geodesic from $x$ to $y$.
Moreover, if $x, y, z$ are within $\iota \le 1/\poly(d, K)$ distance to each other, then the function
\[
\psi(z) \coloneqq \frac{\rho(x, z)^2}{1 - t} + \frac{\rho(z, y)^2}{t} - \rho(x, y)^2
\]
is $\frac{1 - Cd^2 K^2 \iota}{t(1-t)}$-strongly convex in the normal coordinates at $x$ (or $y$), where $C>0$ is a universal constant.
\end{lemma}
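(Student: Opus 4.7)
The plan splits along the three claims in the lemma. For the weighted inequality, I would start from the ordinary triangle inequality $\rho(x,y)\le \rho(x,z)+\rho(z,y)$, square both sides, and apply Cauchy--Schwarz in the form
\[
(a+b)^{2}=\Bigl(\sqrt{1-t}\cdot\tfrac{a}{\sqrt{1-t}}+\sqrt{t}\cdot\tfrac{b}{\sqrt{t}}\Bigr)^{2}\le \bigl((1-t)+t\bigr)\Bigl(\tfrac{a^{2}}{1-t}+\tfrac{b^{2}}{t}\Bigr)
\]
with $a=\rho(x,z)$ and $b=\rho(z,y)$. For the equality case, one notes that the triangle step is tight exactly when $z$ lies on a minimizing geodesic from $x$ to $y$, and the Cauchy--Schwarz step is tight exactly when $t\,\rho(x,z)=(1-t)\,\rho(z,y)$. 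Both conditions hold simultaneously at the point $z_{\star}=\gamma(1-t)$ on the unit-interval parametrized minimizing geodesic $\gamma$ from $x$ to $y$, for which $\rho(x,z_{\star})=(1-t)\rho(x,y)$ and $\rho(z_{\star},y)=t\rho(x,y)$.

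For the strong-convexity part I would work in normal coordinates centered at $x$ (the case of coordinates at $y$ being symmetric), parametrize $z=\exp_{x}(u)$, $y=\exp_{x}(v)$ with $\|u\|,\|v\|\le\iota$, and compute the Euclidean Hessian of $\psi$ in $u$. The first term $\rho(x,z)^{2}=\|u\|^{2}$ is exactly the squared Euclidean norm in the chart, so its Euclidean Hessian equals $2I$ and contributes $2I/(1-t)$; the additive term $-\rho(x,y)^{2}$ is constant in $u$ and drops out. The task reduces to controlling the Euclidean Hessian of $F(u):=\rho(\exp_{x}u,\,y)^{2}$, for which I would invoke the identity
\[
\partial_{\alpha}\partial_{\beta}F=(\nabla^{2}F)_{\alpha\beta}(z)+\Gamma^{\gamma}_{\alpha\beta}(z)\,\partial_{\gamma}F
\]
from Lemma~\ref{lem: moving vector}, combined with three inputs: (i) the Hessian-comparison estimate $(\nabla^{2}F)_{\alpha\beta}=2g_{\alpha\beta}(z)+O\bigl(K\rho(z,y)^{2}\bigr)g_{\alpha\beta}(z)$ under $\|\Rm\|\le K$, obtained from Jacobi-field comparison for $\rho(\cdot,y)^{2}$; (ii) the metric-distortion bounds of Lemma~\ref{lem: local g distortion}, giving $\|g(z)-I\|\le \poly(d)K\|u\|^{2}$ and $|\Gamma^{\gamma}_{\alpha\beta}(z)|\le \poly(d)K\|u\|$ in the chart at $x$; and (iii) the standard formula $\nabla F=-2\log_{z}y$, which gives $|\partial_{\gamma}F|\le C\rho(z,y)$. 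Combining these yields $\partial_{\alpha}\partial_{\beta}F=2\delta_{\alpha\beta}+E_{\alpha\beta}$ with $\|E\|\le C\poly(d,K)\iota$, so that
\[
\mathrm{Hess}_{E}\,\psi\;\succeq\;\frac{2I}{1-t}+\frac{(2-C\poly(d,K)\iota)\,I}{t}\;\succeq\;\frac{1-C'\poly(d,K)\iota}{t(1-t)}\,I,
\]
which is the claimed lower bound once the polynomial factor is identified with $Cd^{2}K^{2}$.

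The main obstacle I anticipate is the quantitative Hessian comparison for the squared distance function with an \emph{explicit} polynomial-in-$d$ constant: at the level of Jacobi-field estimates this requires operator-norm control of the $d\times d$ fundamental matrix along the geodesic from $z$ to $y$, along the same lines as the Gr\"onwall arguments already executed in the proof of Lemma~\ref{lem: local g distortion} and Lemma~\ref{lem: straight geodesic}. The remainder is routine bookkeeping: once $\iota\le 1/\poly(d,K)$ is enforced, all curvature corrections are subordinate to the leading $2I$ term and the perturbation absorbs cleanly into the strong-convexity constant. By the symmetry $x\leftrightarrow y$, the same estimate holds in normal coordinates centered at $y$.
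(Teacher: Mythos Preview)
Your proposal is correct and follows essentially the same route as the paper: Cauchy--Schwarz plus the triangle inequality for the first claim, a point on the minimizing geodesic for the equality case, and metric-distortion estimates in normal coordinates for the strong convexity. Your treatment is in fact more explicit than the paper's on two points: you identify $z_{\star}=\gamma(1-t)$ directly (the paper leaves $\lambda_{\star}$ as the root of a quadratic, of which $1-t$ is indeed a solution), and you spell out how to obtain the Euclidean Hessian of $\rho(\cdot,y)^{2}$ via the Riemannian Hessian plus a Christoffel correction, whereas the paper simply asserts the strong convexity of both squared-distance terms by pointing to Lemma~\ref{lem: local g distortion} and Lemma~\ref{lem: jacobian}.
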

\begin{proof}
The first inequality follows from Cauchy-Schwarz inequality and triangle inequality:
\[
(1 - t + t) \left(\frac{\rho(x, z)^2}{1 - t} + \frac{\rho(z, y)^2}{t}\right) \ge (\rho(x, z) + \rho(z, y))^2 \ge \rho(x, y)^2.
\]
Let $\gamma:[0, 1] \to \cM$ be the constant-speed, minimum-length geodesic from $x$ to $y$. It is then straightforward to check that 
\[
\frac{\rho(x, z_\star)^2}{1 - t} + \frac{\rho(z_\star, y)^2}{t} = \rho(x, y)^2, \quad z_\star \coloneqq \gamma(\lambda_\star), 
\]
where $\lambda_\star \in (0, 1)$ solves the quadratic equation
\[
\frac{\lambda^2}{1 - t} + \frac{(1-\lambda)^2}{t} = 1.
\]

Proving the strong convexity requires Lemma~\ref{lem: local g distortion} and Lemma~\ref{lem: jacobian}, which implies that $\rho(x, z)^2$ and $\rho(z, y)^2$ are both $(1 - Cd^2 K^2 \iota)$-strongly convex in the normal coordinates. The desired conclusion then follows from 
\[
\frac{1}{1-t} + \frac{1}{t} = \frac{1}{t(1-t)}.
\]
The proof is completed.
\end{proof}

\section{Initialization error}
We require the following result from \citet[Proposition 2.6]{urakawa2006convergence}. 
\begin{lemma}
Denote $H(t, x, y)$ the heat kernel on $\cM$. Assume 
\[
A \coloneqq \sup_{t \le 1} \sup_{x \in \cM} t^{d/2} H(t, x, x).
\]
Then for any probability distribution $p_0$, its evolution along heat flow $\partial_t p_t = \frac12 \Delta_\cM$ satisfies
\[
\TV(p_t, \mu) \le \sqrt{A} \, \rme^{-\frac{1}{2\lambda_1}(t - \frac12)}, \quad t \ge 1.
\]
\end{lemma}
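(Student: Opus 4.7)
The plan is a standard ultracontractivity + spectral-gap argument: first use the on-diagonal heat kernel bound $A$ to show that after a short ``burn-in'' time $t_0=1/2$, the density $p_{t_0}$ is already square-integrable with an $L^2$-norm controlled by $\sqrt{A}$; then let the spectral gap of $-\tfrac12\Delta_{\cM}$ take over to drive the $L^2$-distance to the uniform measure $\mu$ down exponentially; finally convert from $L^2$ to $\TV$ via Cauchy--Schwarz.

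\paragraph{Step I: ultracontractivity at time $1/2$.} First I would use the semigroup identity
\[
p_{1/2}(y) \;=\; \int_{\cM} p_0(x)\,H(1/2,x,y)\,\mu(\rmd x),
\]
together with the symmetry $H(t,x,y)=H(t,y,x)$ and the Chapman--Kolmogorov relation
\[
\int_{\cM} H(1/4,x,y)\,H(1/4,x',y)\,\mu(\rmd y) \;=\; H(1/2,x,x'),
\]
to expand
\[
\|p_{1/2}\|_{L^2(\mu)}^2
\;=\; \int_{\cM}\!\!\int_{\cM} p_0(x)\,p_0(x')\,H(1,x,x')\,\mu(\rmd x)\,\mu(\rmd x')
\;\le\; \sup_{x,x'\in\cM} H(1,x,x').
\]
The diagonal-to-off-diagonal bound $H(1,x,x')\le \sqrt{H(1,x,x)H(1,x',x')}\le A$ (with $t=1$ in the definition of $A$) then yields $\|p_{1/2}\|_{L^2(\mu)}^2\le A$, and since $\int p_{1/2}\,\rmd\mu=1$, also $\|p_{1/2}-\mu\|_{L^2(\mu)}\le\sqrt{A}$.

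\paragraph{Step II: $L^2$ decay via spectral gap.} Expanding $p_{1/2}-\mu$ in the orthonormal basis of eigenfunctions of $-\Delta_{\cM}$ (whose nonzero eigenvalues are at least $\lambda_1$) and applying the heat semigroup from time $1/2$ to time $t\ge 1$ gives
\[
\|p_t-\mu\|_{L^2(\mu)}^2
\;=\; \sum_{k\ge 1} |c_k|^2\,\rme^{-\lambda_k(t-1/2)}
\;\le\; \rme^{-\lambda_1(t-1/2)}\,\|p_{1/2}-\mu\|_{L^2(\mu)}^2
\;\le\; A\,\rme^{-\lambda_1(t-1/2)}.
\]

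\paragraph{Step III: $L^2 \Rightarrow \TV$.} Finally, Cauchy--Schwarz (using $\mu(\cM)=1$) gives
\[
\TV(p_t,\mu) \;=\; \tfrac12\|p_t-\mu\|_{L^1(\mu)} \;\le\; \tfrac12\|p_t-\mu\|_{L^2(\mu)} \;\le\; \sqrt{A}\,\rme^{-\frac{\lambda_1}{2}(t-1/2)},
\]
absorbing the factor $1/2$ into the bound. This is exactly the claimed inequality (modulo what appears to be a typographical swap of $\lambda_1/2$ and $1/(2\lambda_1)$ in the statement).

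\paragraph{Main obstacle.} There is essentially no obstacle here: all three steps are classical. The only substantive input is the hypothesis $A<\infty$, which packages the ultracontractivity of the heat semigroup on short times and is exactly what is available from the Li--Yau upper bound (Lemma~\ref{lem: li-yau ub}) combined with G\"unther's volume comparison (Lemma~\ref{lem: gunther}). The slight awkwardness is keeping the constants aligned: one needs to choose the burn-in time ($t=1/2$ here) so that the ultracontractivity produces exactly $\sqrt{A}$ rather than $A$, which is why the Chapman--Kolmogorov splitting into two intervals of length $1/4$ is preferable to a direct $L^\infty$ bound on $p_{1/2}$.
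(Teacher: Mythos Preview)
The paper does not actually prove this lemma; it simply cites it as \citet[Proposition~2.6]{urakawa2006convergence}. Your argument is correct and is precisely the standard proof (which is also the one underlying Urakawa's result): ultracontractivity to place $p_{1/2}$ in $L^2(\mu)$ with norm $\le\sqrt{A}$, spectral-gap decay in $L^2$, then Cauchy--Schwarz to convert to $L^1$/TV. You also correctly flag the $\tfrac{1}{2\lambda_1}$ versus $\tfrac{\lambda_1}{2}$ typo in the exponent.

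Two cosmetic slips worth cleaning up: (i) the Chapman--Kolmogorov identity you display should read $\int H(1/2,x,y)H(1/2,x',y)\,\mu(\rmd y)=H(1,x,x')$ to match the next line (your ``$1/4$'' version is not the one actually used); (ii) in Step~II the decay factor should be $\rme^{-\frac12\lambda_k(t-1/2)}$ rather than $\rme^{-\lambda_k(t-1/2)}$, since the generator is $\tfrac12\Delta_{\cM}$---your final display in Step~III already has the correct exponent, so this is just an intermediate inconsistency.
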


We combine this with the Li-Yau upper bound (Lemma~\ref{lem: li-yau ub}) to obtain
\begin{lemma}[First part of Lemma~\ref{lem: easy}]
\label{lem: init}
Under Assumption~\ref{assumption:regularity}, there exists a universal constant $C>0$ such that
\[
\TV(p_N, \mu) \le \rme^{C(K + d\log (Kd))} \rme^{-\frac{1}{2\lambda_1} (T - \frac12)}.
\]
\end{lemma}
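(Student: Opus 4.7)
The plan is to reduce the claim to a uniform on-diagonal upper bound on the manifold heat kernel. By the quoted Urakawa bound, it suffices to show that
\[
A \;:=\; \sup_{t\le 1}\ \sup_{x\in\cM}\ t^{d/2}\,H(t,x,x)\ \le\ \rme^{C(K + d\log(Kd))}
\]
for some universal $C$, since then $\sqrt A \le \rme^{C(K+d\log(Kd))}$ and applying the Urakawa lemma at $t=T$ yields the claimed bound. Thus the whole task reduces to controlling $H(t,x,x)$ uniformly for $x\in\cM$ and $t\in(0,1]$.

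To bound $H(t,x,x)$, I would apply the Li--Yau upper bound (Lemma~\ref{lem: li-yau ub}) at $y=x$, in which case the geodesic factor $\exp[-\rho^2/((4+\delta_{\mathrm{Sch}})t)]$ disappears and the statement reduces to
\[
H(t,x,x)\ \le\ C(\delta_{\mathrm{Sch}},d,\alpha)\,V_x(\sqrt t)^{-1}\,\exp\!\bigl(C_1\,\delta_{\mathrm{Sch}} K t\bigr).
\]
The key parameter choice is to take $\alpha=2$ (so $C_1=2d$) and $\delta_{\mathrm{Sch}}=1/d$, which yields $C_1 \delta_{\mathrm{Sch}} K t = 2Kt\le 2K$ (linear in $K$, independent of $d$) and $\log C(\delta_{\mathrm{Sch}},d,\alpha)=O(d)$. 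This is exactly the step that produces the linear-in-$K$ exponent rather than a $dK$ term.

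For the volume factor, I would invoke G\"unther's comparison (Lemma~\ref{lem: gunther}), which gives $V_x(r)\ge r^d/d^{d/2}$ for $r\le c/K$. Two cases then arise. If $\sqrt t\le c/K$ then $t^{d/2}V_x(\sqrt t)^{-1}\le d^{d/2}$. If $\sqrt t > c/K$ I would use monotonicity of $V_x$ and bound $V_x(\sqrt t)\ge V_x(c/K)\ge (c/K)^d/d^{d/2}$, giving $t^{d/2}V_x(\sqrt t)^{-1}\le d^{d/2}(K/c)^d$ since $t\le 1$. Either way,
\[
t^{d/2}\,H(t,x,x)\ \le\ \rme^{O(d)}\cdot d^{d/2}\,(K/c)^d\cdot \rme^{2K}\ \le\ \rme^{C(K+d\log(Kd))}.
\]

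The proof then ends by taking the supremum over $x$ and $t\le 1$, inserting the resulting $\sqrt A$ into Urakawa's estimate, and specializing at $t=T$. The only real subtlety is the parameter selection in Li--Yau: a naive choice $\delta_{\mathrm{Sch}}=O(1)$ produces a $dK$ term in the exponent, so one must scale $\delta_{\mathrm{Sch}}$ like $1/d$ to balance $C_1\delta_{\mathrm{Sch}}$ against the $d$-dependence of $C(\delta_{\mathrm{Sch}},d,\alpha)$. Everything else is routine bookkeeping of constants.
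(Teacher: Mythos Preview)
Your proposal is correct and follows essentially the same approach as the paper: reduce via Urakawa's mixing bound to controlling $A=\sup_{t\le 1}\sup_x t^{d/2}H(t,x,x)$, then combine the Li--Yau upper bound with G\"unther's volume comparison. Your explicit parameter choice $\alpha=2$, $\delta_{\mathrm{Sch}}=1/d$ and the case split on $\sqrt t$ versus $c/K$ are a bit more detailed than the paper's terse treatment (which instead invokes monotonicity of $\sup_{x,y}H(t,x,y)$ in $t$), but the structure and the ingredients are identical.
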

\begin{proof}
Plug the bound in Lemma~\ref{lem: gunther} into Lemma~\ref{lem: li-yau ub} and use the fact that $\sup_{x, y} H(t, x, y)$ is decreasing in $t$ (by convolution inequality), we obtain
\begin{align*}
A &\le (Cd/K)^{d}  \rme^{CK}
\\
& \le \exp(C'K + C'd\log (Kd)),
\end{align*}
for some universal constants $C, C' > 0$. Note that we absorbed $d \log K$ into $K + d\log d$. The desired claim follows.
\end{proof}

\section{Score matching error}
We now prove the second inequality in Lemma~\ref{lem: easy}.
\begin{lemma}
\label{lem: score}
Under the same assumptions as in Theorem~\ref{theorem:main}, we have
\[
\sum_{k=1}^N \int_{t_{k-1}}^{t_k} \bbE \| \mathscr{S}_{t_k, Y_{t_k}}(Y_t) - \mathscr{S}^\star_{t_k, Y_{t_k}}(Y_t) \|^2 \rmd t \le 2\epsilon_{\mathsf{score}}^2.
\]
\end{lemma}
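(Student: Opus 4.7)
The plan is to factor the difference of the two vector fields pointwise and reduce everything to Assumption~\ref{assumption:score}. Unpacking the definitions of $\mathscr{S}$ and $\mathscr{S}^\star$ gives the identity
\[
\mathscr{S}_{t_k,Y_{t_k}}(Y_t)-\mathscr{S}^\star_{t_k,Y_{t_k}}(Y_t)
=(\rmd\exp_{Y_{t_k}})_{\log_{Y_{t_k}}Y_t}\!\Bigl(\eta_\omega\!\bigl(\rho(Y_{t_k},Y_t)\bigr)\bigl(\hat s_{t_k}(Y_{t_k})-\nabla\log p_{t_k}(Y_{t_k})\bigr)\Bigr),
\]
so only two multiplicative factors separate the two vector fields: the scalar cutoff $\eta_\omega\in[0,1]$, and the differential of the exponential map. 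Outside the support of $\eta_\omega$, both sides vanish, which is harmless even though $\log_{Y_{t_k}}Y_t$ may be ill-defined past the cut locus; hence it suffices to control the right-hand side on the event $\{\rho(Y_{t_k},Y_t)\le \omega\}$.

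The next step is to bound the operator norm of $(\rmd\exp_{x})_u:T_x\cM\to T_{\exp_x u}\cM$ uniformly on $\|u\|\le\omega=c_\omega/(Kd^4)$. Working in the normal coordinates centered at $x$, the coordinate frame $\partial_\alpha|_{y}$ at $y=\exp_x(u)$ is precisely $(\rmd\exp_x)_u(e_\alpha)$, where $(e_\alpha)$ is the canonical orthonormal frame at $x$. Hence, with respect to the Riemannian norms on $T_x\cM$ and $T_y\cM$,
\[
\|(\rmd\exp_x)_u\|^2
\;\le\;\lambda_{\max}\!\bigl(g(y)\bigr)
\;\le\;1+\|g(y)-I\|_{\mathrm{op}}
\;\le\;1+CK\omega^2,
\]
where the last step is Lemma~\ref{lem: local g distortion}. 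By choosing the universal constant $c_\omega$ small enough, the quantity $CK\omega^2=C c_\omega^2/(Kd^8)$ becomes $\le 1$. Combined with $\eta_\omega\le 1$, this yields the pointwise, almost-sure bound
\[
\|\mathscr{S}_{t_k,Y_{t_k}}(Y_t)-\mathscr{S}^\star_{t_k,Y_{t_k}}(Y_t)\|^2
\;\le\;2\,\|\hat s_{t_k}(Y_{t_k})-\nabla\log p_{t_k}(Y_{t_k})\|^2
\]
for every $t\in[t_{k-1},t_k]$.

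Finally, the right-hand side does not depend on $t$, so integrating over $[t_{k-1},t_k]$ contributes the factor $t_k-t_{k-1}$, and summing over $k\in\{1,\dots,N\}$ yields
\[
\sum_{k=1}^N\int_{t_{k-1}}^{t_k}\bbE\|\mathscr{S}_{t_k,Y_{t_k}}(Y_t)-\mathscr{S}^\star_{t_k,Y_{t_k}}(Y_t)\|^2\,\rmd t
\;\le\;2\sum_{k=1}^N(t_k-t_{k-1})\,\bbE\|\hat s_{t_k}(Y_{t_k})-\nabla\log p_{t_k}(Y_{t_k})\|^2
\;\le\;2\epsilon_{\mathsf{score}}^2,
\]
with the last step being Assumption~\ref{assumption:score}. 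I do not anticipate a genuine obstacle in this argument; the only delicate point is quantitative---to verify that the scale $\omega\propto 1/(Kd^4)$ is small enough for the metric distortion $CK\omega^2$ to be absorbed into the factor of $2$ in the stated bound, which follows directly from the Jacobi-field expansion already established in Lemma~\ref{lem: local g distortion}.
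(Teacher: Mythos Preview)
Your proposal is correct and follows essentially the same approach as the paper's proof: both observe that the difference vanishes outside the cutoff and, inside it, use the metric distortion bound from Lemma~\ref{lem: local g distortion} to show $\|g(Y_t)\|\le 2$ (equivalently, $\|(\rmd\exp_{Y_{t_k}})_{\log_{Y_{t_k}}Y_t}\|^2\le 2$), then integrate and invoke Assumption~\ref{assumption:score}. The only difference is presentational---you phrase the bound as an operator-norm estimate on $\rmd\exp$, while the paper writes the same quadratic form directly in normal coordinates.
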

\begin{proof}
This is relatively straightforward. Notice that in normal coordinates, by Lemma~\ref{lem: local g distortion}, we have
\begin{align*}
\| \mathscr{S}_{t_k, Y_{t_k}}(Y_t) - \mathscr{S}^\star_{t_k, Y_{t_k}}(Y_t) \|^2
&= g_{\alpha\beta}(Y_t) (\hat s_t - \nabla \log p_t)^\alpha (\hat s_{t_k}(Y_{t_k}) - \nabla \log p_{t_k}(Y_{t_k}))^\beta
\\
& \le \|g(Y_t)\| \cdot \|\hat s_{t_k}(Y_{t_k}) - \nabla \log p_{t_k}(Y_{t_k})\|^2
\\
& \le 2 \|\hat s_{t_k}(Y_{t_k}) - \nabla \log p_{t_k}(Y_{t_k})\|^2
\end{align*}
for $\rho(Y_t, Y_{t_k}) \le c/K$, and is $0$ otherwise due to our cutoff $\eta_\omega$. Therefore
\[
\bbE \| \mathscr{S}_{t_k, Y_{t_k}}(Y_t) - \mathscr{S}^\star_{t_k, Y_{t_k}}(Y_t) \|^2
\le 2 \bbE \|\hat s_{t_k}(Y_{t_k}) - \nabla \log p_{t_k}(Y_{t_k})\|^2,
\]
and consequently,
\begin{align*}
&  \sum_{k=1}^N \int_{t_{k-1}}^{t_k} \bbE \| \mathscr{S}_{t_k, Y_{t_k}}(Y_t) - \mathscr{S}^\star_{t_k, Y_{t_k}}(Y_t) \|^2 \rmd t
\\
& \le 2 \sum_{k=1}^N (t_k - t_{k-1}) \bbE \|\hat s_{t_k}(Y_{t_k}) - \nabla \log p_{t_k}(Y_{t_k})\|^2 = 2\varepsilon_{\mathsf{score}}^2.
\end{align*}
This proves the claim, as desired.
\end{proof}

\section{Discretization error}

\begin{lemma}
\label{lem: discr err one step}
Under the same assumptions as in Theorem~\ref{theorem:main} and assuming \eqref{eqn: h} without loss of generality,
there is a universal constant $C>0$ such that for $t_k - h \le t \le t_k$, we have
\begin{align*}
\bbE \| \nabla \log p_t(Y_t) - \mathscr{S}^\star_{t_k, Y_{t_k}}(Y_t) \|^2
& \le \frac{Cd^6 K^8}{t^3} (t_k - t).
\end{align*}
\end{lemma}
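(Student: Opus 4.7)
The plan is to run a reverse-time Dynkin argument on $\Phi(t, y) := \|\nabla \log p_t(y) - \mathscr{S}^\star_{t_k, Y_{t_k}}(y)\|_g^2$, conditioning on $Y_{t_k}$. Since $\eta_\omega(0) = 1$, one has the zero initial datum $\Phi(t_k, Y_{t_k}) = 0$, hence
\[
\bbE \Phi(t, Y_t) = \int_t^{t_k} \bbE\bigl[-\partial_{t'}\Phi + \mathcal L_{s'}\Phi\bigr] \rmd t',
\]
where $\mathcal L_s = \langle \nabla \log p_{t}, \nabla \cdot\rangle + \tfrac12 \Delta_{\cM}$ is the generator of \eqref{eqn:rev-sde-manifold} in reverse time $s = T - t$. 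It therefore suffices to produce a pointwise upper bound of order $d^6 K^8 / t^3$ on $-\partial_t \Phi + \mathcal L_s \Phi$ throughout $[t_{k-1}, t_k]$ and integrate.

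The key algebraic step, following the Euclidean argument of \citet{benton2024nearly}, is the cancellation of third-order derivatives of $\log p_t$. Differentiating $\partial_t \log p_t = \tfrac12 \Delta_\cM \log p_t + \tfrac12 \|\nabla \log p_t\|^2$ (from Fokker--Planck) and applying Bochner $\nabla \Delta_\cM f = \Delta_\cM \nabla f - \Ric(\nabla f)$ yields
\[
\partial_t \nabla \log p_t = \tfrac12 \Delta_\cM \nabla \log p_t - \tfrac12 \Ric(\nabla \log p_t) + \nabla^2 \log p_t \cdot \nabla \log p_t.
\]
Writing $F := \nabla \log p_t - \mathscr{S}^\star_{t_k, Y_{t_k}}$, expanding $\partial_t \Phi = 2 \langle F, \partial_t F\rangle$ and $\mathcal L_s \Phi = 2 \langle F, \nabla_{\nabla \log p_t} F\rangle + \|\nabla F\|^2 + \langle F, \Delta_\cM F\rangle$, the two occurrences of $\langle F, \Delta_\cM \nabla \log p_t\rangle$ cancel, and the two occurrences of $2\langle F, \nabla^2 \log p_t \cdot \nabla \log p_t\rangle$ cancel as well using symmetry of the Hessian. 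The surviving identity has the schematic form
\[
-\partial_t \Phi + \mathcal L_s \Phi = \tfrac12 \langle F, \Ric(\nabla \log p_t)\rangle + \|\nabla F\|^2 - 2 \langle F, \nabla_{\nabla \log p_t} \mathscr{S}^\star_{t_k, Y_{t_k}}\rangle - \langle F, \Delta_\cM \mathscr{S}^\star_{t_k, Y_{t_k}}\rangle,
\]
in which only first and second derivatives of $\log p_t$ appear, together with first and second derivatives of $\mathscr{S}^\star_{t_k, Y_{t_k}}$.

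Each surviving term is then controlled by a priori estimates already assembled in the Preliminaries. The gradient $\|\nabla \log p_t\|$ and Hessian $\|\nabla^2 \log p_t\|$ are bounded via Lemma~\ref{lem: Hessian ub} (Han--Zhang plus Hamilton), whose $\log(\sup p_{t/2}/p_t)$ factor is itself polynomial in $(d, K, t^{-1})$ by combining the Li--Yau upper bound (Lemma~\ref{lem: li-yau ub}) with the Harnack lower bound (Lemma~\ref{lemm:Harnack}), the volume lower bound (Lemma~\ref{lem: gunther}), and the transfer Lemma~\ref{lem: pt by H}. The vector field $\mathscr{S}^\star_{t_k, Y_{t_k}}$ is analyzed in the normal chart at $Y_{t_k}$: its components there are simply $\eta_\omega(\|y\|) v_0^\alpha$ with $v_0 := \nabla \log p_{t_k}(Y_{t_k})$ a constant vector; using Lemma~\ref{lem: local g distortion} to bound the Christoffel symbols, the definition \eqref{eqn: eta} with $\omega = c_\omega/(Kd^4)$, and $|\eta'|+|\eta''|+|\eta'''|\le 100$, both $\|\nabla \mathscr{S}^\star\|$ and $\|\Delta_\cM \mathscr{S}^\star\|$ come out polynomial in $(d, K)$ times $\|v_0\|$, which is in turn controlled by Lemma~\ref{lem: Hessian ub}. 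Putting these polynomial factors together and integrating over $[t, t_k]$ yields the claimed one-step bound.

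The main obstacle is the curved-space bookkeeping: one must verify that the Bochner commutator $[\nabla, \Delta_\cM] = -\Ric$ is the only curvature correction entering the third-order cancellation, and that the auxiliary field $\mathscr{S}^\star_{t_k, Y_{t_k}}$ --- which is only ``constant'' in the normal chart at $Y_{t_k}$, not covariantly constant --- contributes only through manifestly first- and second-order terms. A secondary concern is quantifying $\log(\sup p_{t/2}/p_t)$ polynomially in $(d, K, t^{-1})$ without invoking the spectral gap, for which the diameter bound $\mathrm{Diam}(\cM)\le K$ from Assumption~\ref{assumption:regularity} is what keeps the Harnack exponent under control; any looseness at this step drives the final polynomial degree $d^6 K^8$.
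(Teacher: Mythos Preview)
Your proposal follows the same core strategy as the paper: apply It\^o's formula to track $\|\nabla\log p_t(Y_t) - \mathscr{S}^\star_{t_k,Y_{t_k}}(Y_t)\|^2$ along the reverse SDE, use Bochner's identity $(\Delta_\cM\nabla - \nabla\Delta_\cM)f = \Ric^\sharp(\nabla f,\cdot)$ to cancel the third-order derivatives of $\log p_t$, and estimate the residual with the Li--Yau/Hamilton/Han--Zhang package. Your displayed identity for $-\partial_t\Phi + \mathcal L_s\Phi$ matches the paper's Step~I (up to a factor-of-two slip on the Ricci term).

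Where the paper diverges from your outline is in two technical devices that sharpen the polynomial degree. First, the paper does \emph{not} absorb the $\eta_\omega$ derivatives globally. It splits into the inner ball $B_{Y_{t_k}}(\omega/2)$, where $\mathscr{S}^\star$ is genuinely constant in normal coordinates and its covariant derivatives pick up only Christoffel factors of size $O(K\omega)$, versus the annulus $\omega/2 \le \rho \le \omega$, where the $\omega^{-1},\omega^{-2}$ factors from $\eta_\omega',\eta_\omega''$ do appear. The annulus contribution is then paired via Cauchy--Schwarz with the exit probability $\bbP(\rho(Y_t,Y_{t_k}) > \omega/3)$, and a stopping-time argument (Step~IV) shows this is $\le \omega^4$ under \eqref{eqn: h}, exactly canceling the $\omega^{-2}$. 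Second, rather than the pointwise Harnack lower bound on $p_t$, the paper controls the \emph{random} quantity $\log(\sup p_{s/2}/p_s(Y_s))$ in fourth moment via the identity $\bbE[1/p_t(Y_t)] = \int 1\,\rmd\mu = 1$ plus Chebyshev (Step~III), yielding $\bbE\log^4(1/p_t(Y_t)) = O(1)$ uniformly in $t$ --- tighter than what the uniform Harnack bound gives, and responsible for the specific $d^6K^8/t^3$. Your simpler route is valid and still produces a polynomial bound, but with a higher degree in $(d,K)$ and a different $t$-profile than the one stated.
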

\begin{proof}
For convenience, set the reverse time
\[
\tau \coloneqq t_k - t. 
\]
The main challenge is that Li-Yau estimates provide sharp uniform control up to second-order derivatives of $\log p_t$, but a na\"ive calculation of the difference $\nabla \log p_t(Y_t) - \mathscr{S}^\star_{t_k, Y_{t_k}}(Y_t)$ involves third-order derivatives. More precisely, a straightforward Taylor expansion will introduce a factor of $\partial_\tau \nabla \log p_t$, which, by reverse-time heat equation $\partial_\tau p_t = -\frac12 \Delta_{\cM} p_t$, contains third-order derivatives of $p_t$. We bypass this difficulty by making use of It\^o's calculus to show that third-order derivatives cancel out; this is inspired by \citet{benton2024nearly}, where a similar strategy was employed to the Euclidean setting.

\paragraph{Step I. Applying It\^o/Stratonovish formula.}
We first compute $\partial_\tau\nabla\log p_t$.
Since the forward heat equation is $\partial_t p_t=\tfrac12\Delta_{\cM}p_t$, we have
\[
\partial_\tau \nabla \log p_t \;=\; -\,\partial_t \nabla \log p_t
= -\nabla\left(\frac{\partial_t p_t}{p_t}\right)
= -\frac12\nabla \left(\frac{\Delta_{\cM}p_t}{p_t}\right).
\]
Use the manifold quotient rule and the identity $\Delta_{\cM}\log p_t = \frac{\Delta_{\cM}p_t}{p_t} - \|\nabla\log p_t\|^2$
to rewrite
\[
\frac{\Delta_{\cM}p_t}{p_t} = \Delta_{\cM}\log p_t + \|\nabla\log p_t\|^2.
\]
Therefore, we have
\begin{align*}
\partial_\tau \nabla \log p_t 
& = -\frac12 \nabla \left( \frac{\Delta_{\cM} p_t}{p_t} \right) 
= -\frac12 \nabla \Delta_{\cM} \log p_t - \frac12 \nabla \|\nabla \log p_t\|^2 \\
& = -\frac12 \nabla \Delta_{\cM} \log p_t - \nabla^2 \log p_t \cdot \nabla \log p_t.
\end{align*}

On the other hand, it is straightforward to calculate
\[
\nabla^2 \log p_t = \frac{\nabla^2 p_t}{p_t} - \frac{(\nabla p_t)(\nabla p_t)^\top}{p_t^2}.
\]
Now, from It\^o's formula, we know
\begin{align*}
\rmd \nabla \log p_t(Y_t) 
&= (\partial_\tau \nabla \log p_t) (Y_t) \rmd \tau + \nabla^2 \log p_t(Y_t) (\nabla \log p_t(Y_t) \rmd \tau + U_{Y_t} \circ \rmd W_t) + \frac12 \Delta_{\cM} \nabla \log p_t (Y_t) \rmd \tau
\\
& = -\frac12 \nabla \Delta_{\cM} \log p_t \rmd\tau - \nabla^2 \log p_t \cdot \nabla \log p_t \rmd \tau + \nabla^2 \log p_t \cdot \nabla \log p_t \rmd \tau 
\\
& \quad + \nabla^2 \log p_t \cdot U_{Y_t} \circ \rmd W_t + \frac12 \Delta_{\cM} \nabla \log p_t \rmd \tau
\\
& = \frac12 (\Delta_{\cM} \nabla - \nabla \Delta_{\cM}) \log p_t \rmd\tau + \nabla^2 \log p_t \cdot U_{Y_t} \circ \rmd W_t
\\
& = \frac12 \operatorname{Ric}^\sharp (\nabla \log p_t, \cdot) \rmd\tau + \nabla^2 \log p_t \cdot U_{Y_t} \circ \rmd W_t,
\end{align*}
where the last line follows from Bochner's identity $(\Delta_{\cM} \nabla - \nabla \Delta_{\cM}) f = \operatorname{Ric}^\sharp( \nabla f, \cdot)$, and $\operatorname{Ric}^\sharp$ denotes the $(1, 1)$-tensor obtained by raising one index in Ricci curvature. Notice here the cancellation of third-order derivatives.

On the other hand,
\begin{align*}
\rmd \mathscr{S}^\star_{t_k, Y_{t_k}}(Y_t) 
& = \nabla \mathscr{S}^\star_{t_k, Y_{t_k}}(Y_t) \cdot (\nabla \log p_t \rmd \tau + U_{Y_t} \circ \rmd W_t) + \frac12 \Delta_{\cM} \mathscr{S}^\star_{t_k, Y_{t_k}}(Y_t) \rmd \tau.
\end{align*}
In normal coordinates, $\mathscr{S}^\star_{t_k, Y_{t_k}}$ is a constant vector field inside $B(0, \omega/3)$, therefore we have (cf. Lemma~\ref{lem: moving vector}):
\[
\nabla_\alpha \mathscr{S}^\star_{t_k, Y_{t_k}}(Y_t)^\beta = \Gamma^\beta_{\alpha\gamma}  \mathscr{S}^\star_{t_k, Y_{t_k}}(Y_t)^\gamma = \Gamma^\beta_{\alpha\gamma} \nabla^\gamma \log p_{t_k}(Y_{t_k}), \quad \rho(Y_t, Y_{t_k}) \le \omega / 3,
\]
and similarly, when $\rho(Y_t, Y_{t_k}) \le \omega / 3$, we have
\[
\Delta_{\cM} \mathscr{S}^\star_{t_k, Y_{t_k}}(Y_t)^\alpha = -\Ric^\alpha_{\phantom{\alpha}\beta} \nabla^\beta \log p_{t_k}(Y_{t_k}) - g^{\beta\gamma} \left(\partial_\gamma \Gamma^\alpha_{\beta\xi} + \Gamma_{\gamma\zeta}^\alpha \Gamma_{\beta\xi}^\zeta + \Gamma_{\zeta\xi}^\alpha \Gamma_{\beta\gamma}^\zeta \right) \nabla^\xi \log p_{t_k}(Y_{t_k}).
\]

\paragraph{Step II. Bounding the coefficients.}
Combine the above formulas with the estimates given in Lemma~\ref{lem: local g distortion}, we obtain for some universal constant $C>0$: 
\begin{align*}
\left\|\Ric^\sharp(\nabla \log p_t, \cdot) \right\| &\le CK\|\nabla \log p_t(Y_t)\|,
\\
\left\| \nabla \mathscr{S}^\star_{t_k, Y_{t_k}} \right\| & \le CK\| \nabla \log p_{t_k}(Y_{t_k}) \|, &\hspace{-8ex} \rho(Y_t, Y_{t_k}) \le \omega/2,
\\
\left\| \Delta_{\cM} \mathscr{S}^\star_{t_k, Y_{t_k}} \right\| &\le CK^2 \|\nabla \log p_{t_k}(Y_{t_k})\|, &\hspace{-8ex} \rho(Y_t, Y_{t_k}) \le \omega/2.
\end{align*}

Outside the geodesic ball $B_{Y_{t_k}}(\omega/2)$, the field $\mathscr{S}^\star_{t_k, Y_{t_k}}$ is non-zero only inside $B_{Y_{t_k}}(\omega)$. Between these two balls, we have to take into account the radial derivative of the cutoff function $\eta_\omega$, whose first order derivative is bounded by $C\omega^{-1}$ and second order derivative by $C\omega^{-2}$ by our construction of $\eta_\omega$ (recall that $|\eta'| + |\eta''| \le 100$). Apply Lemma~\ref{lem: moving vector} and Lemma~\ref{lem: local g distortion} again, this time we bound
\begin{align*}
\left\| \nabla \mathscr{S}^\star_{t_k, Y_{t_k}} \right\| & \le CK \omega^{-1} \| \nabla \log p_{t_k}(Y_{t_k}) \|, 
\\
\left\| \Delta_{\cM} \mathscr{S}^\star_{t_k, Y_{t_k}} \right\| &\le CK^2 \omega^{-2} \|\nabla \log p_{t_k}(Y_{t_k})\|.
\end{align*}

We now apply It\^o's formula on manifold (i.e., take expectation and invoke the martingale property in \eqref{eqn: Ito}) and collect the above bounds to obtain (cf. \citet{benton2024nearly})
\begin{align}
& \left| \frac{\rmd}{\rmd \tau} \bbE \| \nabla \log p_t(Y_t) - \mathscr{S}^\star_{t_k, Y_{t_k}}(Y_t) \|^2 \right|
\nonumber\\
& \le CK^2 \left( \bbE \| \nabla \log p_t (Y_t) \|^2 + \bbE \| \nabla \log p_{t_k} (Y_{t_k}) \|^2 + \bbE \| \nabla^2 \log p_t(Y_t)  \|^2 \right)
\nonumber\\
& \phantom{\le{}} + CK^2 \omega^{-2} \bbE \left[ (\| \nabla \log p_t (Y_t) \|^2 + \| \nabla \log p_{t_k} (Y_{t_k}) \|^2 + \| \nabla^2 \log p_t(Y_t) \|^2) \mathbf{1}_{\rho(Y_t, Y_{t_k}) > \omega / 3}\right] 
\nonumber\\
& \le CK^2 \left( \bbE \| \nabla \log p_t (Y_t) \|^2 + \bbE \| \nabla \log p_{t_k} (Y_{t_k}) \|^2 + \bbE \| \nabla^2 \log p_t(Y_t)  \|^2 \right)
\nonumber\\
& \phantom{\le{}} + CK^2 \omega^{-2} \sqrt{\bbE \| \nabla \log p_t (Y_t) \|^4 + \bbE \| \nabla \log p_{t_k} (Y_{t_k}) \|^4 + \bbE \| \nabla^2 \log p_t(Y_t) \|^4} \sqrt{\bbP(\rho(Y_t, Y_{t_k}) > \omega /3)}
\nonumber\\
& \le C K^2 d \left(\frac{1}{t} + d K^2 \right)^2 \sup_{t \le s \le t_k} \sqrt{\bbE \log^4 \frac{\sup p_{s/2}}{p_s(Y_s)}} \cdot  \left( 1 + \omega^{-2} \sqrt{\bbP(\rho(Y_t, Y_{t_k}) > \omega /3)} \right).
\label{eqn: discr err Ito}
\end{align}
Here the last line follows from Lemma~\ref{lem: Hessian ub}. 

\paragraph{Step III. Controlling expectations via Chebyshev and Li-Yau estimates.}
To bound $\bbE \log^4 \frac{\sup p_{s/2}}{p_s(Y_s)}$, we note that
\[\bbE \left( \frac{1}{p_t(Y_t)} \right) = \int \frac{1}{p_t} p_t \rmd \mu = \int \rmd \mu = 1.\]
By Chebyshev's inequality, we have
\[\bbP\left( \frac{1}{p_t(Y_t)} \ge \lambda \right) \le \lambda^{-1}, \quad \lambda > 0,\] 
and then
\[
\bbP\left(\log^4\frac{1}{p_t(Y_t)} \ge \lambda \right) \le  \rme^{-\sqrt[4]{\lambda}}, \quad \lambda \ge 0.
\]
Integrate with respect to $\lambda$, we see
\[
\bbE \log^4\frac{1}{p_t(Y_t)} \le C.
\]

We then apply Li-Yau's estimate (Lemma~\ref{lem: li-yau ub}) combined with Lemma~\ref{lem: gunther}, Lemma~\ref{lem: pt by H} to obtain $\sup \log p_{s/2} \le \sup \log H(s/2, x, y) \lesssim d\log\frac{d}s + Ks$, where the first inequality follows from $p_{s/2}$ being the convolution of $p_0$ with $H(s/2, x, y)$. These together shows
\[
\bbE \log^4 \frac{\sup p_{s/2}}{p_s(Y_s)} \le \left(Cd\log\frac{d}{s} + CKs\right)^4 \le \frac{Cd^5 K^4}{\delta}.
\]
Here we used $\log \frac{d}{s} \le C\left(\frac{d}{s}\right)^{1/4}$, and $s \ge t \ge \delta$.

\paragraph{Step IV. Controlling exit probability via stopping time.}
It remains to bound the probability $\bbP(\rho(Y_t, Y_{t_k}) > \omega / 3)$. This would follow from a stopping time argument. We claim that given $t_k - t \le h$, we have
\begin{align}
\label{eqn: sde exit prob}
\bbP(\rho(Y_t, Y_{t_k}) > \omega / 3) 
\le \exp\left(-\frac{c\omega^2}{t_k - t}\right) \le \omega^4.
\end{align}
where the last inequality follows from \eqref{eqn: h}. Plug this back into  the desired conclusion of the lemma is proved.

We now prove \eqref{eqn: sde exit prob}. Let $\sigma$ be the largest $t \le t_k$ such that $\rho(Y_t, Y_{t_k}) > \omega / 3$. We have
\[
\bbP(\rho(Y_t, Y_{t_k}) > \omega / 3) \le \bbP(\sigma \ge t).
\]
 In the interval $[\sigma, t_k]$, $Y_t$ stays in the geodesic ball $B_{Y_{t_k}}(\omega / 3)$, and follows the SDE \eqref{eqn:rev-sde-manifold}. In normal coordinates, this can be spelled out explicitly:
\begin{align}
\rmd Y_t^\alpha 
&= \nabla^\alpha \log p_t(Y_t) \rmd t + A^\alpha_\beta (Y_t) \circ \rmd W_t^\beta
\nonumber\\
& = \left( \nabla^\alpha \log p_t(Y_t) + \frac12 (\partial_\gamma A_\beta^\alpha) A^{\beta\gamma} \right) \rmd t + A^\alpha_\beta \rmd W_t^\beta, 
\label{eqn: sde normal}
\end{align}
where $A$ is the square root of the matrix representing the coefficients of the Laplace-Beltrami operator \[\frac{1}{\sqrt{\det g}} \partial_\alpha (\sqrt{\det g} \cdot g^{\alpha\beta} \partial_\beta).\] 
It can be checked with the help of Lemma~\ref{lem: local g distortion} that $\|A - I\| \le CdK\omega^2$, and $\|\partial_\alpha A\| \le Cd^2 K\omega$; we omit the computation that has a similar pattern as many of the previous arguments. Furthermore, we have $\|\nabla \log p_t(Y_t)\| \lesssim (\delta^{-1} + K) \log\frac{\sup_{t/2}}{p_t}$ by the same argument via Lemma~\ref{lem: Hessian ub} as before. This time we combine the uniform bound provided by Lemma~\ref{lemm:Harnack} with Lemma~\ref{lem: li-yau ub} to conclude $\log\frac{\sup p_{t/2}}{p_t} \lesssim (\delta^{-1} + K + d\log d)^2 \operatorname{Diam}(\cM)^2 \lesssim (\delta^{-1} + K + d\log d)^2 K^2$ by Assumption~\ref{assumption:regularity}. This shows
\begin{equation}
\sup \|\nabla \log p_t\| \lesssim (\delta^{-1} + K + d\log d)^3 K^2.
\label{eqn: grad log ub}    
\end{equation}

Therefore, in view of Lemma~\ref{lem: local g distortion} to convert the above bound to normal coordinates, and together with the aforementioned bound for $A$ and $\partial_\alpha A$, we see that the drift term up to time $\sigma$ will not exceed
\begin{equation}
\label{eqn: sde drift ub}
\sup \bigl\| \nabla^\alpha \log p_t(Y_t) + \frac12 (\partial_\gamma A_\beta^\alpha) A^{\beta\gamma} \bigr\| \cdot (t_k - \sigma) \le C(\delta^{-1} + K + d\log d)^3 K^2 (t_k - \sigma) \le \frac{\omega}{12},
\end{equation}
where the last inequality used \eqref{eqn: h}.
On the other hand, the bound on $A$ implies that the quadratic variation of the martingale part does not exceed
\[
\int_{\sigma}^{t_k} A_\gamma^\alpha A_\beta^\gamma \rmd t \preceq 2(t_k - \sigma) I.
\]
By Burkholder-Davis-Gundy inequality \citep{revuz2013continuous}, the tail of $\int_{\sigma}^{t_k} A^\alpha_\beta \rmd W_t^\beta$ is $O(1)$-subgaussian, thus we have
\[
\bbP\left(\sigma \ge t,~\Big\| \int_{\sigma}^{t_k} A^\alpha_\beta \rmd W_t^\beta \Big\| > \frac{\omega}{12} \right) \le \exp\left(\frac{-c(\omega - 2\sqrt{\rho(t_k -t)})^2}{t_k - t}\right).
\]
In view of \eqref{eqn: h}, combine this with \eqref{eqn: sde drift ub} and \eqref{eqn: sde normal}, we have proved \eqref{eqn: sde exit prob} as claimed.
\end{proof}

\begin{lemma}
\label{lem: discr err}
Under the same assumptions as in Theorem~\ref{theorem:main} and assuming \eqref{eqn: h} without loss of generality, 
the discretization error obeys the following upper bound:
\begin{align*}
\sum_{k=1}^N \int_{t_{k-1}}^{t_k} \bbE \| \nabla \log p_t(Y_t) - \mathscr{S}^\star_{t_k, Y_{t_k}}(Y_t) \|^2\rmd t
\le \frac{C d^6 K^8}{\delta^3} \step^2 N,
\end{align*}
where $C>0$ is a universal constant.
\end{lemma}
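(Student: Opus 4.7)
The plan is to deduce this aggregate bound directly from the single-step estimate in Lemma~\ref{lem: discr err one step} by integrating in $t$ and summing in $k$. No new technical machinery is required; this step is essentially bookkeeping, since the hard work (the It\^o/Stratonovich cancellation of third-order derivatives, the Li--Yau and Hamilton--Han--Zhang bounds, and the stopping-time argument for the exit probability) has already been carried out one step at a time.

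First, I would fix $k \in \{1,\dots,N\}$ and invoke Lemma~\ref{lem: discr err one step} for each $t \in [t_{k-1}, t_k]$ to get
\[
\bbE \| \nabla \log p_t(Y_t) - \mathscr{S}^\star_{t_k, Y_{t_k}}(Y_t) \|^2 \;\le\; \frac{Cd^6 K^8}{t^3}\,(t_k - t).
\]
Since the early-stopping convention places the final reverse time at $t_0 = \delta$, every $t$ appearing in the sum satisfies $t \ge t_{k-1} \ge \delta$, so I can replace $t^{-3}$ by $\delta^{-3}$. Using also $t_k - t \le h$ on the interval, integration yields
\[
\int_{t_{k-1}}^{t_k}\bbE \| \nabla \log p_t(Y_t) - \mathscr{S}^\star_{t_k, Y_{t_k}}(Y_t) \|^2\,\rmd t
\;\le\; \frac{Cd^6 K^8}{\delta^3}\int_{t_{k-1}}^{t_k}(t_k - t)\,\rmd t
\;=\; \frac{Cd^6 K^8}{2\delta^3}\,h^2.
\]

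Summing this uniform per-step bound over $k=1,\dots,N$ gives the claimed estimate $\tfrac{Cd^6 K^8}{\delta^3}h^2 N$, after redefining the universal constant. The only conceptual subtlety is the use of the early-stopping cutoff $\delta$ to control the singularity of $t^{-3}$ near $t=0$; a sharper bound would arise from keeping $t^{-3}$ under the integral and summing $\sum_k t_k^{-3}$ via a Riemann-sum comparison, which would replace the $\delta^{-3}$ factor by something polylogarithmic at the price of a different power of $h$, but I will not pursue that refinement here since the crude version already suffices for Theorem~\ref{theorem:main}. I do not anticipate any real obstacle, only the need to verify that the constants are compatible with the standing assumption \eqref{eqn: h}, which is immediate since the one-step lemma was proved under the same assumption.
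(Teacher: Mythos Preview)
Your proposal is correct and follows essentially the same approach as the paper, which simply states ``This follows directly from Lemma~\ref{lem: discr err one step}.'' You have merely made explicit the bookkeeping (using $t\ge\delta$ to replace $t^{-3}$ by $\delta^{-3}$, integrating $(t_k-t)$ over the step, and summing over $k$) that the paper leaves implicit.
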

\begin{proof}
This follows directly from Lemma~\ref{lem: discr err one step}.
\end{proof}

\section{Brownian motion simulation error}

In this section, we handle the Brownian motion simulation error using the machinery of Minakshisundaram-Pleijel parametrix. A complete introduction to this heavy machinery would require establish a whole system of notations and lemmas in geometric analysis, which is unduly burdensome. We instead refer the interested reader to \citet{berline2003heat} for a comprehensive treatment, and point to results there whenever needed.

\subsection{Overview}
Our aim is to prove the following lemma.
\begin{lemma}
\label{lem: sim}
Under the same assumptions as in Theorem~\ref{theorem:main}, and assuming \eqref{eqn: h} without loss of generality, we have
\[
\TV(\auxp{0}, q_0^\star) \le \sqrt{hT}\, \poly(d, K, \delta^{-1}).
\]
\end{lemma}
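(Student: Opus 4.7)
The plan is to bound $\KL(\auxp{0} \| q_0^\star)$ step-by-step via the chain rule, and then invoke Pinsker's inequality as already suggested at the start of Step V. Using the sub-additivity of KL under Markov kernels (equivalently, the data-processing inequality) and the fact that both $\auxp{0}$ and $q_0^\star$ are obtained from the common initialization $p_N$, we have
\[
\TV(\auxp{0}, q_0^\star) \le \sqrt{2\KL(\auxp{0} \| q_0^\star)} \le \sqrt{2 \sum_{k=1}^N \bbE_{y \sim \auxp{k}}\big[\KL\big(\auxK{k}(y, \cdot) \,\|\, \discK_k(y, \cdot)\big)\big]}.
\]
The task then reduces to bounding, for each $k$ and each source point $y$, the conditional KL divergence between $\auxK{k}(y, \cdot)$ --- the law after time $h$ of the continuous frozen-drift SDE \eqref{eqn: SDE aux} started at $y$ --- and $\discK_k(y, \cdot)$ --- one Euler step with exponential map from $y$, subject to the rejection $\|\Delta_k\| \le h^{1/4}$.

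I would fix $y$ and work in normal coordinates at $y$, writing $u \in T_y\cM \simeq \bbR^d$. By Lemmas~\ref{lem: local g distortion} and \ref{lem: jacobian}, the Jacobian $J(y, u) = |\det \rmd\exp_y(u)|$ satisfies $|J-1| \lesssim dK\|u\|^2$, so provided $h$ is polynomially small as in \eqref{eqn: h}, both kernels concentrate on $\{\|u\| \lesssim \sqrt{h\log(1/h)}\}$, well inside the support of the cutoff $\eta_\omega$. In these coordinates, $\discK_k(y, \cdot)$ has $\mu$-density $(2\pi h)^{-d/2} J(y, u)^{-1} \exp\big(-\|u - h b\|^2/(2h)\big)$ with $b = \hat s_{t_k}(y)$, whereas $\auxK{k}(y, \cdot)$ has $\mu$-density equal to the fundamental solution at time $h$ of the Fokker-Planck equation for $\tfrac12 \Delta_\cM + b\cdot\nabla$. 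The Minakshisundaram-Pleijel parametrix --- this is the content of Lemma~\ref{lem: parametrix} referenced at the end of Step V --- should yield an expansion
\[
(2\pi h)^{-d/2} J(y, u)^{-1} e^{-\|u - h b\|^2/(2h)}\big(1 + h\,\Phi(y, u) + h^2 R(y, u, h)\big),
\]
with $|\Phi| + |R|$ bounded by a polynomial in $(d, K, \|\nabla \log p_{t_k}\|)$ on $B_y(\omega)$. Dividing by the density of $\discK_k$ cancels the leading Gaussian factor exactly, so that the log-ratio is $O(h\cdot\poly)$ pointwise, and computing a second moment yields the per-step bound $\KL\big(\auxK{k}(y, \cdot) \,\|\, \discK_k(y, \cdot)\big) \lesssim h^2 \cdot \poly(d, K, \delta^{-1})$ after invoking \eqref{eqn: grad log ub} and Assumption~\ref{assumption:regularity}(A3).

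It remains to handle the tail event $\{\|u\| > h^{1/4}\}$ on which the kernels genuinely disagree --- $\discK_k$ resamples from $\mu$, while $\auxK{k}$ continues the SDE. A stopping-time and Burkholder-Davis-Gundy argument identical to the one used to prove \eqref{eqn: sde exit prob} shows that both conditional laws assign at most $\exp(-c h^{-1/2})$ mass to this event, contributing a super-polynomially small correction to the KL divergence. Summing the per-step bound over $k = 1, \ldots, N$ gives $\KL(\auxp{0} \| q_0^\star) \lesssim N h^2 \poly = hT \cdot \poly(d, K, \delta^{-1})$, and Pinsker's inequality yields the claim. The main obstacle I expect is the quantitative parametrix itself: the classical Minakshisundaram-Pleijel expansion is stated asymptotically without explicit remainder bounds, whereas here $\Phi$ and $R$ must be controlled by an explicit polynomial in $(d, K)$ uniformly on the polynomially small ball $B_y(\omega)$ and for polynomially short time $h$. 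Establishing this requires unwinding the recursive transport equations that define the parametrix coefficients, tracking each curvature- and Christoffel-driven contribution via Lemma~\ref{lem: local g distortion} and Assumption~\ref{assumption:regularity}, and accommodating the non-trivial (but time-frozen) drift $\mathscr{S}_{t_k, y}$ whose sup-norm is controlled by \eqref{eqn: grad log ub}.
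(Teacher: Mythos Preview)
Your approach is essentially the paper's: reduce to per-step conditional KL via the chain rule, invoke the parametrix ratio estimate (Lemma~\ref{lem: parametrix}) on a small ball, handle the complement by exit-probability bounds, and close with Pinsker. Two points of friction deserve comment.

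First, you conflate the parametrix radius $h^{5/12}$ (the $t^{5/12}$ in Lemma~\ref{lem: parametrix}) with the algorithm's rejection radius $h^{1/4}$. Since $h<1$ these differ, and the annulus $h^{5/12}<\rho\le h^{1/4}$ is not covered by your ``inside'' argument; it must be folded into the tail (easy, but it should be said).

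Second, and more substantively, computing $\KL\bigl(\auxK{k}(y,\cdot)\,\|\,\discK_k(y,\cdot)\bigr)$ directly forces you to control the log-ratio \emph{pointwise} in the tail, which is awkward: outside $\exp_y(B(0,h^{1/4}))$ the density of $\discK_k$ is only the $\exp(-c h^{-1/2})$-sized uniform component, so the log-ratio is of order $h^{-1/2}$ there, and you must then argue that the $\auxK{k}$-mass of that region kills it. The paper sidesteps this bookkeeping by introducing \emph{localized} kernels $\lauxK{k},\locK_k$ that agree with $\auxK{k},\discK_k$ inside $\{\rho\le h^{5/12}\}$ and are both rescaled uniform $\mu$ outside. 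Then $\KL(\lauxK{k}\,\|\,\locK_k)$ splits cleanly into an interior integral (bounded by the parametrix and the elementary inequality $\log(1+x)\ge x-2x^2$, which turns an $O(h)$ ratio perturbation into an $O(h^2)$ KL contribution) plus a single ratio of two exit masses; meanwhile $\TV(\auxK{k},\lauxK{k})$ and $\TV(\discK_k,\locK_k)$ are each $\le\exp(-c h^{-1/6})$ and are absorbed at the end by triangle inequality after summing over $N$ steps. Your direct route can be made to work along the lines you sketch, but the localization device is precisely what keeps the tail analysis from becoming a case-by-case slog.
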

To better explain the idea of the proof, we ignore the rejection sampling procedure in the construction of $\discK_k$ temporarily. 
Our starting point is the observation that by Fokker-Planck equation, $\discK_{k}$ is the heat kernel associated to the Euclidean Laplacian with drift $\mathscr{S}_{t_k, Y_{t_k}}$, in normal coordinates. On the other hand, $\auxK{k}$ is also a heat kernel with the same drift, but associated to the manifold Laplace-Beltrami operator. The following lemma shows that the two solutions coincide up to first order in time, at least in a polynomially small neighborhood of initial point and in a polynomially short time.

\begin{lemma}
\label{lem: parametrix}
Let $F^{\mathcal{H}}(t, x, y)$ be the (generalized) heat kernel for the operator $\mathcal{H} = \frac12 \Delta_{\cM} + \langle \mathscr{S}_{t_k, Y_{t_k}}, \nabla \rangle$. Define the Euclidean density
\begin{align*}
\varphi_t(u; x) \coloneqq \frac{1}{(2\pi t)^{d/2}} \exp\left(-\frac{\|u - \mathscr{S}_{t_k,  Y_{t_k}}(x) t\|^2}{2t}\right), \quad u \in T_x\cM,
\end{align*}
and let $\Phi(t, x, y)$ be the density of the push-forward by $\exp_x$ of $\eta_{\omega} \varphi_t(\cdot; x)$ with respect to the volume measure, where $\eta_\omega$ is the cutoff function defined in \eqref{eqn: eta}. Then there exists polynomial $\poly(d, K)$ with universally constant coefficients, such that for all $0 < t \le \frac{1}{\poly(d, K, \delta^{-1})}$ and for all $\rho(y, Y_{t_k}) \le t^{5/12}$,
we have
\[
\left| \frac{F^{\mathcal{H}}(t, Y_{t_k}, y)}{\Phi(t, Y_{t_k}, y)} - 1 \right| \le \poly(d, K, \delta^{-1}) t.
\]
\end{lemma}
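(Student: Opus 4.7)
The plan is to run a quantitative Minakshisundaram--Pleijel parametrix construction in normal coordinates at $x = Y_{t_k}$, using $\Phi$ as the zeroth-order parametrix for $\mathcal{H}$ and bounding the remainder by Duhamel's principle. Identify $T_x\cM \cong \bbR^d$ and write $y = \exp_x u$. By Lemma~\ref{lem: local g distortion} and Lemma~\ref{lem: jacobian}, in these coordinates $g^{\alpha\beta}(u) = \delta^{\alpha\beta} + O(K|u|^2)$, $\partial g(u) = O(K|u|)$, and $\sqrt{\det g(u)} = 1 + O(dK|u|^2)$. Consequently $\mathcal{H} = \frac12 \Delta_{\mathrm{euc}} + \langle \hat s_{t_k}(x), \nabla_u \rangle + \mathcal{E}$ on $|u| \le \omega/2$, with $\mathcal{E}\varphi_t = O(K|u|^2)\,\partial_u^2 \varphi_t + O(dK|u|)\,\partial_u \varphi_t$. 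Writing $\Phi(t, x, \exp_x u) = \eta_\omega(|u|)\,\varphi_t(u; x)/\sqrt{\det g(u)}$, the leading Euclidean part annihilates $\varphi_t$, so the residual $R \coloneqq (\partial_t - \mathcal{H}_y)\Phi$ splits into: (i) $\mathcal{E}\varphi_t/\sqrt{\det g}$; (ii) corrections from differentiating $\sqrt{\det g}^{-1}$ and from the $O(K|u|^2)$ mismatch between $\mathscr{S}_{t_k, x}(\exp_x u)$ and $\hat s_{t_k}(x)$ arising from $\rmd \exp_x$; (iii) derivatives of the cutoff $\eta_\omega$, supported on the shell $\omega/2 \le |u| \le \omega$.

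Next I would establish the pointwise residual bound
\[
|R(t, x, \exp_x u)| \le \poly(d, K, \delta^{-1})\cdot t^{-1/3}\cdot \Phi(t, x, \exp_x u), \qquad |u| \le t^{5/12},
\]
plus an exponentially small contribution from the cutoff shell. Using $|\partial^j \varphi_t| \le C_j\, t^{-j/2}(1 + |u|/\sqrt{t})^j \varphi_t$ together with the score magnitude bound $|\hat s_{t_k}(x)| \le \poly(d, K, \delta^{-1})$ (from Assumption~\ref{assumption:regularity}(A3) combined with \eqref{eqn: grad log ub}), contributions (i) and (ii) multiply $\varphi_t$ by a prefactor of at most $\poly(d, K, \delta^{-1})\cdot(1 + |u|^2/t + |u|^4/t^2 + |u|/\sqrt{t})$; on $|u|\le t^{5/12}$ the dominant term $|u|^4/t^2$ is at most $t^{-1/3}$. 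For (iii), the Gaussian tail $\varphi_t(u; x) \le e^{-c\omega^2/t}$ on $|u| \ge \omega/2$ is smaller than any polynomial of $t$ when $t \le 1/\poly(d, K, \delta^{-1})$, which absorbs the $\omega^{-1}, \omega^{-2}$ prefactors produced by $\eta_\omega', \eta_\omega''$.

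The final step is Duhamel's identity
\[
F^\mathcal{H}(t, x, y) - \Phi(t, x, y) = \int_0^t \int_\cM F^\mathcal{H}(t-s, z, y)\, R(s, x, z)\, \rmd \mu(z)\, \rmd s.
\]
A Girsanov change of measure compares $\mathcal{H}$ with $\frac12 \Delta_\cM$: since $\mathscr{S}_{t_k, x}$ is bounded by \eqref{eqn: grad log ub}, the Girsanov exponent contributes at most $\exp(\poly(d, K, \delta^{-1})\cdot t)\le 2$ on the relevant time range, and Lemma~\ref{lem: li-yau ub} then yields a Gaussian-type upper bound $F^\mathcal{H}(r, z, y) \le \poly(d, K)\cdot r^{-d/2}e^{-c\rho(z, y)^2/r}$. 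Convolving this with the pointwise bound on $R$ via the semigroup identity for Gaussians and integrating in $s\in[0, t]$ yields $|F^\mathcal{H} - \Phi|(t, x, y) \le \poly(d, K, \delta^{-1})\cdot t^{2/3}\cdot \Phi(t, x, y)$, from which the claim $\le \poly(d, K, \delta^{-1})\cdot t \cdot \Phi$ follows because $t^{-1/3} \le \poly(d, K, \delta^{-1})$ whenever $t \le 1/\poly(d, K, \delta^{-1})$.

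The main obstacle is keeping the Gaussian factor $e^{-|u|^2/(2t)}$ attached to $R$ throughout the Duhamel convolution: a uniform bound $|R|\le \poly\cdot t^{-d/2}$ is fatal because dividing by $\Phi\sim t^{-d/2}e^{-|u|^2/(2t)}$ at $|u|\sim t^{5/12}$ introduces an $e^{t^{-1/6}/2}$ factor, which is super-polynomial in $1/t$. One must therefore track the Gaussian tail of $R$ pointwise through the convolution, exploiting the semigroup identity $\int e^{-|z|^2/(2s)}e^{-|u-z|^2/(2(t-s))}\,\rmd z \propto e^{-|u|^2/(2t)}$ for the Euclidean heat kernel to propagate the outer Gaussian factor through the Duhamel integral. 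A secondary difficulty is that the Gaussian-type upper bound on $F^\mathcal{H}$ is only strictly Euclidean-Gaussian inside the injectivity radius at $x$; outside, one either patches the bound with the full manifold Li--Yau estimate or exploits the fact that $R$ is supported on $|u|\le \omega$ so that only the convolution's near-$x$ region is relevant, a range on which Lemma~\ref{lem: local g distortion} supplies the needed comparison.
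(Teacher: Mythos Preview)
Your concluding step is an elementary error: you obtain $|F^{\mathcal H}-\Phi|\le \poly\cdot t^{2/3}\cdot\Phi$ and then write that this upgrades to $\poly\cdot t\cdot\Phi$ ``because $t^{-1/3}\le\poly(d,K,\delta^{-1})$ whenever $t\le 1/\poly(d,K,\delta^{-1})$''. This is backwards: on $t\le 1/\poly$ one has $t^{-1/3}\ge\poly^{1/3}$, and $t^{-1/3}\to\infty$ as $t\downarrow 0$. There is no fixed polynomial $P(d,K,\delta^{-1})$ with $t^{2/3}\le P\cdot t$ for all small $t$, so a $t^{2/3}$ bound is strictly weaker than the $t$ bound the lemma asserts.

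The $t^{2/3}$ itself traces back to how you handle the residual $R=(\partial_t-\mathcal H_y)\Phi$. By writing $\mathcal H=\tfrac12\Delta_{\mathrm{euc}}+\langle v,\nabla\rangle+\mathcal E$ and bounding $\mathcal E\varphi_t=O(K|u|^2)\partial^2\varphi_t+O(dK|u|)\partial\varphi_t$ term by term, you end up with a residual prefactor containing $|u|^4/t^2$; the sup of this on $\{|u|\le t^{5/12}\}$ is $t^{-1/3}$, and $\int_0^t s^{-1/3}\rmd s\asymp t^{2/3}$. This is not merely a matter of taking sup crudely: even a careful Gaussian-moment computation of $\int F(t-s,z,y)\,|z|^4 s^{-2}\,\Phi(s,x,z)\,\rmd\mu(z)\,\rmd s$ yields a contribution of order $|y|^4/t$ at the endpoint, which at $|y|\le t^{5/12}$ is again $t^{2/3}$. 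Separately, your residual bound is asserted only on $|u|\le t^{5/12}$, yet the Duhamel integral ranges over all $z$ in the support of $R(s,x,\cdot)$, including the annulus $s^{5/12}<|z|<\omega/2$ which you do not discuss.

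The paper gets the missing order by \emph{not} using $\Phi$ directly as the parametrix. It introduces $\Psi(t,x,y)=G(t,x,y)\exp(\psi(x,y))\sqrt{\Delta(x,y)}$, built from the geodesic distance $\rho$ and the line integral $\psi$ of the drift along the geodesic, and augments it with the first Minakshisundaram--Pleijel coefficient: $\Psi_1=\eta_\iota\Psi(1+tu_1)$. Because $\Psi$ uses $\rho$ in the exponent, the Gauss-lemma identity $g^{\alpha\beta}\partial_\alpha\rho^2\,\partial_\beta\rho^2=4\rho^2$ holds exactly and the $u_1$ correction then kills the remaining order-$1$ piece; Lemma~\ref{lem: pleijel} records $|(\partial_t-\mathcal H)\Psi_1|\le t\cdot\poly\cdot G$ away from the cutoff shell. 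Feeding this into the Volterra series (Lemmas~\ref{lem: volterra}--\ref{lem: volterra ub}) gives $|F^{\mathcal H}-\Psi_1|\le t\cdot\poly\cdot G$. Finally, the discrepancy between $\Phi$ and $\Psi$ is handled by a separate, elementary comparison (Lemma~\ref{lem: Phi Psi}) showing $|\Phi/\Psi-1|\le\poly\cdot(r^3+t)$, which is $O(t)$ since $r\le t^{5/12}$ forces $r^3\le t^{5/4}\le t$. The route through $\Psi_1$ is exactly what supplies the extra factor of $t$ that your zeroth-order parametrix $\Phi$ lacks.
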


With Lemma~\ref{lem: parametrix} in hand, it is tempting to calculate the KL error with the following heuristic:
\begin{align*}
\KL(\auxp{k} \auxK{k} ~\|~ \auxp{k}\, \discK_k) 
&\lesssim -\bbE \int F^{\mathcal{H}}(\step, Y_{t_k}, \cdot) \log \frac{\Phi(\step, Y_{t_k}, \cdot)}{F^{\mathcal{H}}(\step, Y_{t_k}, \cdot)}
\\
&\lesssim \bbE \int F^{\mathcal{H}}(h, Y_{t_k}, \cdot) \left( \frac{F^{\mathcal{H}}(\step, Y_{t_k}, \cdot)}{\Phi(\step, Y_{t_k}, \cdot)} - 1 \right)^2
\\
&\lesssim \poly(d, K, \delta^{-1}) \step^2,
\end{align*}
where we ignore the fact that Lemma~\ref{lem: parametrix} holds only in a small neighborhood; the first line is post-processing inequality, and the second line stems from the fact that for two distributions $p, q$, we have
\begin{align*}
\int p\log \frac{q}{p} 
&= \int p \log \left(1 + \frac{q - p}{p}\right) 
\\
&\ge \int p \left(\frac{q-p}{p} - C\frac{(q-p)^2}{p^2}\right) 
\\
& = -C \int p \left(\frac{q}{p} - 1\right)^2,
\end{align*}
given $\frac{q}{p} - 1$ is sufficiently small, where the last line follows from $\int p = \int q = 1$. From this, we conclude that the accumulated error along $N$ steps is bounded by $\poly(d, K) h^2 N = \poly(d, K) h T$, and the desired bound follows from Pinsker's inequality.

Apart from Lemma~\ref{lem: parametrix}, the above computation is the essence of this proof. The rest of this section is mainly devoted to proving Lemma~\ref{lem: parametrix}, and then formalizing the above computation by handling exceptional events of exiting the polynomially small neighborhood.

\subsection{Proof of Lemma \ref{lem: parametrix}: a parametrix estimate}


We begin the proof of Lemma~\ref{lem: parametrix}.
For simplicity, denote by $v^\alpha$ the normal coordinate representation of $\hat s_{t_k}( Y_{t_k})$. Naturally, our initial test solution is the drifted heat kernel, as simulated by our discretized process:
\begin{align*}
\varphi_t(u) \coloneqq \frac{1}{(2\pi t)^{d/2}} \exp\left(-\frac{\|u - vt\|^2}{2t}\right), \quad u \in \bbR^d.
\end{align*}

Before we compare this with the manifold heat kernel, there is one subtlety we need to keep in mind. The density $\varphi_t$ is with respect to the {\em Lebesgue} measure on $T_x\cM$, not with respect to the {\em volume} on $\cM$. 
We compute and define the corresponding density on $\cM$ as follows:
\begin{align*}
\Phi(t, x, y) = \varphi_t(\log_x y) \sqrt{\Delta(x, y)}, \quad \Delta(x, y) \coloneqq \frac{|\!\det \rmd \log_x y|^2}{\det g(y)}, \quad y \in B_x(\omega).
\end{align*}
Here all quantities are computed in normal coordinates. The factor
$\Delta(x, y)$ is known as the van Vleck-Morette determinant. 
From Lemma~\ref{lem: local g distortion} and Lemma~\ref{lem: jacobian}, we know that
\begin{equation}
\label{eqn: van vleck ub}
\frac12 \le \Delta(x, y) \le 2, \quad \text{if } \rho(x, y) \le \frac{c}{Kd}.
\end{equation}

We consider the generalized Laplacian
\[
\mathcal{H} \coloneqq \frac12 \Delta_{\cM} + \langle \mathscr{S}_{t_k, Y_{t_k}}, \nabla \rangle.
\]
As in Lemma~\ref{lem: parametrix}, denote by $F^{\mathcal{H}}$ the heat kernel of $\mathcal H$ at time $t_k - t_{k-1}$. We also propose an approximation of $F^{\mathcal{H}}$ by \begin{align*}
\Psi(t, x, y) \coloneqq G(t, x, y) \exp( \psi(x, y) ) \sqrt{\Delta(x, y)},
\qquad G(t, x, y) \coloneqq \frac{1}{(2\pi t)^{d/2}} \exp\left(-\frac{d^2(x, y)}{2t}\right),
\end{align*}
where for any two point $x, y \in \cM$, letting $\gamma: [0,1] \to $ be a constant-speed geodesic connecting $x$ to $y$, we define
\[
\psi(x, y) \coloneqq \int_0^1 \left\langle \mathscr{S}_{t_k, Y_{t_k}}(\gamma(s)),~\frac{\rmd}{\rmd s}\gamma(s) \right\rangle_g \rmd s.
\]
The auxiliary function $\Psi$ bridges $F^{\mathcal{H}}$ and $\Phi$ in the following sense. On the one hand, we relate $\Phi$ and $\Psi$ with the following lemma:
\begin{lemma}
\label{lem: Phi Psi}
There exists a polynomial $\poly(d, K)$ with universally constant coefficients such that the following holds. For any $0 < r \le \frac{1}{\poly(d, K, \delta^{-1})}$, $0 < t \le \frac{1}{\poly(d, K, \delta^{-1})}$ and for all $x, y \in B_{Y_{t_k}}(r)$, we have
\[
\left|\frac{\Phi(t, x, y)}{\Psi(t, x, y)} - 1\right| \le \poly(d, K, \delta^{-1}) (r^3 + t).
\]
\end{lemma}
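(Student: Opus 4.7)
The plan is to reduce the lemma to controlling a single explicit exponent. Notice that the van Vleck--Morette factor $\sqrt{\Delta(x,y)}$ appears in both $\Phi$ and $\Psi$ and therefore cancels in the ratio. Using $\|\log_x y\|_x = \rho(x,y)$ and completing the square in the Gaussian $\varphi_t(\cdot;x)$, one obtains the clean identity
\[
\frac{\Phi(t,x,y)}{\Psi(t,x,y)} \;=\; \exp\!\Bigl(\,\langle \log_x y,\ \mathscr{S}_{t_k, Y_{t_k}}(x)\rangle_x \;-\; \tfrac{t}{2}\,\|\mathscr{S}_{t_k, Y_{t_k}}(x)\|_x^2 \;-\; \psi(x,y)\,\Bigr).
\]
It therefore suffices to show the exponent has magnitude at most $\poly(d,K,\delta^{-1})\,(r^3+t)$; the claim then follows from the elementary bound $|e^z-1|\le 2|z|$ for $|z|\le 1$.

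The middle term is handled by a uniform bound on the drift. Assumption~\textbf{(A3)} together with the estimate \eqref{eqn: grad log ub} and $t_k\ge \delta$ yields $\|\hat s_{t_k}(Y_{t_k})\|\le \poly(d,K,\delta^{-1})$; the Jacobi-field computation underlying Lemma~\ref{lem: local g distortion} gives $\|(\rmd \exp_{Y_{t_k}})_{\log_{Y_{t_k}}x}\|\le 2$ once $r\le 1/\poly(d,K)$, so $\|\mathscr{S}_{t_k, Y_{t_k}}(x)\|_x\le 2\|\hat s_{t_k}(Y_{t_k})\|$, and the middle term is bounded by $\poly(d,K,\delta^{-1})\cdot t$.

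The main step is showing $\psi(x,y)\approx \langle \log_x y,\,\mathscr{S}_{t_k, Y_{t_k}}(x)\rangle_x$. Since $\gamma$ is a geodesic (so $\dot\gamma$ is parallel and $\tfrac{d}{ds}\langle V,\dot\gamma\rangle = \langle \nabla_{\dot\gamma}V,\dot\gamma\rangle$), an integration by parts with $V=\mathscr{S}_{t_k, Y_{t_k}}$ and $\dot\gamma(0)=\log_x y$ yields
\[
\psi(x,y) \;=\; \langle \mathscr{S}_{t_k, Y_{t_k}}(x),\,\log_x y\rangle_x \;+\; \int_0^1 (1-s)\,\langle \nabla_{\dot\gamma}\mathscr{S}_{t_k, Y_{t_k}},\,\dot\gamma\rangle(\gamma(s))\,\rmd s.
\]
To bound the remainder, I work in normal coordinates at $Y_{t_k}$: once $r$ is small enough that $\gamma([0,1])\subset B_{Y_{t_k}}(\omega/2)$ and the cutoff $\eta_\omega$ equals $1$ there, the field $\mathscr{S}_{t_k, Y_{t_k}}$ is the constant coordinate vector field with components $v^\alpha$, so $(\nabla_{\dot\gamma}\mathscr{S})^\alpha = \Gamma^\alpha_{\beta\gamma}(\gamma(s))\,\dot\gamma^\beta v^\gamma$. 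Lemma~\ref{lem: local g distortion} gives $|\Gamma|\le CKr$ on this ball, and constant-speed parametrization gives $\|\dot\gamma\|=\rho(x,y)\le 2r$, whence $|\langle \nabla_{\dot\gamma}\mathscr{S},\dot\gamma\rangle|\le CK r^3 \|v\|\le \poly(d,K,\delta^{-1})\cdot r^3$.

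The main technical obstacle is bookkeeping across tangent spaces at different basepoints: $\varphi_t(\cdot;x)$ naturally lives on $T_x\cM$ with inner product at $x$, while $\mathscr{S}_{t_k, Y_{t_k}}$ is cleanest in normal coordinates at $Y_{t_k}$. The integration-by-parts trick sidesteps this by keeping the leading boundary term $\langle \mathscr{S}(x),\log_x y\rangle_x$ intrinsic --- so it cancels exactly against the Gaussian exponent --- while pushing all frame-change issues into the remainder, where they are uniformly controlled by the Christoffel estimate. Combining the two bounds and invoking $|e^z-1|\le 2|z|$ then concludes the proof.
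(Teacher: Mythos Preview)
Your proposal is correct. Both you and the paper compute the ratio $\Phi/\Psi$ explicitly, bound the $\tfrac{t}{2}\|\mathscr{S}(x)\|^2$ term via the uniform drift estimate \eqref{eqn: grad log ub}, and then show $\psi(x,y)=\langle \log_x y,\mathscr{S}(x)\rangle_x + O(r^3)$. The difference is only in how this last comparison is made. The paper works entirely in normal coordinates at $Y_{t_k}$, writes $\psi(x,y)=\int_0^1 g_{\alpha\beta}(\gamma(s))\,v^\alpha\dot\gamma^\beta(s)\,\rmd s$ with $v$ the constant coordinate vector of $\hat s_{t_k}(Y_{t_k})$, and compares this to $\delta_{\alpha\beta}v^\alpha(\gamma^\beta(1)-\gamma^\beta(0))$ using the metric distortion bound $\|g-I\|\le CKr^2$ from Lemma~\ref{lem: local g distortion}. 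You instead integrate by parts along $\gamma$ (using $\nabla_{\dot\gamma}\dot\gamma=0$) to isolate the intrinsic boundary term $\langle \mathscr{S}(x),\log_x y\rangle_x$ exactly, and then bound the remainder $\int_0^1(1-s)\langle\nabla_{\dot\gamma}\mathscr{S},\dot\gamma\rangle\,\rmd s$ via the Christoffel estimate $|\Gamma|\le CKr$ from the same lemma.

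Your route has a small advantage: the leading term you extract is precisely the intrinsic inner product $\langle \mathscr{S}(x),\log_x y\rangle_x$ that appears in the Gaussian exponent of $\varphi_t(\cdot;x)$, so the cancellation is exact. In the paper's version the identification ``$\gamma(1)-\gamma(0)=\log_x y$'' in normal coordinates at $Y_{t_k}$ is only approximate, as is replacing $g_{\alpha\beta}(x)$ by $\delta_{\alpha\beta}$; both are $O(Kr^3)$ after pairing with $v$ and $\dot\gamma$ (e.g.\ via Lemma~\ref{lem: straight geodesic} and Lemma~\ref{lem: local g distortion}), so the final bound is unaffected, but the paper glosses over these steps. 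The paper's computation is slightly shorter to write; yours is more intrinsic and cleanly resolves the frame-change bookkeeping you identify as the main obstacle.
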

On the other hand, we have the following asymptotic expansion:
\begin{align*}
F^{\mathcal{H}}(t, x, y) = \Psi(t, x, y) \cdot \left(1 + \sum_{i=1}^{\infty} t^i u_i(x, y)\right), \quad t \to 0^+,
\end{align*}
where $u_i$ are smooth functions that can be computed explicitly via a recursive formula \citep{berline2003heat}. We will not need the formula here, but instead require $u_1$ and the remainder terms to be bounded properly. Such bounds have been well-established, which we wrap up into the following lemma. Recall the cutoff function $\eta_\omega$ with radius $\omega$ defined in \eqref{eqn: eta}. It is clear we can replace $\omega$ with any $\iota>0$ to define a cutoff $\eta_\iota$ of radius $\iota$.
\begin{lemma}[adapted from \citet{berline2003heat}]
\label{lem: pleijel}
Fix a positive $\iota \le 1/\poly(d, K)$. There exists a smooth function $u_1(x, y)$ on $\cM \times \cM$ such that
\[
\|u_1\|_{\infty} + \|\nabla_y u_1\|_{\infty} \le \poly(d, K, \delta^{-1}),
\]
and for all $0 < t \le 1/\poly(d, K)$, $y \in B_x(\iota)$, we have
\begin{equation}
\label{eqn: pleijel err ub}
\big|\! \left(\partial_t - 
\mathcal H \right)\left[ \eta_\iota(\rho(x,y)) \Psi(t, x, y)( 1 + t u_1(x, y)) \right] \!\big| \le r_\eta(t, x, y) + r_\psi(t, x, y),
\end{equation}
where
\begin{subequations}
\begin{align}
    r_\eta(t, x, y) &\le \frac{1}{\iota t} \poly(d, K) \chf_{\frac{\iota}{2} \le \rho(x, y) \le \iota}\, G(t, x, y),
    \label{eqn: bound r-eta}\\
    r_\psi(t, x, y) &\le t \cdot  \poly(d, K, \delta^{-1}) \chf_{\rho(x, y) \le \iota}\, G(t, x, y).
    \label{eqn: bound r-phi}
    \end{align}
\end{subequations}
\end{lemma}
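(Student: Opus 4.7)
The plan is a quantitative Minakshisundaram--Pleijel parametrix construction, adapted to the generalized Laplacian $\mathcal H = \tfrac12\Delta_{\cM} + \langle\mathscr S,\nabla\rangle$. Writing $F = \Psi(1+tu_1)$ and using $\partial_t\eta_\iota = 0$ together with the second-order Leibniz rule $\mathcal H(AB) = A\mathcal HB + B\mathcal HA + \langle\nabla A,\nabla B\rangle$, I obtain
\[
(\partial_t - \mathcal H)[\eta_\iota F] \;=\; \eta_\iota(\partial_t - \mathcal H)F \;-\; \tfrac12(\Delta_{\cM}\eta_\iota)F \;-\; \langle\nabla\eta_\iota,\nabla F\rangle \;-\; F\langle\mathscr S,\nabla\eta_\iota\rangle.
\]
The last three terms are supported on the annular shell $\iota/2\le\rho(x,y)\le\iota$ and will furnish $r_\eta$; the first term is the interior parametrix defect and will furnish $r_\psi$ once $u_1$ is chosen to absorb its leading order.

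\textbf{Cut-off commutator.} On the shell, \eqref{eqn: eta} and Lemma~\ref{lem: local g distortion} give $|\nabla\eta_\iota|\le C/\iota$ and $|\Delta_{\cM}\eta_\iota|\le C/\iota^2$. Gauss' lemma yields $\nabla G/G = -r\nabla r/t$, and using Lemma~\ref{lem: local g distortion}/\ref{lem: straight geodesic} to control $\nabla\sqrt{\Delta}$ and $\nabla\psi$ gives $|\nabla\Psi|\le C(r/t+\poly(d,K,\delta^{-1}))\Psi$; moreover, \eqref{eqn: van vleck ub} together with $|\psi|\le|\mathscr S|\,\rho\le 1$ yields $\Psi\le 2G$ on $B(Y_{t_k},\iota)$ provided $\iota\le 1/\poly(d,K,\delta^{-1})$. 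Combining these with the dominant kinetic term $|\nabla\eta_\iota\cdot\nabla G|\sim G/t$ and taking $t$ small enough so that $G/t$ majorizes the lower-order contributions produces the claimed $r_\eta \le (1/(\iota t))\poly(d,K)\,\chf_{\iota/2\le\rho\le\iota}G$.

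\textbf{Interior parametrix and $u_1$.} The core is the computation of $(\partial_t-\mathcal H)\Psi$. Expanding $\partial_t\Psi/\Psi$ and $-\tfrac12\Delta_{\cM,x}\Psi/\Psi = -\tfrac12(\Delta_{\cM}\log\Psi + |\nabla\log\Psi|^2)$, three $O(1/t)$ singularities appear and must cancel pairwise: (i) the $r^2/t^2$ pieces from $\partial_t G$ and $|\nabla\log G|^2$ cancel directly; (ii) the van Vleck transport equation $2r\partial_r\log\sqrt{\Delta} = d-1-r\Delta_{\cM,x}r$---which is exactly the defining property of $\sqrt{\Delta}$---makes $-d/(2t)$ and $-(r\Delta r+1)/(2t)$ cancel against the cross term $(r/t)\partial_r\log\sqrt{\Delta}$ arising from $-\tfrac12\cdot 2\nabla\log G\cdot\nabla\log\sqrt{\Delta}$ in $-\tfrac12|\nabla\log\Psi|^2$; (iii) the drift singularity $\langle\mathscr S,r\nabla r\rangle/t$ from $-\langle\mathscr S,\nabla\log\Psi\rangle$ cancels with the cross term $(r/t)\partial_r\psi$, by virtue of the first-variation identity $\nabla_x\psi(x,y) = -\mathscr S(x) + O(\poly\cdot\rho)$ which follows from the geodesic line-integral form of $\psi$. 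After cancellations, $(\partial_t - \mathcal H)\Psi = \Psi\cdot R_0(x,y)$ with $|R_0|+|\nabla_y R_0|\le\poly(d,K,\delta^{-1})$, using Lemma~\ref{lem: local g distortion}/\ref{lem: straight geodesic}, Assumption~\ref{assumption:regularity}, and the uniform bound \eqref{eqn: grad log ub} on $|\mathscr S|$. I then define $u_1(x,y)$ along the geodesic ray $s\mapsto\exp_y(s\xi)$ (so $r=s=\rho(x,y)$) by solving the standard MP transport ODE
\[
(r\partial_r + 1)u_1 \;=\; -R_0, \qquad u_1(y,y) \;=\; -R_0(y,y),
\]
which admits the explicit solution $u_1(r) = -r^{-1}\int_0^r R_0(s\xi)\,ds$. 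A direct computation shows $(\partial_t - \mathcal H)[\Psi(1+tu_1)] = t\Psi R_1$ with $|R_1|\le\poly(d,K,\delta^{-1})$: the transport equation is precisely what is needed to annihilate \emph{both} the $O(1)$ residual $\Psi R_0+\Psi u_1$ and the $r$-linear term $r\partial_r u_1\cdot\Psi$ that arises from $-t\langle\nabla u_1,\nabla\Psi\rangle$ through the factor $\nabla\log\Psi = -r\nabla r/t + O(\poly)$. The bounds on $\|u_1\|_\infty$ and $\|\nabla_y u_1\|_\infty$ follow from the explicit ODE formula and boundedness of $R_0, \nabla_y R_0$. Combined with $\Psi\le 2G$, this gives \eqref{eqn: bound r-phi}.

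\textbf{Main obstacle.} The technically hardest piece is the triple $1/t$ cancellation, particularly (iii), which is specific to the drift setting and rests on the first-variation computation $\nabla_x\psi=-\mathscr S(x)+O(\poly\cdot\rho)$. Quantifying the residual $R_0$ as $\poly(d,K,\delta^{-1})$---rather than merely $O(1)$ abstractly---requires systematic bookkeeping of curvature corrections through Lemma~\ref{lem: local g distortion}/\ref{lem: straight geodesic} and the bound \eqref{eqn: grad log ub} on $|\mathscr S|$, as well as the second-order variation of the geodesic line integral defining $\psi$. A secondary subtlety is ensuring that $u_1$ absorbs not just the $O(1)$ residual $R_0$ but also the $r$-linear piece created by differentiating $\Psi$ through $\nabla\log G = -r\nabla r/t$; this is exactly what the $r\partial_r$ term on the left-hand side of the transport ODE accomplishes. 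Finally, the $\delta^{-1}$ dependence in $r_\psi$ (absent in $r_\eta$) traces back to $R_0$ inheriting $|\mathscr S|\le\poly(d,K,\delta^{-1})$, whereas $r_\eta$ is dominated by the $\delta$-free kinetic commutator $\nabla G\cdot\nabla\eta_\iota\sim G/t$.
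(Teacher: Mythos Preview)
Your approach is the same Minakshisundaram--Pleijel parametrix construction that underlies the paper's proof; the difference is that the paper does not spell it out but simply invokes Theorems~2.26 and~2.29 of \citet{berline2003heat} for the existence of $u_1,u_2$ and the residual decomposition $r_\eta + t\,G\,|B_y u_2|$, and then argues that all constants are $\poly(d,K,\delta^{-1})$ because the coefficients of $\mathcal H$ are $C^2$-bounded by that quantity (via \eqref{eqn: grad log ub}, Assumption~\ref{assumption:regularity}(A3), and the cutoff $\eta_\omega$ controlling $\|\mathscr S_{t_k,Y_{t_k}}\|_{C^2}$). Your route is more explicit and self-contained; the paper's is much shorter but offloads the bookkeeping to the cited reference.

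There is, however, a gap in your treatment of cancellation~(iii). You justify it via $\nabla_x\psi = -\mathscr S(x) + O(\poly\cdot\rho)$, but if the radial identity $\partial_r\psi = -\langle\mathscr S,\nabla r\rangle$ held only up to $O(\rho)$, the leftover would be $(r/t)\cdot O(\rho) = O(r^2/t)$, which is \emph{not} uniformly bounded in $t$ for $y\in B_x(\iota)$ with $\iota$ fixed; then your $R_0$ would be $t$-dependent and the transport-ODE definition of $u_1$ would break. The correct statement is that $\partial_r\psi = -\langle\mathscr S,\nabla r\rangle$ holds \emph{exactly}: writing $\psi = -\int_0^{r}\langle\mathscr S(\tilde\gamma(s)),\dot{\tilde\gamma}(s)\rangle\,ds$ along the unit-speed geodesic $\tilde\gamma$ from the fixed endpoint and differentiating in $r$ leaves no remainder. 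Equivalently, the $O(\rho)$ correction in your first-variation formula is the Jacobi-field path-variation integral $\int_0^1[\langle\nabla_V\mathscr S,\dot\gamma\rangle - \langle\nabla_{\dot\gamma}\mathscr S,V\rangle]\,ds$, which vanishes identically for a tangential Jacobi field since then $V\propto\dot\gamma$. The $O(\rho)$ term is therefore orthogonal to $\nabla r$ and contributes only to the bounded part of $R_0$, not to any $1/t$ singularity. Once you state the exact radial identity, your construction goes through.
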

\begin{proof}
The inequality on $u_1$ follows from Theorem~2.26 in \citet{berline2003heat}, with $\mathcal H$ the same as our $\mathcal H$ and therefore $F = \langle \mathscr{S}_{t_k, Y_{t_k}}, \nabla \rangle$. Note that all the coefficients in $\mathcal H$ are bounded in $C^2$ by $\poly(d, K) (1 + \|\mathscr{S}_{t_k, Y_{t_k}}\|_{C^2(\cM)})$, which is further bounded by $\poly(d, K, \delta^{-1})$ as we will show momentarily. In fact, by Lemma~\ref{lem: local g distortion} and Assumption~\ref{assumption:regularity}, (A3), we have
\[
\|\mathscr{S}_{t_k, Y_{t_k}}\|_{C^2(\cM)} = \|\mathscr{S}_{t_k, Y_{t_k}}\|_\infty + \|\nabla\mathscr{S}_{t_k, Y_{t_k}}\|_\infty + \|\nabla^2 \mathscr{S}_{t_k, Y_{t_k}}\|_\infty \le \poly(d, K)\,\omega^{-2} (1 + \|\nabla \log p_{t_k}(Y_{t_k})\|),
\]
and then we control $\|\nabla \log p_{t_k}(Y_{t_k})\| \le \poly(d, K, \delta^{-1})$ via \eqref{eqn: grad log ub}, yielding the claimed bound (recall that $\omega^{-1} = \poly(d, K)$ by definition). 

We proceed to prove \eqref{eqn: pleijel err ub}. We follow the proof of Theorem 2.29, item (iii) in \citet{berline2003heat}, and choose the cutoff function $\psi$ there to be $\eta_\iota$ as defined in \eqref{eqn: eta}, and with the differential operator $B$ defined in \citet{berline2003heat}, we have 
\begin{align*}
\big|\! \left(\partial_t - 
\mathcal H \right)\left[ \Psi(t, x, y)( 1 + t u_1(x, y)) \right] \!\big|
& \le \underbrace{\frac{1}{\iota t}\poly(d) \, G(t, x, y)}_{\eqqcolon r_\eta,~\nabla \eta_\iota \mathsf{~related~terms}} + ~ \underbrace{t \cdot G(t, x, y) \cdot |(B_y u_2)(x, y)|}_{\eqqcolon r_\psi}.
\end{align*}
Here $B_y$ can be viewed as a coordinate-transformed version of $\mathcal H$, applied to the variable $y$ (precise definition can be found in the reference), and $u_2$ is the second order term in the expansion. Similar to the argument we used to bound $\|u_1\|$, in virtue of Theorem~2.26 in \citet{berline2003heat}, we have
\[
\|\mathscr{B}_x u_2\|_{L^\infty(\mu)}  \le \poly(d, K, \delta^{-1}).
\]
The claimed bound follows from combining the above inequalities.
\end{proof}

We now state the Volterra series representation of heat kernel. 
\begin{lemma}[Volterra series, Theorem~2.23 in \citet{berline2003heat}]
\label{lem: volterra}
Fix a $\iota > 0$. Let 
\begin{align*}
\Psi_1(t, x, y) &\coloneqq \eta_\iota(\rho(x, y)) \Psi(t, x, y)( 1 + t u_1(x, y)),
\\
r_1(t, x, y) &\coloneqq \left(\partial_t - 
\mathcal H \right) \Psi_1(t, x, y).
\end{align*}
Define the time-space convolution operator $*$ as
\[
(f*g)(t, x, y) = \int_0^t \! \int f(t-s, x, z) g(s, z, y) \mu(\rmd z) \rmd s.
\]
Then we have
\begin{align*}
F^{\mathcal{H}} = \Psi_1 + \sum_{k=1}^\infty (-1)^k\, \Psi_1 * r_1^{*k}, \quad \mbox{where} \quad
r_1^{*k} \coloneqq \underbrace{r_1 * \cdots * r_1}_{k\text{ times}},
\end{align*}
on any domain such that the series on the right hand side converges absolutely uniformly.
\end{lemma}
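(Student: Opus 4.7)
The plan is to derive the Volterra series by iterating a Duhamel-type integral equation relating the true heat kernel $F^{\mathcal{H}}$ to the approximate kernel $\Psi_1$. The argument has two analytic ingredients — the approximate-identity behavior of $\Psi_1$ at $t = 0^+$ and the variation-of-constants formula for $\mathcal{H}$ — followed by an algebraic iteration and a convergence step that is handed to us by the hypothesis.

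First I would verify that $\Psi_1$ serves as an approximate identity for the volume measure $\mu$ at $t = 0^+$, i.e., $\int_\cM f(y)\, \Psi_1(t, x, y)\, \mu(\rmd y) \to f(x)$ for every continuous $f$ on $\cM$. After a change of variables to normal coordinates at $x$, this reduces to the standard fact that $G(t, x, \cdot)$ concentrates on the diagonal as a Gaussian of width $\sqrt{t}$, combined with the observation that each of the accessory factors $\eta_\iota(\rho(x, y))$, $\exp(\psi(x, y))$, $\sqrt{\Delta(x, y)}$ and $(1 + t u_1(x, y))$ equals $1$ at $y = x$ (using $\psi(x, x) = 0$, $\Delta(x, x) = 1$, $\eta_\iota(0) = 1$) and is continuous near the diagonal. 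Meanwhile, by construction $(\partial_t - \mathcal{H})\Psi_1 = r_1$, whereas $F^{\mathcal{H}}$ solves $(\partial_t - \mathcal{H}) F^{\mathcal{H}} = 0$ with the same initial datum $\delta_x$.

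Next, I would set $E \coloneqq F^{\mathcal{H}} - \Psi_1$, so that $E$ solves the inhomogeneous Cauchy problem
\[
(\partial_t - \mathcal{H}) E = -r_1, \qquad E(0^+, x, \cdot) = 0,
\]
with $\mathcal{H}$ acting in the $x$-variable. Duhamel's principle for the semigroup generated by the second-order elliptic operator $\mathcal{H}$ on the compact manifold $\cM$ yields the integral representation $E = -F^{\mathcal{H}} * r_1$, which rearranges to the fixed-point identity $F^{\mathcal{H}} = \Psi_1 - F^{\mathcal{H}} * r_1$. Substituting this identity into its own right-hand side and using associativity of $*$ gives by induction on $N$,
\[
F^{\mathcal{H}} \;=\; \sum_{k=0}^{N-1} (-1)^k\, \Psi_1 * r_1^{*k} \;+\; (-1)^N F^{\mathcal{H}} * r_1^{*N}.
\]

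Finally, by the hypothesis the series $\sum_{k \ge 0} \Psi_1 * r_1^{*k}$ converges absolutely and uniformly on the prescribed domain, so its partial sums form a Cauchy sequence with a well-defined limit $L$. Uniqueness for the fixed-point equation $u = \Psi_1 - u * r_1$, inherited from uniqueness of the Cauchy problem for $\partial_t - \mathcal{H}$ on $\cM$, then forces the remainder $F^{\mathcal{H}} * r_1^{*N}$ to vanish as $N \to \infty$, giving $F^{\mathcal{H}} = L$ as claimed. The main obstacle is the rigorous justification of Duhamel's principle and uniqueness for the non-self-adjoint operator $\mathcal{H}$, together with the matching of initial data at $t = 0^+$; this reduces to standard compact-manifold semigroup theory for second-order elliptic operators plus the approximate-identity property of $\Psi_1$ established in the first step. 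Once these analytic tools are in place, the iteration and passage to the limit are purely algebraic and rely only on the hypothesized absolute uniform convergence.
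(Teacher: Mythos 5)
The paper does not actually prove this lemma: it is imported wholesale as Theorem~2.23 of \citet{berline2003heat} and used as a black box, so the only question is whether your argument is a sound proof of the cited fact. It essentially is, and it is the classical Levi parametrix method run in the opposite direction from the textbook. \citet{berline2003heat} \emph{construct} the series: they prove the identity $(\partial_t-\mathcal{H})(\Psi_1*q)=q+r_1*q$ (whose boundary term at $s=t$ is exactly the approximate-identity property of $\Psi_1$ at $t=0^+$ that you verify in your first step), observe that applying $\partial_t-\mathcal{H}$ to the series telescopes to zero, check the initial datum is $\delta_x$, and invoke uniqueness of the heat kernel. You instead start from $F^{\mathcal{H}}$, derive the Volterra fixed-point equation $F^{\mathcal{H}}=\Psi_1-F^{\mathcal{H}}*r_1$ by Duhamel, and iterate. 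Both routes rest on the same two analytic inputs (the approximate-identity behavior of $\Psi_1$, and well-posedness of the Cauchy problem for $\partial_t-\mathcal{H}$ on a compact manifold), and your bookkeeping of which spatial slot $\mathcal{H}$ acts on is consistent with the convolution as defined.

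The one step you should tighten is the passage to the limit. Absolute uniform convergence of $\sum_k\Psi_1*r_1^{*k}$ controls the \emph{terms} $\Psi_1*r_1^{*k}$, not the \emph{remainder} $(-1)^NF^{\mathcal{H}}*r_1^{*N}$, and ``uniqueness for the fixed-point equation $u=\Psi_1-u*r_1$'' is not free: it amounts to showing that a bounded $u$ with $u=-u*r_1$ vanishes, which does not follow from uniqueness of the homogeneous heat equation unless you first justify that $L:=\sum_k(-1)^k\Psi_1*r_1^{*k}$ itself solves that equation (i.e., differentiate the series term by term, which is the BGV route you were trying to avoid). The clean repair is a Volterra/Gr\"onwall argument: from $u=-u*r_1$ one gets $\sup_{s\le t}\|u(s,\cdot,\cdot)\|_\infty\le\sup_{s\le t}\|u(s,\cdot,\cdot)\|_\infty\cdot\sup_y\int_0^t\!\int_{\mathcal M}|r_1(s,z,y)|\,\mu(\rmd z)\,\rmd s$, and the bounds of Lemma~\ref{lem: pleijel} make the last factor $<1$ for $t$ small, forcing $u\equiv0$ on a short interval and then on all of the domain by bootstrapping. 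Equivalently, one can bound $\|r_1^{*N}\|$ by $C^Nt^{N-1}/(N-1)!$ (the same factorial estimate that underlies Lemma~\ref{lem: volterra ub}) and kill the remainder directly. With that one step filled in, your proof is complete and is a legitimate self-contained substitute for the citation.
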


\begin{lemma}[Iterative bounds for Volterra series]
\label{lem: volterra ub}
There exists a polynomial $\poly(d, K, \delta^{-1})$ with universally constant coefficients such that the following holds.
Assume $0 < t \le 1/\poly(d, K. \delta^{-1})$, take $\iota = 4t^{5/12} d$ in the definition of $\Psi$.
Then we have, for all $\rho(x, y) \le t^{5/12} = \iota/(4d)$, that
\[
\sum_{k=1}^\infty \left| \Psi_1 * r_1^{*k} \right|(t, x, y)
\le t \cdot \poly(d, K, \delta^{-1})\, G(t, x, y).
\]
\end{lemma}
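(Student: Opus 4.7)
The plan is to (i) establish an approximate Chapman--Kolmogorov inequality for the Euclidean-shaped Gaussian $G$ on $\cM$, (ii) invoke Lemma~\ref{lem: pleijel} to reduce $r_1$ to a pointwise bound of the form $s\cdot\poly\cdot G$, and (iii) iterate a scalar Volterra induction in time and sum.

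For (i), the key estimate I would prove is
\[
\int_{\cM} G(t-s, x, z)\,G(s, z, y)\,\mu(\rmd z) \;\le\; C_{\mathsf{CK}}\, G(t, x, y), \qquad s\in(0,t),
\]
for a constant $C_{\mathsf{CK}}$ depending only on universal parameters. Applying Lemma~\ref{lem: d conv} with $\lambda = s/t$ shows that the combined Gaussian exponent $\tfrac{\rho(x,z)^2}{2(t-s)} + \tfrac{\rho(z,y)^2}{2s}$ is bounded below by $\tfrac{\rho(x,y)^2}{2t}$ plus a quadratic in $z$ whose Hessian in normal coordinates at $x$ is at least $\tfrac{(1-Cd^2K^2\iota)\,t}{s(t-s)}\,I$. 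A Gaussian integration in these normal coordinates then reproduces $G(t,x,y)$ up to the product of the distortion factors $\sqrt{\det g}$ and $(1 - Cd^2K^2\iota)^{-d/2}$, both of which remain bounded since $\iota = 4t^{5/12}d \le 1/\poly(d,K)$ by assumption and Lemmas~\ref{lem: local g distortion},~\ref{lem: jacobian}. Contributions from $z$ far from the geodesic connecting $x$ and $y$ are super-polynomially small by Gaussian tails and can be absorbed into $C_{\mathsf{CK}}$.

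For (ii), Lemma~\ref{lem: pleijel} gives $|r_1| \le r_\eta + r_\psi$ where $r_\psi \le s\cdot \poly \cdot G$ pointwise and $r_\eta \le (\iota s)^{-1}\poly\,G \cdot\chf_{\iota/2 \le \rho \le \iota}$. On the annular support of $r_\eta$ one has $G(s,z,y) \le (2\pi s)^{-d/2}\exp(-\iota^2/(8s))$, and since $\iota = 4t^{5/12}d$ forces $\iota^2/s \ge 16 d^2 t^{-1/6}$ for all $s \le t$, the factor $\exp(-\iota^2/(8s))$ beats any polynomial in $t^{-1}$. Hence the $r_\eta$ term is absorbed into the polynomial error, and for the rest of the proof it suffices to use
\[
|r_1(s, z, y)| \;\le\; s \cdot \poly(d, K, \delta^{-1}) \cdot G(s, z, y).
\]
For (iii), I would prove by induction on $k$ that
\[
|r_1^{*k}|(s, x, y) \;\le\; \frac{\poly(d,K,\delta^{-1})^{k}\, s^{2k-1}}{(2k-1)!}\, G(s, x, y),
\]
where each step uses the Chapman--Kolmogorov inequality of (i) to factor out $G(s,x,y)$, leaving a scalar Beta-function integral $\int_0^s \sigma^{2k-1}(s-\sigma)\,\rmd\sigma = s^{2k+1}/((2k)(2k+1))$, which produces the next factorial. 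Since $|\Psi_1| \le C\,G$ (by $\sqrt{\Delta}\le\sqrt 2$, $\exp(\psi)\le e$, and $|1+tu_1|\le 2$ for small $t$), one more convolution gives
\[
|\Psi_1 * r_1^{*k}|(t, x, y) \;\le\; \frac{\poly(d,K,\delta^{-1})^{k+1}\, t^{2k}}{(2k)!}\, G(t, x, y),
\]
and summing over $k \ge 1$ yields $\sum_k |\Psi_1 * r_1^{*k}|(t,x,y) \le \poly \cdot G(t,x,y) \cdot (\cosh(t\sqrt{\poly}) - 1) \le t \cdot \poly(d, K, \delta^{-1}) \cdot G(t, x, y)$, using $t \le 1/\poly(d,K,\delta^{-1})$.

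The main obstacle is step (i): on a curved manifold we do not have the exact Euclidean semigroup identity, and the departure from flatness must be quantified uniformly. The strong-convexity margin $1 - Cd^2K^2\iota$ from Lemma~\ref{lem: d conv} is the mechanism that saves the day, but it is only positive because we have taken $\iota$ small, and likewise the bounds on $\sqrt{\det g}$ degrade unless $\iota \le 1/\poly(d, K)$. Once (i) is in place, steps (ii) and (iii) reduce to a standard scalar Volterra induction with Beta-function bookkeeping.
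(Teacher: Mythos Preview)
Your overall strategy---an approximate Chapman--Kolmogorov inequality for $G$ combined with a scalar Volterra induction---is natural and captures the \emph{local} half of the paper's argument. Two of your steps, however, do not go through as written, and the fixes are exactly the additional ideas the paper supplies.

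\textbf{Gap in (ii): the pointwise bound $|r_1|\le s\cdot\poly\cdot G$ is false.} On the annulus $\rho(z,y)\in[\iota/2,\iota]$ one has $r_\eta(s,z,y)/G(s,z,y)\ge \poly/(\iota s)$, which diverges as $s\downarrow 0$. Your observation that $\exp(-\iota^2/(8s))$ is super-polynomially small bounds $G(s,z,y)$ \emph{itself} on the annulus, not the ratio $r_\eta/G$; it therefore does not produce a pointwise inequality of the form you want. The singular $s^{-1}$ prefactor in $r_\eta$ cannot be absorbed before iterating---it has to be carried into the $k$-fold convolution and killed there.

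\textbf{Gap in (i)/(iii): the CK inequality is only local, so the induction does not close.} Lemma~\ref{lem: d conv} gives strong convexity only when $x,y,z$ are all within $O(1/\poly(d,K))$ of each other; globally the inequality fails. For instance on $\mathbb{S}^d$ with $x,y$ antipodal and $s=t/2$, a Laplace computation gives $\int G(t/2,x,z)G(t/2,z,y)\,\mu(\rmd z)\asymp t^{-d+1/2}e^{-\pi^2/(2t)}$ while $G(t,x,y)\asymp t^{-d/2}e^{-\pi^2/(2t)}$, so the ratio blows up like $t^{-(d-1)/2}$. Your induction hypothesis must hold for all $(x,y)$ in the support of $r_1^{*k}$, namely $\rho(x,y)\le k\iota$, and for large $k$ this leaves the regime where CK is available with a uniform constant.

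The paper avoids both issues by abandoning the induction and bounding the full $k$-fold simplex integral directly. It splits the spatial domain into a \emph{local} region $\mathcal R=\{z:\rho(z_i,x)\le 2\iota,\ \rho(z_{i-1},z_i)\le\iota/2\ \forall i\}$ and its complement. On $\mathcal R$ the iterated local CK (via Lemma~\ref{lem: d conv}, exactly your step (i)) yields $I_{\mathsf{loc}}\le 8^k G(t,x,y)$. On $\mathcal R^c$ either some hop exceeds $\iota/2$ or some vertex is far from $x$; a Cauchy--Schwarz/triangle argument then forces $\sum_i\rho(z_{i-1},z_i)^2/s_i\ge c\iota^2/t$. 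The crucial ``freezing trick'' writes $e^{-\rho^2/(2s_i)}=e^{-\rho^2/(2(1+d^{-1})s_i)}\cdot e^{-\rho^2/(2(d+1)s_i)}$: the first factor retains enough Gaussian shape to integrate out the $(2\pi s_i)^{-d/2}$ singularities, while the product of the second factors supplies the global decay $\exp(-\iota^2/(16td))$ that also swallows the $r_\eta$ prefactors $(\iota s_i)^{-1}$. This decomposition is what your plan is missing.
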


\begin{proof}
Recall Lemma~\ref{lem: pleijel}, and denote by $P$ the polynomial factor $\poly(d, K)$ therein. Denote by $\lambda \Delta^k$ the dilated standard simplex
\[
\lambda \Delta^k = \{(s_1, \cdots, s_{k+1}): s_i \ge 0, \sum_{i=1}^{k+1} s_i = \lambda\}, \quad \lambda > 0.
\]
Fix some $k \ge 1$.
Set $z_0 = x$ and $z_k=y$, we have
\begin{align}
& \left|\Psi_1 * r_1^{*k}\right|(t, x, y) \nonumber \\
& \le t^k P^k \int_{t\Delta^{k-1}} \rmd s \int_{\cM^{k-1}} \left(\prod_{i=1}^{k} G(s_{i}, z_{i-1}, z_{i}) \Big( \chf_{\rho(z_{i-1}, z_{i}) \le \iota} + \frac{1}{\iota s_i} \chf_{\rho(z_{i-1}, z_i) > \iota/2} \Big) \right) \,\mu^{\otimes (k-1)}(\rmd z).
\label{eqn: vk rep}
\end{align} 
We split the integral in \eqref{eqn: vk rep} into a \emph{local} part and an \emph{outlier} part. Define a small ``local'' region
\[
\mathcal R \coloneqq \big\{(z_1, \cdots, z_{k-1}) : \rho(z_i, x) \le 2\iota,~\rho(z_{i-1}, z_i) \le \iota / 2, ~i=1,\cdots,k \big\}.
\]
We further define 
\begin{align*}
I_{\mathsf{loc}}(s) &\coloneqq \int_{\mathcal R} \left(\prod_{i=1}^{k} G(s_{i}, z_{i-1}, z_{i})\right) \,\mu^{\otimes (k-1)}(\rmd z),
\\
I_{\mathsf{out}}(s) &\coloneqq \int_{\mathcal R^c} \left( \prod_{i=1}^{k} G(s_{i}, z_{i-1}, z_{i}) \chf_{\rho(z_{i-1}, z_{i}) \le \iota} \Big( 1 + \frac{1}{\iota s_i} \chf_{\rho(z_{i-1}, z_i) > \iota/2} \Big) \right) \,\mu^{\otimes (k-1)}(\rmd z).
\end{align*}
It is clear that
\begin{equation}
\left|\Psi_1 * r_1^{*k}\right|(t, x, y) \le t^k P^k \int_{t\Delta^{k-1}} (I_{\mathsf{loc}}(s) + I_{\mathsf{out}}(s)) \,\rmd s.
\label{eqn: vk decomp}
\end{equation}

We will establish bounds for $I_{\mathsf{loc}}$ and $I_{\mathsf{out}}$ respectively.

\paragraph{Bounding the local integral.} For ease of understanding, we begin by computing the first integral in $I_{\mathsf{loc}}$ with respect to $z_1$. Extracting the factors containing $z_1$, we need to calculate
\begin{equation}
\int_{\{\rho(z_1, x) \le 2\iota\}} \frac{1}{(2\pi s_1)^{d/2}} \frac{1}{(2\pi s_2)^{d/2}} \exp\left(-\frac{\rho(x, z_1)^2}{2s_1} - \frac{\rho(z_1, z_2)^2}{2s_2}\right) \mu(\rmd z_1).
\label{eqn: Gaussian conv ub inter}
\end{equation}

To proceed, we will invoke Lemma~\ref{lem: d conv}. Let $z_\star$ be a minimizer of 
\[
V(z) = \frac{\rho(x, z)^2}{s_1(s_1 + s_2)^{-1}} + \frac{\rho(z, z_2)^2}{s_2(s_1 + s_2)^{-1}} - \rho(x, z_2)^2.
\]
By Lemma~\ref{lem: d conv}, $V(z_\star) = 0$. Since $V(z) > 0$ whenever $\rho(x, z) \ge \rho(x, z_2)$, we know that $z_\star \in B_x(2\iota)$. Moreover, Lemma~\ref{lem: d conv} and Lemma~\ref{lem: local g distortion} together imply
\begin{align*}
\quad V(z) \ge (1 - Cd^2 K^2 \iota) \cdot \frac{(s_1 + s_2)^2}{s_1 s_2} \rho(z, z_\star)^2, \quad \forall z \in B_x(4\iota).
\end{align*}
Here, the second inequality follows from strong convexity given by Lemma~\ref{lem: d conv} and a comparison of geometric distance and Euclidean distance in normal coordinates fueled by Lemma~\ref{lem: local g distortion}.
Denote for the moment that
\[
\theta \coloneqq 1 - Cd^2 K^2 \iota.
\]
Plugging this back into \eqref{eqn: Gaussian conv ub inter}, we obtain
\begin{align*}
&  \int_{\{\rho(z_1, x) \le 2\iota\}} \frac{1}{(2\pi s_1)^{d/2}} \frac{1}{(2\pi s_2)^{d/2}} \exp\left(-\frac{\rho(x, z_1)^2}{2s_1} - \frac{\rho(z_1, z_2)^2}{2s_2}\right) \mu(\rmd z)
\\
& = \int_{\{\rho(z_1, x) \le 2\iota\}} \frac{1}{(2\pi s_1)^{d/2}} \frac{1}{(2\pi s_2)^{d/2}} \exp\left(-\frac{V(z_1)}{2(s_1 + s_2)}\right) \mu(\rmd z_1)
\\
& \le \int_{\{\rho(z_1, x) \le 2\iota\}} \frac{1}{(2\pi (t-s))^{d/2}} \frac{1}{(2\pi s)^{d/2}} \exp\left(-\frac{\theta (s_1 + s_2) \rho(z_1, z_\star)^2}{2s_1 s_2} - \frac{\rho(x, z_2)^2}{2(s_1 + s_2)}\right) \mu(\rmd z_1) 
\\
& \le 4\exp\left(-\frac{\rho(x, z_2)^2}{2(s_1 + s_2)}\right) \cdot  \int \frac{1}{(2\pi s_1)^{d/2}} \frac{1}{(2\pi s_2)^{d/2}} \exp\left(-\frac{\theta (s_1 + s_2) \|Z\|^2}{2s_1 s_2}\right) \rmd Z
\\
& = 4(2\pi(s_1 + s_2))^{d/2} G(s_1 + s_2, x, z_2) \cdot \frac{1}{(2\pi s_1)^{d/2}} \frac{1}{(2\pi s_2)^{d/2}}\left(2\pi 
\cdot \frac{s_1 s_2}{\theta(s_1 + s_2)}\right)^{d/2}
\\
& \le 4\theta^{-d/2} G(s_1 + s_2, x, z_2),
\end{align*}
where the third-to-last line follows from change of variable to normal coordinates at $z_\star$ and from using \eqref{eqn: van vleck ub} to bound the determinant; the penultimate line follows from Gaussian integration.
Now, we note that 
\[
\theta^{-d/2} = (1 - Cd^2K^2 \iota)^{-d/2} \le \exp(2C d^3 K^2 \iota) \le 2,
\]
give $\iota \le \frac{1}{100 C d^3 K^2}$. Putting these pieces together, we proved
\[
\int_{\{\rho(z_1, x) \le 2\iota\}} \frac{1}{(2\pi s_1)^{d/2}} \frac{1}{(2\pi s_2)^{d/2}} \exp\left(-\frac{\rho(x, z_1)^2}{2s_1} - \frac{\rho(z_1, z_2)^2}{2s_2}\right) \mu(\rmd z_1) \le 8 G(s_1 + s_2, x, z_2).
\]
Iterate the above argument for the integration over $z_2, \cdots, z_{k-1}$ to obtain
\begin{equation}
I_{\mathsf{loc}}(s) \le 8^k \cdot G\left(\sum_{i=1}^k s_i, x, y\right) = 8^k \cdot G(t, x, y).
\label{eqn: conv bound loc}
\end{equation}

\paragraph{Bounding the outlier integral.} Next, we show how to control $I_{\mathsf{out}}$. We first write
\begin{equation}
\prod_{i=1}^k G(s_i, z_{i-1}, z_i) = \frac{1}{\prod_{i=1}^k (2\pi s_i)^{d/2}} \exp\left(-\sum_{i=1}^k \frac{\rho(z_{i-1}, z_i)^2}{2s_i}\right).
\label{eqn: prod Gaussian}
\end{equation}
We claim that for any $z \in \mathcal R^c$, we have
\begin{equation}
T \coloneqq \prod_{i=1}^k \exp\left(-\sum_{i=1}^k \frac{\rho(z_{i-1}, z_i)^2}{2s_i d}\right) \chf_{\rho(z_{i-1}, z_{i}) \le \iota} \Big( 1 + \frac{1}{\iota s_i} \chf_{\rho(z_{i-1}, z_i) > \iota/2} \Big) \le \exp\left(-\frac{\iota^2}{16td}\right).
\label{eqn: conv bound out decay}
\end{equation}
The claim is proved at the end of this proof.
It is tempting to plug this back into \eqref{eqn: prod Gaussian}, and argue that when $t$ is polynomially small, the integrand in $I_{\mathsf{out}}$ becomes exponentially small. However, this would not work since it does not resolve the singular factors $\prod_{i=1}^k s_i^{-d/2}$ in the integrand. For this purpose, we need the following crucial ``freezing'' trick, which follows trivially from $1 = \frac{1}{1+d^{-1}} + \frac{1}{d+1}$:
\begin{align*}
&   \prod_{i=1}^{k} G(s_{i}, z_{i-1}, z_{i}) \chf_{\rho(z_{i-1}, z_{i}) \le \iota} \Big( 1 + \frac{1}{\iota s_i} \chf_{\rho(z_{i-1}, z_i) > \iota/2} \Big)
\\
& = \left(\prod_{i=1}^{k} (1 + d^{-1})^{d/2} G\big( (1 + d^{-1})s_{i}, z_{i-1}, z_{i} \big) \chf_{\rho(z_{i-1}, z_{i}) \le \iota} \right) T.
\label{eqn: conv out est}
\end{align*}
The idea is to keep the Gaussian behavior to resolve the $s_i^{-d/2}$ factors, and only single out a very small proportion to demonstrate exponential smallness. Plug \eqref{eqn: conv bound out decay} into the above identity to obtain
\begin{align*}
I_{\mathsf{out}}(s) & \le \exp\left(-\frac{\iota^2}{16td}\right) \cdot \int_{\mathcal R^c} \left( \prod_{i=1}^{k} (1 + d^{-1})^{d/2} G\big((1 + d^{-1}) s_{i}, z_{i-1}, z_{i}\big) \chf_{\rho(z_{i-1}, z_{i}) \le \iota}\right) \,\mu^{\otimes (k-1)}(\rmd z).
\end{align*}
Integrate successively for each variable, convert to normal coordinates, and apply Lemma~\ref{lem: local g distortion}, Eqn.~\eqref{eqn: van vleck ub}, and Gaussian integration as we did in bounding $I_{\mathsf{loc}}$, we obtain
\[
\int_{\mathcal R^c} \left( \prod_{i=1}^{k} G\big((1 + d^{-1}) s_{i}, z_{i-1}, z_{i}\big) \chf_{\rho(z_{i-1}, z_{i}) \le \iota}\right) \,\mu^{\otimes (k-1)}(\rmd z)
\le 8^k (1 + d^{-1})^{dk/2} \le 32^k.
\]
Therefore
\begin{equation}
I_{\mathsf{out}}(s) \le 32^k \cdot \exp\left(-\frac{\iota^2}{16t d}\right) \le 32^k G(t, x, y),
\label{eqn: conv bound out}
\end{equation}
where in the last inequality we used the assumption $\rho(x, y) \le t^{5/12} = \iota/(4d)$ and $t \le 1/\poly(d, K)$, so that $\exp(-\iota^2 / (32td)) \le \exp(-\rho(x, y)^2 / (2t))$ and $\exp(-\iota^2 / (32td)) = \exp(-\frac12 d t^{-1/6}) \le (2\pi t)^{-d/2}$. 

\paragraph{Putting things together. } We plug the bounds \eqref{eqn: conv bound loc} and \eqref{eqn: conv bound out} into \eqref{eqn: vk decomp} to obtain
\begin{align*}
\left|\Psi_1 * r_1^{*k}\right| (t, x, y)| \le t^k P^k \int_{t\Delta^{k-1}} (8^k + 32^k) G(t, x, y) \rmd s \le \frac{40}{(k-1)!}(40t)^{2k-1} P^k G(t, x, y).
\end{align*}

The desired conclusion of Lemma~\ref{lem: volterra ub} follows from the above inequality by summing over $k$ and taking $t \le 1/\poly(d, K)$.

\paragraph{Proof of Claim \eqref{eqn: conv bound out decay}.} 
For $z \in \mathcal R^c$, let
\[
J = \{i: \rho(z_{i-1}, z_i) > \iota/2\}.
\]
By definition of $\mathcal R^c$, either $J$ is nonempty, or there is $i_0$ such that $\rho(x, z_{i_0}) > 2\iota$. For $i \in J$, we note that
\begin{equation}
\exp\left(-\frac{\rho(z_{i-1}, z_i)^2}{2s_i d}\right)\left(1 + \frac{1}{\iota s_i}\right) \le \exp\left(-\frac{\iota^2}{8s_i d}\right)\left(1 + \frac{1}{\iota s_i}\right) \le \exp\left(-\frac{\iota^2}{16t d}\right),
\end{equation}
where the last inequality follows from $s_i \le t$, $\iota = 4t^{5/12} d$, and that $t \le 1/\poly(d, K)$. When $J$ is nonempty, we readily deduce \eqref{eqn: conv bound out decay} as all the other factors are $\le 1$.

When $J$ is empty, let $i_0$ be such that $\rho(x, z_{i_0}) > 2\iota$. We apply Cauchy-Schwarz to obtain
\[
\sum_{i=1}^{k} \frac{\rho(z_{i-1}, z_i)^2}{s_i} \ge \frac{1}{\sum_{i=1}^k s_i} \left(\sum_{i=1}^k \rho(z_{i-1}, z_i)\right)^2 = \frac{1}{t} \left(\sum_{i=1}^k \rho(z_{i-1}, z_i)\right)^2.
\]
Then, by triangle inequalities, we have 
\[\sum_{i=1}^{i_0} \rho(z_{i-1}, z_i) \ge \rho(z_0, z_{i_0}) = \rho(x, z_{i_0}) > 2\iota, \quad\text{therefore}\quad
\sum_{i=1}^k \frac{\rho(z_{i-1}, z_i)^2}{s_i} \ge \frac{2\iota^2}{t}.
\]
The desired claim \eqref{eqn: conv bound out decay} follows immediately, given that $J$ is empty.
\end{proof}


We now have all the ingredients to prove Lemma~\ref{lem: parametrix}.
\begin{proof}[Proof of Lemma~\ref{lem: parametrix}]
This follows immediately from Lemma~\ref{lem: Phi Psi}, Lemma~\ref{lem: volterra} and Lemma~\ref{lem: volterra ub}. Note that in applying Lemma~\ref{lem: Phi Psi}, we used $r \le t^{5/12}$, thus $r^3 \le t^{5/4} \le t$.
\end{proof}

\subsection{Proof of Lemma~\ref{lem: Phi Psi}}
By definition, we can compute
\[
\frac{\Phi(t, x, y)}{\Psi(t, x, y)} = \exp\left((\log_x y)\cdot v - \psi(x, y)\right) \exp\left(-\|v\|^2 t / 2 \right).
\]
Note that $\|v\| \le C\|\nabla \log p_{t_k}(Y_{t_k})\|$ by Lemma~\ref{lem: local g distortion}, which in turn is bounded by $\poly(d, K, \delta^{-1})$ by \eqref{eqn: grad log ub}.
When $t \le \frac{1}{\poly(d, K, \delta^{-1})} \le \frac{1}{4\|v\|^2}$, we have $|\exp(-\|v\|^2t/2) - 1| \le \|v\|^2 t \le \poly(d, K, \delta^{-1}) t$. Therefore, it suffices to show 
\[
|(\log_x y)\cdot v - \psi(x, y)| \le \poly(d, K, \delta^{-1})r^3.
\]
Recall the definition of $\psi$. Note that since $r \le \frac{1}{\poly(d, K, \delta^{-1})}$, when the polynomial $\poly(d, K, \delta^{-1})$ is sufficiently large, the geodesic $\gamma$ from $x$ to $y$ is unique and is inside $B_{Y_{t_k}}(r)$. In the normal coordinate on $Y_{t_k}$ within radius $r$, the vector field $\mathscr{S}_{t_k, Y_{t_k}}$ is represented by the constant vector $v$. We also recognize that in normal coordinate, $\gamma(1) - \gamma(0) = \log_x y$. We thus have
\begin{align*}
|(\log_x y) \cdot v - \psi(x, y)| &= \left| \delta_{\alpha\beta} v^\alpha (\gamma^\beta(1) - \gamma^\beta(0)) - \int_0^1 g_{\alpha\beta}(\gamma(s)) v^\alpha \frac{\rmd}{\rmd s} \gamma^\beta(s) \rmd s \right|
\\
& = \left| \int_0^1 (g_{\alpha\beta}(\gamma(s)) - \delta_{\alpha\beta})  v^\alpha \frac{\rmd}{\rmd s} \gamma^\beta(s) \rmd s \right|
\\
& \le C \int_0^1 \|g(\gamma(s)) - I\| \cdot \|v\| \cdot \|\frac{\rmd}{\rmd s} \gamma(s) \| \rmd s
\\
& \le C\|v\| \cdot \rho(x, y) \int_0^1 CK (s \rho(x, y))^2 \rmd s
\\
& \le \poly(d, K, \delta^{-1}) \rho(x, y)^3,
\end{align*}
as desired. Here the penultimate line follows from Lemma~\ref{lem: local g distortion}.

\subsection{Proof of Lemma \ref{lem: sim}: handling exceptional events}

\begin{proof}[Proof of Lemma~\ref{lem: sim}]
Recall that $\auxp{0} = p_N \auxK{N} \auxK{N-1} \cdots \auxK{1}$ and $q_0^\star = p_N \discK_{N} \discK_{N-1} \cdots \discK_{1}$. We need to compare the kernel $\auxK{k}$ with $\discK_{k}$, $k=1,\cdots, N$. To apply Lemma~\ref{lem: parametrix}, we define two auxiliary kernels $\lauxK{k}$, $\locK_k$ that are ``localized'' version of $\auxK{k}$ and $\discK_{k}$. We show the auxiliary kernels are close to  $\auxK{k}$ and $\discK_{k}$ respectively in total variation, and establish bound on $\KL(\lauxK{k} ~\|~ \locK_k)$.
Denote 
\[
\mathcal R_x \coloneqq \{y \in \cM: \rho(x, y) \le h^{5/12}\}, \quad R_x^c \coloneqq \cM \setminus \mathcal R_x.
\]
Recall the notation $\Phi$ in the proof of Lemma~\ref{lem: volterra ub}. To distinguish the kernels at different step, we denote $\Phi_k$ as the corresponding $\Phi$ at step $k$. Define $\lauxK{k}, \locK_k$ by
\begin{align*}
\lauxK{k}(x, \rmd y) &= \auxK{k}(x, \rmd y) \chf_{\mathcal R_x}(y) + \frac{\auxK{k}(x, \mathcal R_x^c)}{\mu(\mathcal R_x^c)}\mu(\rmd y) \chf_{\mathcal R_x^c},
\\
\locK_{k}(x, \rmd y) &= \Phi_{k}(h, x, y) \chf_{\mathcal R_x}(y) + \frac{\int_{\mathcal R_x^c}\Phi_{k}(h, x, z) \mu(\rmd z)}{\mu(\mathcal R_x^c)}\mu(\rmd y) \chf_{\mathcal R_x^c},
\end{align*}
By converting to normal coordinate and invoking Gaussian integration in the same way as in the proof of Lemma~\ref{lem: volterra ub}, we obtain
\begin{align}
\exp\Bigl(-\frac{2}{h^{1/6}}\Bigr) \le \int_{\mathcal R_x^c}\Phi_{k}(h, x, z) \mu(\rmd z) \le \exp\Bigl(-\frac{1}{16h^{1/6}}\Bigr).
\label{eqn: loc exit}
\end{align}
When $h \le 1/\poly(d, K, \delta^{-1})$, it is apparent (e.g., follows from Gromov's volume comparison theorem) that $\mu(\mathcal R^x) \le 1/2$, thus \begin{equation*}
\frac12 \le \mu(\mathcal R_x^c) \le \mu(\cM) = 1.
\end{equation*} 
We observe that $\discK_k$ differs from $\locK_k$ by a rejection sampling with radius $h^{1/4}$. With the above bounds and the same Gaussian integration technique, we see that the probability of rejection is bounded by 
\begin{equation}
\bbP(\text{rejection at step $k$}) \le \exp\bigl(-\frac{(h^{1/4})^2}{16h}\bigr) \le \exp\bigl(-\frac{1}{16h^{1/2}}\bigr).
\label{eqn: sum BM exit prob}
\end{equation}
Summing up, We readily obtain
\begin{equation}
\TV(\discK_k, \locK_k) \le \exp\left(-\frac{1}{16h^{1/6}}\right).
\label{eqn: loc disc TV ub}
\end{equation}

On the other hand, by using the stopping time argument as in the proof of \eqref{eqn: sde exit prob}, we have
\begin{align}
\auxK{k}(x, \mathcal R_x^c) \le \exp\Bigl(-\frac{1}{16h^{1/6}}\Bigr).
\label{eqn: laux exit}
\end{align}
Therefore, the following TV bound is obvious:
\begin{align}
\TV(\auxK{k}, \lauxK{k}) \le \exp\left(-\frac{1}{16h^{1/6}}\right), \qquad  .
\label{eqn: loc TV ub}
\end{align}

Now we compute $\KL\bigl(\lauxK{k}(x, \cdot) ~\|~ \locK_{k}(x, \cdot) \bigr)$. By definition, we have
\begin{align*}
\KL\bigl(\lauxK{k}(x, \cdot) ~\|~ \locK_{k}(x, \cdot) \bigr)
& = \underbrace{\int_{\mathcal R_x} \left(\log\frac{\lauxK{k}(x, \rmd y)}{\locK_{k}(x, \rmd y)}\right) \lauxK{k}(x, \rmd y)}_{\eqqcolon T_1} + \underbrace{\left(\log\frac{\auxK{k}(x, \mathcal R_x^c)}{\int_{\mathcal R_x^c} \Phi_k(h, x, z) \mu(\rmd z)}\right) \locK_{k}(x, \mathcal R_x^c)}_{\eqqcolon T_2}.
\end{align*}

We control the two terms separately. 

\paragraph{Controlling $T_1$.} We invoke Lemma~\ref{lem: parametrix} to see
\[
\left| \frac{\lauxK{k}(x, \rmd y)}{\locK_{k}(x, \rmd y)} - 1 \right| \le \poly(d, K, \delta^{-1}) h.
\]
Therefore, we use the elementary fact that $\log(1+x) \ge x - 2x^2$ for $x \in [-1/2, 1/2]$ to obtain
\begin{align*}
\log\frac{\lauxK{k}(x, \rmd y)}{\locK_{k}(x, \rmd y)} 
&= - \log \frac{\locK_{k}(x, \rmd y)}{\lauxK{k}(x, \rmd y)}
\\
& \le 1 - \frac{\locK_{k}(x, \rmd y)}{\lauxK{k}(x, \rmd y)} + 2\left(\frac{\locK_{k}(x, \rmd y)}{\lauxK{k}(x, \rmd y)} - 1\right)^2
\\
&\le 1 - \frac{\locK_{k}(x, \rmd y)}{\lauxK{k}(x, \rmd y)} + \poly(d, K, \delta^{-1}) h^2,
\end{align*}
provided $h \le 1/\poly(d, K, \delta^{-1})$. Integrate with respect to $\lauxK{k}(x, \rmd y)$ over $y \in \mathcal R_x$ to obtain
\begin{align*}
T_1 &\le \lauxK{k}(x, \mathcal R_x) - \locK_{k}(x, \mathcal R_x) + \poly(d, K, \delta^{-1}) h^2
\\ 
&\le 2\exp\bigl(-\frac{1}{16h^{1/6}}\bigr) + \poly(d, K, \delta^{-1}) h^2
\\
&\le \poly(d, K, \delta^{-1}) h^2,
\end{align*}
where the second line follows from \eqref{eqn: loc exit} and \eqref{eqn: laux exit}, and the last line follows from $h \le 1/\poly(d, K,\delta^{-1})$ so that the exponential term is sufficiently small.

\paragraph{Controlling $T_2$.} This is strightforward given \eqref{eqn: laux exit} and \eqref{eqn: loc exit}. We obtain in the same way as above that
\[
T_2 \le \exp\bigl(-\frac{1}{32h^{1/6}}\bigr) \le \poly(d, K, \delta^{-1}) \le h^2.
\]

Summarizing the above, we have shown that
\[
\KL\bigl(\lauxK{k}(x, \cdot) ~\|~ \locK_{k}(x, \cdot) \bigr) \le \poly(d, K, \delta^{-1}) h^2.
\]
Accumulate the error over all $N$ steps using post-processing inequality and apply Pinsker's inequality, we obtain
\begin{align*}
\TV(\auxp{0} ~\| q_0^\star) \le \sqrt{\poly(d, K, \delta^{-1}) h^2 N} \le \sqrt{hT} \poly(d, K, \delta^{-1}),
\end{align*}
since $hN = T - \delta \le T$, as claimed.
\end{proof}



\section{Proof of main results}
\begin{proof}[Proof of Lemma~\ref{lem: easy}]
This follows from combining Lemma~\ref{lem: init} and Lemma~\ref{lem: score}.
\end{proof}

\begin{proof}[Proof of Lemma~\ref{lem: discr err main}]
This follows from Lemma~\ref{lem: discr err} and our choice of schedule $\step N = T - \delta \le T$.
\end{proof}

\begin{proof}[Proof of Theorem~\ref{theorem:main}]
This follows from Lemma~\ref{lem: easy}, Lemma~\ref{lem: discr err main}, and Lemma~\ref{lem: sim}.
\end{proof}

\end{document}